\title{Maps for Learning Indexable Classes}
\titlerunning{Maps for Learning Indexable Classes}
\author{Julian Berger\footnote{firstname.lastname@student.hpi.uni-potsdam.de, with ``\"{o}'' as ``oe'' and double surnames written without space.}}{Hasso Plattner Institute, University of Potsdam, {Potsdam, Germany}}{}{}{}
\author{Maximilian B\"{o}ther\footnotemark[1]}{Hasso Plattner Institute, University of Potsdam, {Potsdam, Germany}}{}{}{}
\author{Vanja Dosko\v{c}\footnote{firstname.lastname@hpi.de, with ``\v{c}'' as ``c'' and ``\"{o}'' as ``oe''.}}{Hasso Plattner Institute, University of Potsdam, {Potsdam, Germany}}{}{}{}
\author{Jonathan Gadea Harder\footnotemark[1]}{Hasso Plattner Institute, University of Potsdam, {Potsdam, Germany}}{}{}{}
\author{Nicolas Klodt\footnotemark[1]}{Hasso Plattner Institute, University of Potsdam, {Potsdam, Germany}}{}{}{}
\author{Timo K\"{o}tzing\footnotemark[2]}{Hasso Plattner Institute, University of Potsdam, {Potsdam, Germany}}{}{}{}
\author{Winfried L\"{o}tzsch\footnotemark[1]}{Hasso Plattner Institute, University of Potsdam, {Potsdam, Germany}}{}{}{}
\author{Jannik Peters\footnotemark[1]}{Hasso Plattner Institute, University of Potsdam, {Potsdam, Germany}}{}{}{}
\author{Leon Schiller\footnotemark[1]}{Hasso Plattner Institute, University of Potsdam, {Potsdam, Germany}}{}{}{}
\author{Lars Seifert\footnotemark[1]}{Hasso Plattner Institute, University of Potsdam, {Potsdam, Germany}}{}{}{}
\author{Armin Wells\footnotemark[1]}{Hasso Plattner Institute, University of Potsdam, {Potsdam, Germany}}{}{}{}
\author{Simon Wietheger\footnotemark[1]}{Hasso Plattner Institute, University of Potsdam, {Potsdam, Germany}}{}{}{}
\authorrunning{Berger et al.}
\keywords{inductive inference, language learning in the limit, indexed family, hypothesis space, delayable restrictions, data presentation, map, characteristic index}
\newcommand*{\falls}{\text{if }}
\newcommand*{\sonst}{\text{otherwise}}
\newcommand*{\sonstfalls}{\text{else, if }}
\newcommand*{\cIf}{\text{if }}
\newcommand*{\otw}{\text{otherwise}}
\newcounter{countprobl}
\newtheorem{problem}[countprobl]{Open Problem}
\DeclareDocumentCommand{\set}{m g o}%
{%
    \IfNoValueTF{#3}{\left}{#3}\{#1
            \IfNoValueTF{#2}{}{\ \IfNoValueTF{#3}{\left}{#3}\vert\ \vphantom{#1}#2\IfNoValueTF{#3}{\right.}{}}
                \IfNoValueTF{#3}{\right}{#3}\}%
}
\DeclareDocumentCommand{\abs}{m o}%
{%
    \IfNoValueTF{#2}{\left}{#2}\vert#1
                \IfNoValueTF{#2}{\right}{#2}\vert%
}
\newcommand{\sort}{\operatorname{sort}}
\newcommand{\CalL}{\mathcal{L}}
\newcommand{\natnum}{\mathbb{N}}
\newcommand*{\Ha}{\mathcal{H}}
\newcommand{\itemin}[1]{\item[#1\hspace{-0.8cm}] \hspace{0.8cm}}
\newcommand*{\IndF}{_{\mathbf{ind}}}
\newcommand*{\IndR}{}
\newcommand*{\concat}{^\frown}
\newcommand*{\Ind}{\mathbf{ind}}
\newcommand*{\Txt}{\mathbf{Txt}}
\newcommand*{\G}{\mathbf{G}}
\newcommand*{\It}{\mathbf{It}}
\newcommand*{\Sd}{\mathbf{Sd}}
\newcommand*{\Psd}{\mathbf{Psd}}
\newcommand*{\Td}{\mathbf{Td}}
\newcommand*{\Ex}{\mathbf{Ex}}
\newcommand*{\Bc}{\mathbf{Bc}}
\newcommand*{\Fin}{\mathbf{Fin}}
\newcommand*{\Cons}{\mathbf{Cons}}
\newcommand*{\Conv}{\mathbf{Conv}}
\newcommand*{\Caut}{\mathbf{Caut}}
\newcommand*{\CautTar}{\Caut_{\textup{\textbf{Tar}}}}
\newcommand*{\True}{\mathbf{T}}
\newcommand*{\Wb}{\mathbf{Wb}}
\newcommand*{\NU}{\mathbf{NU}}
\newcommand*{\SNU}{\mathbf{SNU}}
\newcommand*{\Dec}{\mathbf{Dec}}
\newcommand*{\SDec}{\mathbf{SDec}}
\newcommand*{\WMon}{\mathbf{WMon}}
\newcommand*{\Mon}{\mathbf{Mon}}
\newcommand*{\SMon}{\mathbf{SMon}}
\newcommand*{\Sem}{\mathbf{Sem}}
\newcommand*{\SemConv}{\Sem\Conv}
\newcommand*{\Cind}{\mathbf{CInd}}
\newcommand*{\T}{\mathbf{T}}
\newcommand*{\N}{\mathbb{N}}
\newcommand*{\La}{\mathcal{L}}
\newcommand*{\Seq}{{\mathbb{S}\mathrm{eq}}}
\newcommand*{\totalCp}{\mathcal{R}}
\newcommand*{\partialCp}{\mathcal{P}}
\newcommand{\range}{\mathrm{range}}
\newcommand{\content}{\mathrm{content}}
\newcommand{\ind}{\mathrm{ind}}
\newcommand{\pad}{\mathrm{pad}}
\newcommand{\settwo}[1]{
  \{ #1 \}
}
\newsavebox{\@brx}
\newcommand{\llangle}[1][]{\savebox{\@brx}{\(\m@th{#1\langle}\)}%
  \mathopen{\copy\@brx\kern-0.5\wd\@brx\usebox{\@brx}}}
\newcommand{\rrangle}[1][]{\savebox{\@brx}{\(\m@th{#1\rangle}\)}%
  \mathclose{\copy\@brx\kern-0.5\wd\@brx\usebox{\@brx}}}
\newcommand{\convs}{\mathclose{\hbox{$\downarrow$}}}
\newcommand{\divs}{\mathclose{\hbox{$\uparrow$}}}
\begin{document}

\maketitle

\begin{abstract}
    We study learning of indexed families from positive data where a learner can
    freely choose a hypothesis space (with uniformly decidable membership)
    comprising at least the languages to be learned. This abstracts a very
    universal learning task which can be found in many areas, for example
    learning of (subsets of) regular languages or learning of natural languages.
    We are interested in various restrictions on learning, such as consistency,
    conservativeness or set-drivenness, exemplifying various natural learning
    restrictions.

    Building on previous results from the literature, we provide several maps
    (depictions of all pairwise relations) of various groups of learning
    criteria, including a map for monotonicity restrictions and similar criteria
    and a map for restrictions on data presentation. Furthermore, we consider,
    for various learning criteria, whether learners can be assumed consistent.
\end{abstract}

\section{Introduction}

We are interested in the problem of algorithmically learning a description for a
formal language (a computably enumerable subset of the set of all natural
numbers) when presented successively all and only the elements of that language;
this is called \emph{inductive inference}, a branch of (algorithmic) learning
theory. For example, a learner $h$ might be presented more and more even
numbers. After each new number, $h$ outputs a description for a language as its
conjecture. The learner $h$ might decide to output a program for the set of all
multiples of $4$, as long as all numbers presented are divisible by~$4$. Later,
when $h$ sees an even number not divisible by $4$, it might change this guess to
a program for the set of all multiples of~$2$.

Many criteria for determining whether a learner $h$ is \emph{successful} on a
language~$L$ have been proposed in the literature. Gold, in his seminal
paper~\cite{Gold67}, gave a first, simple learning criterion,
\emph{$\Txt\G\Ex$-learning}\footnote{$\Txt$ stands for learning from a
\emph{text} of positive examples; $\G$ stands for \emph{Gold-style} learning and
indicates that the learner has full information on the data given; $\Ex$ stands
for \emph{explanatory}.}, where a learner is \emph{successful} if and only if,
on every \emph{text} for $L$ (listing of all and only the elements of $L$) it
eventually stops changing its conjectures, and its final conjecture is a correct
description for the input language.  Trivially, each single, describable
language $L$ has a suitable constant function as a $\Txt\G\Ex$-learner (this
learner constantly outputs a description for $L$). Thus, we are interested in
analyzing for which \emph{classes of languages} $\CalL$ is there a \emph{single
learner} $h$ learning \emph{each} member of $\CalL$. This framework is also
known as \emph{language learning in the limit} and has been studied extensively,
using a wide range of learning criteria similar to $\Txt\G\Ex$-learning (see,
for example, the textbook~\cite{JORS99}).

A major branch of this analysis focuses on learning \emph{indexed families},
that is, classes of languages~$\CalL$ such that there is an enumeration
$(L_i)_{i \in \natnum}$ of all and only the elements of $\CalL$ for which the
decision problem ``$x \in L_i$'' is decidable. Already for such classes of
languages we get a rich structure. A survey of previous work in this area can be
found in~\cite{LZZ08}. We are specifically interested in \emph{class comprising}
learning, that is, our learners are free to choose any hypothesis space which
contains hypotheses at least for the languages to be learned. This is in
contrast, for example, to learning with a concretely given hypothesis space.

Since the appearance of the mentioned survey, only little work on indexable
classes was conducted, while learning of arbitrary families of languages
sprouted a new mode of analysis, \emph{map charting}. This approach tries to
further the understanding of learning settings by looking at all pairwise
relations of similar learning criteria and displaying them as a
\emph{map}~\cite{KP16,KS16}. This approach builds on the pairwise relations
which are already known in the literature and completes them in interesting
settings to understand one aspect more closely, for example regarding certain
natural restrictions on what kind of mind changes are allowed (which we will
consider in Section~\ref{Sec:DelMap}) or the importance of data presentation
(which we will consider in Section~\ref{Sec:tCindConvergence}).

We start our analysis by considering the restriction of \emph{consistent}
learning~\cite{Angluin80}. A learner is consistent if and only if each of its
hypotheses correctly reflects the data which the hypothesis is based on. Note
that, for arbitrary learning in the Gold-style model, learners cannot be assumed
consistent in general~\cite{Barzdin77}, a result termed the \emph{inconsistency
phenomenon}. The reason behind this result is essentially the same as for the
halting problem: a general hypothesis cannot be checked for consistency in a
computable way. Since, for indexed families, consistency of hypotheses is
decidable, it comes at no surprise that here learners can, in general, be
assumed consistent \cite{LZ95}. However, to prove this result, crucial changes
to the hypotheses are made (so as to make them consistent), which might spoil
other nice properties the learner might exhibit (such as never overgeneralizing
the true target language). In Section~\ref{Sec:Cind-Cons} we show several
different ways in which total learners can be made consistent, each maintaining
other restrictions (such as, for example, \emph{conservative}
learning~\cite{Angluin80}, where learners must not change their mind while still
consistent).

In Section~\ref{Sec:DelMap} we consider one of the best-studied maps from other
learning settings, the map of \emph{delayable} learning restrictions. We build
on previously known results, such as that conservative learning is
restrictive~\cite{LZ93-DepHypSp}, and complete the map both for the case of
\emph{full information} (where the learner has access to the full history of
data shown) and for \emph{set-driven} learners (which only have access to the
set of data presented so far, but not to the order of presentation~\cite{WC80}).
This builds on earlier analyses of monotone learning which has been studied in
various settings~\cite{LZ96}. We depict our results in
Figure~\ref{fig:tauCindMapGSd}. In particular, we show that the criteria cluster
into merely five different learning powers, i.e., many learning criteria allow
for learning the same classes of languages. Among other things, we show that
\emph{(strong) non-U-shaped} learning~\cite{BCMSW08,CM11}, where abandoning a
correct hypothesis is forbidden, is not restrictive in either setting
(set-driven and full-information).

\begin{figure}[h]
    \begin{center}
    \begin{adjustbox}{width=0.72\textwidth}
      \includegraphics{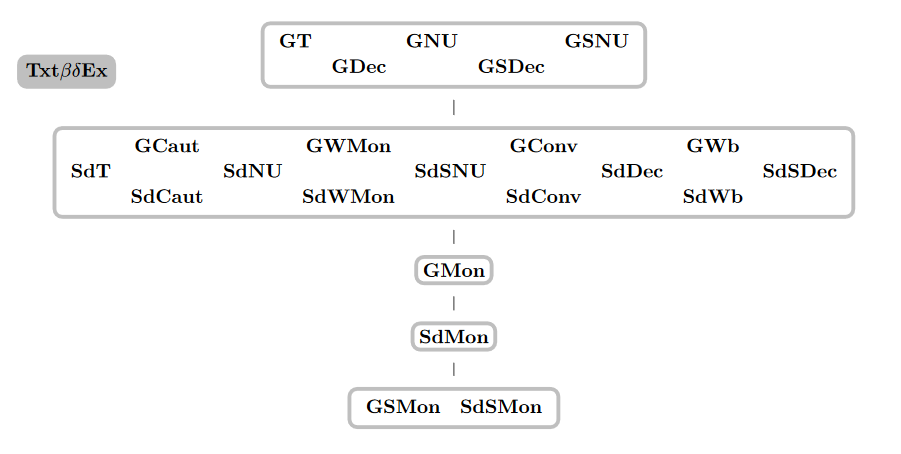}
    \end{adjustbox}
\end{center}
\caption{Relation of $\G$- and $\Sd$-learners under various additional restrictions, see Section~\ref{sub:mathematical_notation} for a full list thereof. The solid lines imply (proper) inclusions (bottom-to-top) and the greyly edged areas illustrate a collapse of the enclosed learning criteria.} \label{fig:tauCindMapGSd}
\end{figure}

In Section~\ref{Sec:tCindConvergence} we consider in more detail what impact the
access to information has on the learning power of total learners. Additionally
to full information and set-driven learning, we also consider \emph{iterative}
learning (where the learner has access to its previous hypothesis, but only the
current datum~\cite{WC80}). We give the complete map for $\Ex$-learning at the
same time as $\Bc$-learning (\emph{behaviorally correct} learning, where the
learner need not to stop syntactically changing the conjecture, so long as it
remains semantically correct~\cite{CL82,OW82}). We depict our findings in
Figure~\ref{fig:tauCindMemory}.

\begin{figure}[h]
\begin{center}
    \begin{adjustbox}{width=0.42\textwidth}
    \includegraphics{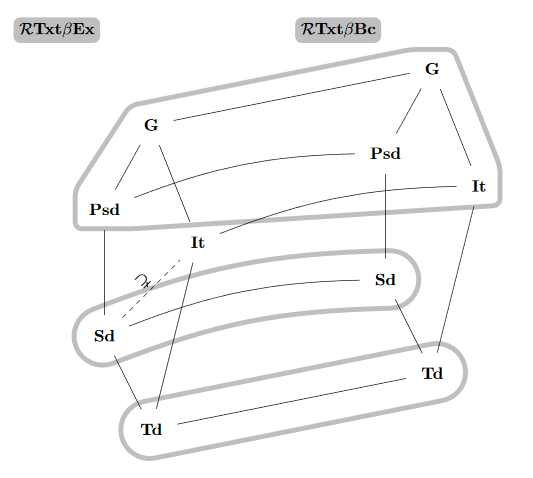}
    \end{adjustbox}
\end{center}
    \caption{Relation of syntactic and semantic convergence when learning indexable classes with total learners under various memory restrictions $\beta$. Black solid lines imply trivial inclusions (bottom-to-top, left-to-right). The dashed line depicts the non-trivial proper inclusion $[\totalCp\Txt\It\Ex]_\Ind \subsetneq [\totalCp\Txt\Sd\Ex]_\Ind$. Furthermore, greyly edged areas illustrate a collapse of the enclosed learning criteria and there are no further collapses.}
    \label{fig:tauCindMemory}
\end{figure}

These directions taken together (consistency, delayable restrictions,
information access, syntactic vs.\ semantic convergence) give a well-rounded
picture, offering a glimpse on learning indexable classes from all commonly
studied angles.

In order to prove our results, we develop a very useful characterization of
learning indexable families given in Theorem~\ref{thm:ind-fam}. Here we show
that learnability as an indexed family with an arbitrary hypothesis space is
equivalent to learnability by a learner which only outputs programs for
characteristic functions and is considered successful when converging to such a
program which decides the target language. This result allows us to simplify
many of our proofs (and it also made finding proofs easier), since now the
hypothesis space does not need to be chosen in advance.

We continue this paper with a section on mathematical preliminaries
(Section~\ref{sec:preliminaries}), including some relevant results from the
literature, before getting to the technical part.

\section{Preliminaries}%
\label{sec:preliminaries}

In this section we introduce the mathematical notations and notions used
throughout the paper. For unintroduced notation we refer to~\cite{Rogers87}.
Regarding the learning criteria, we follow the system of~\cite{Kotzing09}.

\subsection{Language Learning in the Limit}%
\label{sub:mathematical_notation}

We denote the set of natural numbers by \(\N=\set{0,1,2,\ldots} \). With
\(\subseteq\) and \(\subsetneq\) we denote the subset and proper
subset relation between sets, respectively. Furthermore, with
\(\subseteq_\textbf{FIN}\) we denote finite subsets. With \(\cap,\cup,
\setminus\) we denote the set intersection, union, and difference, respectively.
We let \(\emptyset\) and \(\varepsilon\) denote the empty set and empty
sequence, respectively. We use \(\partialCp\) (\(\totalCp\)) as the set of all
(total) computable functions. If a function \(f\) is defined on an argument
\(x\) we denote this by \(f(x)\convs\); otherwise, we write \(f(x)\divs\). We
fix an effective  numbering \({(\varphi_{e})}_{e\in\N}\) of \(\partialCp\), where
\(e\) may be viewed as a \emph{program} or \emph{index} for the
function~\(\varphi_{e}\).

We fix the symbol \(\#\) called \emph{pause}. For any set $S \subseteq \N$, we
denote $S_\# \coloneqq S \cup \{ \# \}$. The set of all sequences of length $t
\in \N$ over $S_\#$ is denoted by \(S^{\leq t}_\#\) and the set of all finite
sequences over \(\N\cup \set{\#} \) by \(\Seq\). For two sequences
\(\sigma,\tau\), we let \(\sigma\concat\tau\) denote their concatenation and we
write \(\sigma\subseteq \tau\) if and only if \(\sigma\) is a prefix of
\(\tau\). For a (possibly infinite) sequence \(\sigma\), we let
\(\content(\sigma)=(\range(\sigma) \setminus \set{\#})\). For \(\sigma\in\Seq\),
we denote the sequence with the last element removed as  \(\sigma^-\).
Furthermore, we may interpret finite sequences as natural numbers and fix a
total order $\leq$ on these such that, in particular, for all
\(\sigma,\tau\in\Seq\) with  \(\sigma\subseteq \tau\) we have that
\(\sigma\leq\tau\).

We call a computably enumerable set \(L\subseteq \N\) a \emph{language}. We
learn \emph{indexed families} of languages, that is, families of languages
$(L_i)_{i \in \N}$ where there exists a total computable function $f$ such that,
for all $i, x \in \N$,
\[
	f(i,x) = \begin{cases}
		1, & \falls x \in L_i; \\ 0, &\sonst.
	\end{cases}
\]
We learn these families with respect to hypothesis spaces, which are indexed
families of languages themselves. In general, a \emph{learner} is a function $h
\in \partialCp$.  We examine
learning from text. A \emph{text} is a total function \(T:\N\to\N\cup
\set{\#}\). We denote the set of all texts as \(\Txt\). Furthermore, we call
\(T\) a \emph{text for a language \(L\)} if \(\content(T)=L\); the set of all
texts for  \(L\) is denoted by \(\Txt(L)\). The \emph{canonical text} of a
language \(L\) is the enumeration of all elements in \(L\) in strictly ascending
order (if $L$ is finite, the text returns $\#$ after all elements have been
presented). Analogously, the \emph{canonical sequence} of a (finite) language
$L$ is the (finite) sequence of all elements in $L$ in strictly ascending order.
Additionally, we define \(T[0]=\varepsilon\) and, for all \(n\in\N\) with $n>0$,
\(T[n]=T(0),\ldots,T(n-1)\).

What kind of information a learner is given, is specified by an
\emph{interaction operator}. Formally, an interaction operator is a function
that takes a learner and a text as input arguments and outputs a (possibly
partial) function that is called \emph{learning sequence} or \emph{sequence
	of hypotheses}. We consider \emph{Gold-style} or \emph{full-information} learning~\cite{Gold67}, denoted by \(\G\), \emph{iterative} learning
(\(\It\),~\cite{Fulk85,Wiehagen76}), \emph{partially set-driven} or
\emph{rearrangement-independent} learning (\(\Psd\),~\cite{BlumBlum75,SchRicht84}),
\emph{set-driven} learning (\(\Sd\),~\cite{WC80}) and \emph{transductive}
learning (\(\Td\),~\cite{CCJS07,Kotzing09}). Note that transductive learners may
output a special symbol ``\(\mbox{?}\)'' if the information given is not
sufficient to make a guess. Formally, for all learners \(h\in\partialCp\), texts
\(T\in\Txt\) and \(i\in\N\),
\begin{align*}
	\G(h,T)(i)   & = h(T[i]);                  \\
	\Psd(h,T)(i) & = h(\content(T[i]),i);      \\
	\Sd(h,T)(i)  & = h(\content(T[i]));        \\
	\It(h,T)(i)  & = \begin{cases}
		h(\varepsilon),          & \cIf i=0; \\
		h(\It(h,T)(i-1),T(i-1)), & \otw;
	\end{cases} \\
	\Td(h,T)(i)  & =\begin{cases}
		\mbox{?},      & \cIf i=0;                       \\
		\Td(h,T)(i-1), & \sonstfalls h(T(i-1))=\mbox{?}; \\
		h(T(i-1)),     & \otw.
	\end{cases}
\end{align*}
Intuitively, Gold-style learners have full information on the elements given.
Set-driven learners base their hypotheses solely on the content of the
information given, while partially set-driven learners additionally have a
counter for the iteration step. Iterative learners base their conjectures on
their previous hypothesis and the current datum. Lastly, transductive learners
solely base their guesses on the current datum and may output ``?'' if the
information is not sufficient.

For two interaction operators \(\beta,\beta'\) we write \(\beta\preceq\beta'\)
if and only if every \(\beta\)-learner \(h\) can be compiled into an
equivalent \(\beta'\)-learner \(h'\) such that, for any text \(T\), we have
\(\beta(h,T)=\beta'(h',T)\). We note that  \(\Td\preceq\It\preceq\G\) and
\(\Sd\preceq\Psd\preceq\G\). As an example, every  \(\Sd\)-learner can be
compiled into an \(\Psd\)-learner by simply ignoring the counter. Furthermore,
note that any $\Td$-learner may be simulated by a $\Sd$-learner, however, the
order of the hypotheses may be changed. For any \(\beta\)-learner \(h\) with
\(\beta\preceq\G\), we let \(h^*\), the
\emph{starred learner}, denote the \(G\)-learner simulating \(h\). For example,
the starred learner of a \(\Psd\)-learner \(h\) is defined, for all sequences
\(\sigma\), as \(h^*(\sigma)=h(\content(\sigma),\abs{\sigma})\).

For a learner to successfully identify a language it has to satisfy certain
restrictions. A famous example was given by Gold, who required the
learner to converge to a correct
hypothesis for the target language~\cite{Gold67}. This is called \emph{explanatory} learning
and denoted by \(\Ex\). When we speak of correct hypotheses, it is
with regard to an indexed hypothesis space. Formally, a learning restriction is a predicate
on a sequence of
hypotheses \(p\) and a text \(T\in\Txt\). In the case of explanatory learning,
we get, for a given indexed hypothesis space \(\mathcal H ={(H_i)}_{i\in\N}\),
\begin{equation*}
	\Ex(p,T)\Leftrightarrow \exists n_0\forall n\geq n_0\colon
	p(n)=p(n_0)\land H_{p(n_0)}=\content(T).
\end{equation*}
We now give the intuition for the considered
restrictions and define them formally afterwards. As an alternative to \(\Ex\), for
\emph{behaviorally-correct} (\(\Bc\)) learning one only requires semantic
convergence, that is, after some point all hypotheses must be correct hypotheses
for the target language, but they do not need to be syntactically
equal~\cite{CL82,OW82}.

In addition to these convergence criteria there are various other
properties that are natural to require from a learner. In \emph{non-U-shaped}
learning (\(\NU\),~\cite{BCMSW08}), once the learner outputs a correct
hypothesis, it  may not unlearn the language, i.e., it may only make syntactic
mind changes. In \emph{strongly non-U-shaped} learning (\(\SNU\),~\cite{CM11})
not even these syntactic mind changes are allowed. In \emph{consistent}
learning (\(\Cons\),~\cite{Angluin80}), each hypothesis must include the given
information.
There exist various
monotonicity restrictions (\cite{Jantke91,Wiehagen91,LZ93}). When learning
\emph{strongly monotone} (\(\SMon\)), the learner may not discard elements
present in previous hypotheses, and in \emph{monotone} learning (\(\Mon\))
the learner is not allowed to remove correct data from its hypotheses.
Furthermore, in \emph{weakly monotone} learning (\(\WMon\))
the learner must remain strongly monotone while consistent with the
input. Similarly, in \emph{cautious} learning
(\(\Caut\),~\cite{OSW82}), no hypothesis may be a proper subset of a prior
hypothesis. As a relaxation, in \emph{target-cautious} learning
(\(\CautTar\),~\cite{KP16}), no hypothesis may be a proper superset of the
target language. A specialization of cautious and weakly monotone learning is
\emph{witness-based} learning (\(\Wb\),~\cite{KS16}), where each mind change
must be justified by a
witness. Witness-based learning is also a specialization of \emph{conservative} learning (\(\Conv\),~\cite{Angluin80}) where the learner may only make a mind
change when an inconsistency is detected.
If we only require this for semantic mind changes, we call the learner \emph{semantically
	conservative} (\(\SemConv\),~\cite{KSS17}). Finally, in \emph{decisive}
learning (\(\Dec\),~\cite{OSW82}), the learner may not return to semantically
abandoned hypotheses; in \emph{strongly decisive} learning
(\(\SDec\),~\cite{Kotzing17}), the learner may not return to syntactically
abandoned hypotheses. Now, we give formal definitions for theses restrictions.
Let \(\Ha = {(H_i)}_{i\in\N}\) be an indexed hypothesis space.
For any sequence of hypotheses \(p\) and text  \(T\in\Txt\), we have
\begin{align*}
	\Bc\IndR(p, T) & \Leftrightarrow\exists n_0\colon\forall n\geq
        n_0\colon H_{p(n)}=\content(T); \\
	\NU(p,T)       & \Leftrightarrow\forall i,j,k\colon i\leq j \leq k \land
        H_{p(i)}=H_{p(k)}=\content(T) \Rightarrow H_{p(i)}=H_{p(j)}; \\
	\SNU(p,T)      & \Leftrightarrow\forall i,j,k\colon i\leq j \leq k \land
        H_{p(i)}=H_{p(k)}=\content(T) \Rightarrow p(i)=p(j); \\
	\Cons(p,T)     & \Leftrightarrow\forall i\colon\content(T[i])\subseteq
        H_{p(i)};\\
	\SMon(p,T)     & \Leftrightarrow\forall i,j\colon i< j\Rightarrow
        H_{p(i)}\subseteq  H_{p(j)}; \\
	\Mon(p,T)      & \Leftrightarrow\forall i,j\colon i<
        j\Rightarrow \content(T)\cap H_{p(i)}\subseteq \content(T)\cap H_{p(j)}; \\
	\WMon(p,T)     & \Leftrightarrow\forall i,j\colon i<
        j\land\content(T[j])\subseteq H_{p(i)}\Rightarrow H_{p(i)}\subseteq
        H_{p(j)}; \\
	\Caut(p,T)     & \Leftrightarrow\forall i,j\colon H_{p(i)}\subsetneq
        H_{p(j)}\Rightarrow i\leq j; \\
	\CautTar(p,T)  & \Leftrightarrow\forall i\colon \neg(\content(T)\subsetneq
        H_{p(i)}); \\
	\Wb(p,T)       & \Leftrightarrow \forall i,j \colon \left( \exists k \colon i<k\leq j \wedge
        p(i)\neq p(k) \right) \Rightarrow \\
                   & \phantom{\Leftrightarrow \forall i,j \colon \exists k
                   \colon} \Rightarrow \left(\content(T[j])\cap H_{p(j)}\right)
                   \setminus H_{p(i)}\neq\emptyset; \\
	\Conv(p,T)     & \Leftrightarrow\forall i\colon \content(T[i+1])\subseteq
        H_{p(i)}\Rightarrow p(i)=p(i+1); \\
	\SemConv(p,T)  & \Leftrightarrow\forall i\colon \content(T[i+1])\subseteq
        H_{p(i)}\Rightarrow H_{p(i)}=H_{p(i+1)}; \\
	\Dec(p,T)      & \Leftrightarrow\forall i,j,k\colon i\leq j \leq k \land
        H_{p(i)}=H_{p(k)}\Rightarrow H_{p(i)}=H_{p(j)}; \\
	\SDec(p,T)     & \Leftrightarrow\forall i,j,k\colon i\leq j \leq k \land
	    H_{p(i)}=H_{p(k)}\Rightarrow p(i)=p(j).
\end{align*}
We combine any two learning restrictions \(\delta\) and \(\delta'\) by
intersecting them, which is denoted by their juxtaposition. With \(\True\) we
define the learning restriction which is always true and interpret it as absence
of a learning restriction.

Now, a \emph{learning criterion} is a tuple \((\alpha,\mathcal{C},\beta,\delta)\),
where \(\alpha\) and \(\delta\) are learning restrictions, \(\mathcal{C}\) is
the set of admissible learner, usually \(\partialCp\) or \(\totalCp\), and
\(\beta\) is an interaction operator. We write
\(\tau(\alpha)\mathcal{C}\Txt\beta\delta\) to denote the learning criterion and omit
\(\mathcal{C}\) if it equals \(\partialCp\), and a learning restriction if it
equals \(\True\). Let \(h\in\mathcal{C}\) be an admissible learner. We say that
\(h\) \(\tau(\alpha)\mathcal{C}\Txt\beta\delta\)-learns a language \(L\) with respect to some hypothesis space $\Ha$ if and only
if, for all texts \(T\in\Txt\), we have \(\alpha(\beta(h,T),T)\) and, for all
\(T\in\Txt(L)\), \(\delta(\beta(h,T),T)\). The set of languages
\(\tau(\alpha)\mathcal{C}\Txt\beta\delta\)-learned by \(h\) with respect to some hypothesis space $\Ha$ is denoted by
\(\tau(\alpha)\mathcal{C}\Txt\beta\delta(h)\). The set of all indexable families
\(\tau(\alpha)\mathcal{C}\Txt\beta\delta\)-learned by an admissible learner with
respect to some indexed hypothesis space is denoted by
\({[\tau(\alpha)\mathcal{C}\Txt\beta\delta]}_{\Ind}\), the so-called \emph{learning power} of $\tau(\alpha)\mathcal{C}\Txt\beta\delta$-learners.

\subsection{Normals Forms}%
\label{sub:normals_forms}

To proof certain statements on learner, there are properties that come in handy.
For example, except for $\Cons$, all considered learning restrictions are \emph{delayable}. Intuitively, a
learning restriction is delayable if it allows for arbitrary, but finite
postponing of hypotheses~\cite{KP16}. Formally, a learning restriction is delayable if and
only if for all sequences of hypotheses \(p\), texts \(T,T'\in\Txt\) with
\(\content(T)=\content(T')\) and non-decreasing, unbounded functions
\(r\colon\N\to\N\), if we have \(\delta(p,T)\) and, for all \(n\in\N\),
\(\content(T[r(n)])\subseteq \content(T'[n])\), then also \(\delta(p\circ r,
T')\) holds.

A common property of the considered learning restrictions is that they solely
depend on the semantic of the hypotheses and on the position of mind changes.
This property is formalized in the notion of \emph{pseudo-semantic} learning
restrictions~\cite{KSS17}. A learning restriction \(\delta\) is pseudo-semantic
if and only if for all learning sequences \(p\) and texts \(T\in\T\), if
\(\delta(p,T)\) and for a learning sequence $p'$, with, for all \( n \in\N\),
\(p(n)\) and \(p'(n)\) are semantically equivalent and \(p(n)=p(n+1)\) implies
\(p'(n)=p'(n+1)\), then \(\delta(p',T)\).  All considered learning restrictions
are  pseudo-semantic.

We regularly make use of \emph{locking sequences}. These are sequences that
contain enough information such that a given learner, after seeing this
sequence, suggests a correct hypothesis for the target language and does not
change its mind whatever data from the target language it is given. Formally,
let \(h\) be a \(\G\)-learner and $\Ha = (H_i)_{i \in \N}$ a hypothesis space.
Then a sequence \(\sigma\in\Seq\) is a locking sequence for \(h\) on a language
\(L\), if, for all sequences \(\tau\in L_\#^*\), we have
\(h(\sigma)=h(\sigma\concat\tau)\) and \(H_{h(\sigma\concat \tau)} =
L\)~\cite{BlumBlum75}. For $\Bc$-learners, we drop the first requirement and
call $\sigma$ a \emph{$\Bc$-locking sequence}~\cite{JORS99}. This definition can
directly be expanded to learners with other interaction operators. Let $h$ be
such a learner and consider its starred learner $h^*$. Then, a sequence $\sigma$
is called a locking sequence for $h$ on $L$ if and only if $\sigma$ is a locking
sequence for $h^*$ on $L$. We remark that, in the case of partially set-driven
and set-driven learners, we refer to locking sequences as \emph{locking
information} and \emph{locking set}, respectively.  Note that, if a learner
learns a language there always exists a ($\Bc$-) locking sequence
\cite{BlumBlum75}, but there exist texts where no initial sequence thereof is a
($\Bc$-) locking sequence. Given a learner $h$ and a language $L$ it learns, if
on any text $T \in \Txt(L)$ there exists an initial sequence thereof which is a
($\Bc$-) locking sequence for $h$ on $L$, we call $h$ \emph{strongly ($\Bc$-)
locking on $L$}. If $h$ is strongly ($\Bc$-) locking on every language it
learns, we call $h$ \emph{strongly ($\Bc$-) locking} \cite{KP16}.

\section{Learning Indexed Families without Hypothesis Spaces}\label{Sec:NoHype}

In this section we present a useful result on which we build our remaining
results. When learning indexed families with respect to (arbitrary) hypothesis
spaces, the choice of the latter is crucial for successful learning. However, it
is often a non-trivial task to construct the fitting hypothesis space. With
Theorem~\ref{thm:ind-fam}, we show that one can forgo this necessity and, so to
speak, obtain the hypothesis space on the run.

We make use of so-called \emph{$C$-indices} or \emph{characteristic indices}.
Intuitively, a $C$-index of a language $L$ is a program for its characteristic
function. Formally, an index $e$ is a $C$-index of the language $L$ if and only
if $\varphi_e \equiv \chi_L$. We also denote $C_e =
\set{x\in\N}{\varphi_{e}(x)=1 }$. Note that if $e$ is a $C$-index of $L$ then
$C_e = L$. Now, we can request a learner to converge to a $C$-index instead of
an index with respect to some hypothesis space. Exemplary, when requiring
syntactic convergence to a $C$-index we write, for all sequences of hypotheses
$p$ and all texts $T$, \[
	\Ex_C \Leftrightarrow \exists n_0\forall n\geq n_0\colon
	p(n)=p(n_0)\land C_{p(n_0)}=\content(T).
\]
Transitioning the other considered restrictions is immediate and, thus, omitted.
For clarity, given a learning criterion $(\alpha,\mathcal{C},\beta,\delta)$, we
write $\tau(\alpha)\mathcal{C}\Txt\beta\delta_C$ in case of learning
$C$-indices. Analogously, for example, we denote with
$[\tau(\alpha)\mathcal{C}\Txt\beta\delta_C]$ the set of all classes
$\tau(\alpha)\mathcal{C}\Txt\beta\delta_C$-learnable by some learner $h$.

In particular, we show that learners which output characteristic indices on any
input may be translated into total learners which learn with respect to a
hypothesis space. To that end, we define the restriction $\Cind$, where the
learner must output $C$-indices. Formally, for any hypothesis sequence $p$ and
any text $T$, we have
\[
	\Cind(p,T) \Leftrightarrow\forall i,x\colon
	\varphi_{p(i)}(x)\convs \wedge \varphi_{p(i)}(x)\in\set{0,1}.
\]
We show the equality of the two learning approaches. While the output of $h'$
can easily be interpreted as a characteristic index, for the other direction,
one considers all hypotheses output by the $\tau(\Cind)$-learner, that is, the
learner which outputs $C$-indices on any input, as hypothesis space. Then, it
remains to choose the right (minimal) index of this hypothesis space to maintain
successful learning. We provide the rigorous proof.

\begin{theorem}%
	\label{thm:ind-fam}\label{Thm:ind-fam}
	Let $\alpha, \delta$ be pseudo-semantic restrictions, let $\beta \preceq \G$ be an
	interaction operator and let $\La$ be an indexed family. Then, \(\mathcal{L}\) is in
	\([\tau(\Cind\alpha)\Txt\beta\delta_C]\) if and only if there exist a total
	learner \(h'\) and a hypothesis space \(\mathcal{H}\) such that \(h'\)
	\(\tau(\alpha)\Txt\beta\delta\)-learns \(\mathcal{L}\) with respect
	to ${\mathcal{H}}$.
\end{theorem}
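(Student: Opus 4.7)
For the easier direction $(\Leftarrow)$, the plan is to start with a total $\beta$-learner $h'$ and indexed hypothesis space $\Ha = (H_i)_{i \in \N}$, and translate each $\Ha$-index to a $C$-index. Since $\Ha$ is uniformly decidable, the $s$-$m$-$n$ theorem yields a total, injective computable $g$ with $\varphi_{g(i)} = \chi_{H_i}$, i.e., $g(i)$ is a $C$-index of $H_i$; a standard padding trick lets us assume $g^{-1}$ is computable on $\mathrm{range}(g)$. Define the new learner $h$ by composition with $g$, adapted to the interaction operator: for $\G, \Sd, \Psd, \Td$ set $h(\cdot) = g(h'(\cdot))$, and for $\It$ set $h(e,x) = g(h'(g^{-1}(e),x))$ (arbitrary off $\mathrm{range}(g)$). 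A short induction over $T$ gives $\beta(h,T)(n) = g(\beta(h',T)(n))$; by injectivity of $g$ and $C_{g(i)} = H_i$, semantics and mind-change positions are preserved, so pseudo-semanticity of $\alpha$ and $\delta$ transfers them, while $\Cind$ holds by construction.

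For the direction $(\Rightarrow)$, the hypothesis $\tau(\Cind\alpha)$ forces $h$ (and hence $h^*$) to be total and to output only $C$-indices on every input. Fix a computable enumeration $\sigma_0, \sigma_1, \ldots$ of $\Seq$ with $\sigma_0 = \varepsilon$, and declare the hypothesis space
\[
    \Ha = (H_i)_{i \in \N}, \qquad H_i := C_{h^*(\sigma_i)}.
\]
Because $\chi_{H_i}(x) = \varphi_{h^*(\sigma_i)}(x)$ is uniformly computable, $\Ha$ is a bona fide indexed family. Next, introduce the canonicalization map $f(e) := \min\{j : h^*(\sigma_j) = e\}$, which is total and injective on $\mathrm{range}(h^*)$. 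Define the new learner $h'$ in the shape dictated by $\beta$ so that its starred version tracks $f \circ h^*$: for $\G, \Sd, \Psd, \Td$ take $h'(\cdot) = f(h(\cdot))$, and for $\It$ take $h'(\varepsilon) = f(h(\varepsilon))$ and $h'(i,x) = f(h(h^*(\sigma_i),x))$, so that the previous $\Ha$-index $i$ is decoded via $\sigma_i$ before $h$'s iterative update is applied. A short induction on $|\sigma|$ then yields the invariant $(h')^*(\sigma) = f(h^*(\sigma))$, which gives semantic equivalence $H_{(h')^*(\sigma)} = C_{h^*(\sigma)}$ and, via injectivity of $f$, exact preservation of mind-change positions; pseudo-semanticity of $\alpha$ and $\delta$ now transfers them to $h'$ with respect to $\Ha$.

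The main obstacle I anticipate is the iterative case of $(\Rightarrow)$: an $\It$-learner $h'$ sees its previous hypothesis as an $\Ha$-index, whereas $h$ expects its previous hypothesis as a $C$-index in $\varphi$-numbering. The construction must therefore provide a computable bridge between the two numberings that is simultaneously \emph{injective} (so that mind-change positions line up and the pseudo-semantic restrictions carry over) and \emph{faithful} (so that $h'$'s iterative update genuinely simulates $h$'s). The canonicalization $f$ together with the evaluation $h^*(\sigma_i)$ serves exactly this role, and the remaining operators $\G, \Sd, \Psd, \Td$ fall out as simpler, non-iterative special cases of the same scheme.
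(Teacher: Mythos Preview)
Your proposal is correct and follows essentially the same approach as the paper: in the $(\Rightarrow)$ direction build the hypothesis space from the outputs of $h^*$ and canonicalize to the $\leq$-least witnessing sequence, and in the $(\Leftarrow)$ direction translate $\Ha$-indices to $C$-indices via an injective S-m-n function, in both cases using injectivity together with pseudo-semanticity of $\alpha,\delta$ to transfer the restrictions. The only difference is cosmetic: the paper defines the new learner directly ``in its starred form'' and leaves the reconstruction of an actual $\beta$-learner (in particular for $\It$) implicit via the biconditional $h^*(\sigma)=h^*(\tau)\Leftrightarrow (h')^*(\sigma)=(h')^*(\tau)$, whereas you spell out the iterative bridge $h'(i,x)=f(h(h^*(\sigma_i),x))$ explicitly.
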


\begin{proof}
	For the first direction, let $\La \in {[\tau(\Cind\alpha)\Txt\beta\delta_C]}$
	be an indexed family learned by a
	\(\tau(\Cind\alpha)\Txt\beta\delta_C\)-learner
    \(h\). Let $h^*$ be the starred form of $h$, that is, the $\G$-learner
    simulating $h$. As $h$ is $\tau(\Cind)$, so is $h^*$ and we can define the
    indexed hypothesis space $\Ha = {(C_{h^*(\sigma)})}_{\sigma \in \Seq}$. As
    $h$ learns $\La$, we have $\La \subseteq \Ha$.
	Fix an order $\leq$ on the set of all finite sequences. Then, we define the
	learner $h'$, for notational convenience in its starred form, as, for any
	finite sequence $\sigma$,
	\[
		{(h')}^*(\sigma) = \min_\leq\{ \sigma'\in\Seq\mid h^*(\sigma') = h^*(\sigma) \}.
	\]
	Note that the \(\min\)-search terminates as $\sigma$ is a candidate thereof.
	Now, for two sequences $\sigma$ and $\tau$, we have $h^*(\sigma) = h^*(\tau)$ if and
    only if ${(h')}^*(\sigma) = {(h')}^*(\tau)$. Also, \(h^*(\sigma)\) and
    \({(h')}^*(\sigma)\) are semantically equivalent. Thus, $h'$
	$\tau(\alpha)\Txt\beta\delta$-learns $\La$ with respect to $\Ha$.

	Conversely, let \(\mathcal{L}\) be such that there exist a total learner \(h'\)
	and an indexed hypothesis space \(\mathcal{H} = {(L_j)}_{j \in \N}\) such that
	\(h'\) \(\tau(\alpha)\Txt\beta\delta\)-learns \(\mathcal{L}\) with
	respect to $\Ha$. We
	provide a learner $h$ which
	\(\tau(\Cind\alpha)\Txt\beta\delta_C\)-learns $\La$. Let \({(h')}^*\) and \(h^*\)
	denote their starred forms. As
	$\Ha$ is an indexed hypothesis space, there exists a total computable function
	$f$ such that for all $j, x \in \N$
	\[
		f(j, x) = \begin{cases} 
            1, &x \in L_j; \\ 
            0, &\sonst. 
        \end{cases}
	\]
	Due to the S-m-n Theorem, there exists a strictly monotonically increasing
	function $g \in \totalCp$ such that, for all $j, x \in \N$, we have
	$\varphi_{g(j)}(x) = f(j, x)$. Now, we define, for all
	finite sequences $\sigma\in\Seq$,
	\[
		{h}^*(\sigma) = g((h')^*(\sigma)).
	\]
	We conclude the proof by showing that $h$
	$\tau(\Cind\alpha)\Txt\beta\delta_C$-learns $\La$. We first show that $h$ is
	$\tau(\Cind)$. This follows directly as, for any finite sequence $\sigma$,
	there exists \(j\in\N\) such that
	\[
		\varphi_{h(\sigma)}(x) = \varphi_{g(j)}(x) = f(j,x) = \begin{cases} 
            1, & x \in L_{j}; \\ 
            0, &\sonst.
        \end{cases}
	\]
	As $h'$ only makes mind changes when $h$ does and as, for any $\sigma\in\Seq$,
	$L_{(h')^*(\sigma)} = C_{{h}^*(\sigma)}$, we have that $h$
	$\tau(\alpha)\Txt\beta\delta_C$-learns $\La$.
\end{proof}

We will see that in many cases requiring the learner $h'$ to be total is no
restriction. It is already known that, when learning arbitrary classes of
recursively enumerable languages, Gold-style learners, obeying delayable
learning restrictions, may be assumed total \cite{KP16}. This result directly
transfers to learning indexed families with respect to some hypothesis space.
The following theorem holds.

\begin{theorem}%
	\label{thm:g-wlog-total}
	Let \(\delta\) be a delayable learning restriction. Then, we have that
	\begin{align*}
		{[\totalCp\Txt\G\delta]}\IndF={[\Txt\G\delta]}\IndF.
	\end{align*}
\end{theorem}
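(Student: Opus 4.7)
The plan is to reduce to the corresponding result from \cite{KP16} for arbitrary classes of r.e.\ languages, which states that Gold-style learners obeying a delayable restriction can be made total. The inclusion $[\totalCp\Txt\G\delta]\IndF \subseteq [\Txt\G\delta]\IndF$ is immediate since $\totalCp \subseteq \partialCp$, so all the work lies in the other direction.

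Starting from $\La \in [\Txt\G\delta]\IndF$, I would fix a witnessing partial learner $h$ and an indexed hypothesis space $\Ha$ on which $h$ $\Txt\G\delta$-learns $\La$. The construction of \cite{KP16} is a purely syntactic delaying scheme: on input $\sigma$ of length $n$, it searches for the maximal $m \leq n$ for which $h(\sigma[m])$ halts within $n$ computation steps and outputs that hypothesis, defaulting to a fixed index whenever no such $m$ exists. Applying this to $h$ yields a total $\G$-learner $h'$, and delayability of $\delta$ ensures that $h'$ satisfies $\delta$ on every text for a language in $\La$, since the sequence of hypotheses produced by $h'$ on any text $T$ is, up to a non-decreasing unbounded reindexing, the sequence produced by $h$ on a text with the same content---precisely the situation covered by the definition of delayability.

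The only extra point to verify---and the potential obstacle in transferring the KP16 result---is that the hypothesis space can remain the indexed family $\Ha$ rather than drifting to a general $\varphi$-numbering. This is immediate from the form of the construction: every nondefault output of $h'$ coincides with an output of $h$ and is therefore already an index in $\Ha$, and the default can be chosen to be any fixed index of $\Ha$ (for instance the index~$0$). Thus $h'$ together with $\Ha$ witnesses $\La \in [\totalCp\Txt\G\delta]\IndF$, and the proof is complete.
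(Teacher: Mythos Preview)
Your overall approach is the one the paper takes (which itself follows \cite{KP16}): delay each output until the corresponding computation of $h$ has visibly halted, and invoke delayability of~$\delta$. However, your choice of default value contains a genuine error that breaks the delayability step.

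You write that the default ``can be chosen to be any fixed index of $\Ha$ (for instance the index~$0$)'', and separately that ``the sequence of hypotheses produced by $h'$ on any text $T$ is, up to a non-decreasing unbounded reindexing, the sequence produced by $h$''. These two claims are incompatible. If the default is an arbitrary $\Ha$-index that $h$ never happens to output, then on the finitely many initial inputs $\sigma$ with $|\sigma|$ below the running time of $h(\varepsilon)$ your learner $h'$ outputs that default, and its hypothesis sequence is \emph{not} of the form $p \circ r$ for $p = \G(h,T)$ and any $r$. Delayability lets you postpone hypotheses via composition with a non-decreasing unbounded $r$; it does not let you prepend foreign hypotheses. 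Concretely, take $\delta = \SMon$: if $H_0 \not\subseteq H_{h(\varepsilon)}$ then your $h'$ already violates $\SMon$ at the first step where it switches from the default to $h(\varepsilon)$.

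The correct default---and the one the paper uses---is $h(\varepsilon)$ itself. For any $L \in \La$ and $T \in \Txt(L)$ the learning sequence $i \mapsto h(T[i])$ must be total, so in particular $h(\varepsilon) = h(T[0])\converges$, and this value is already an $\Ha$-index. With this choice every output of $h'$ equals $h(\sigma')$ for some prefix $\sigma' \subseteq \sigma$, the reindexing $r$ is genuinely non-decreasing and unbounded with $h'(T[n]) = h(T[r(n)])$, and the delayability argument goes through.
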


\begin{proof}
	This proof follows~\cite{KP16}. The inclusion
	\({[\totalCp\Txt\G\delta]}\IndF \subseteq {[\Txt\G\delta]}\IndF\)
	is immediate. For the other, let \(h\)
    $\Txt\G\delta$-learn $\La$ with respect to a hypothesis space $\Ha$. Let
    \(e\in\N\) such that \(h=\varphi_{e}\). To define the equivalent learner,
    let \(\Phi\) be a Blum complexity measure~\cite{Blum67}, that is for
    example, for \(e,x\in\N\),
	\(\Phi_e(x)\) could be the number of steps the program \(e\) needs to halt on input
	\(x\). We define a learner \(h'\)
	such that, for all sequences \(\sigma\in\Seq\),
	\begin{align*}
		h'(\sigma)=h(\max_\subseteq(
		\set{\sigma'\subseteq\sigma}{\Phi_e(\sigma')\leq\abs{\sigma}}\cup
		\set{\varepsilon})).
	\end{align*}
	As we only allow total learning sequences of \(h\) for languages in \(\La\),
	we have \(h(\varepsilon)\convs\) and, thus, \(h'\) is indeed total and
    computable. We show that $h$ \(\totalCp\Txt\G\delta\)-learns $\La$ with
    respect to $\Ha$. To that end, we use that \(\delta\) is delayable. Let $L
    \in \La$ and \(T\in\Txt(L)\). Now, for all
	\(n\in\N\),  let \(r(n)=\abs{\max_\subseteq(
		\set{\sigma'\subseteq T[n]}{\Phi_e(\sigma')\leq n}\cup
		\set{\varepsilon})}\). Note that, for
    all \(n\in\N\), we have \(h'(T[n])=h(T[r(n)])\). As $\delta$ is delayable,
    it suffices to show that $r$ is non-decreasing and unbounded to prove that
    \(h'\) \(\totalCp\Txt\G\delta\)-learns \(\La\) with respect to $\Ha$. By
    definition of \(r\), we have that \(r\) is
	non-decreasing and, for all \(n\in\N\), we have \(r(n)\leq n\) and that
	\(r\) is unbounded, as there exists  \(m\in\N\) with \( m\geq n\)
	such that \(\Phi_e(T[n])\leq m\) and, thus \(r(m)\geq n\).
	This concludes the proof.
\end{proof}

\section{Learning Indexable Classes Consistently}\label{Sec:Cind-Cons}

Learners may have various useful properties. One such is being consistent with
the information given while maintaining learning power. For example, various
behaviorally correct learners have been investigated for consistency
\cite{KSS17}. We study whether this can also be assumed when learning indexed
families and also with explanatory learners. Throughout this section, we provide
the individual results which, gathered together, provide the following theorem.

\begin{theorem}\label{th:taucons}
	For all $\delta \in \{ \True, \Mon, \SMon, \WMon, \CautTar, \SemConv, \Conv
		\}$ and all $\delta' \in \{ \Ex, \Bc \}$ as well as all \(\beta \in \{\G,
	\Psd, \Sd\}\), we have
	\[
        {[\tau(\Cons)\Txt\beta\delta{\delta'}\IndR]}\IndF =
		{[\totalCp\Txt\beta\delta{\delta'}\IndR]}\IndF.
    \]
\end{theorem}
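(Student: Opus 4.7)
The plan is to route both sides of the equality through Theorem~\ref{thm:ind-fam}, which turns each inclusion into a statement purely about learners that output $C$-indices. Applying Theorem~\ref{thm:ind-fam} once with $\alpha=\Cons$ and once with $\alpha=\True$, the theorem reduces to the single inclusion $[\tau(\Cind)\Txt\beta\delta\delta'_C]\subseteq[\tau(\Cind\Cons)\Txt\beta\delta\delta'_C]$, i.e., I must take a total $\beta$-learner $h$ that outputs $C$-indices and $\delta\delta'_C$-learns $\La$, and build from it a learner $\tilde h$ that is additionally consistent on all texts while preserving $\delta$ and $\delta'$. Since $h$ is $\Cind$, the test $\content(\sigma)\subseteq C_{h(\sigma)}$ is decidable at every step, and the $S$-$m$-$n$ theorem lets me fabricate $C$-indices for any semantically described set built from $h$'s outputs.

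The construction of $\tilde h$ would then be by case analysis on $\delta$. For $\delta\in\{\True,\SMon,\Mon\}$, I would let $\tilde h(\sigma)=h(\sigma)$ when $\content(\sigma)\subseteq C_{h(\sigma)}$ and otherwise a $C$-index for $C_{h(\sigma)}\cup\content(\sigma)$. Taking the union with $\content(\sigma)$ propagates both $\SMon$ and $\Mon$ from $h$ to $\tilde h$ since $\content$ is non-decreasing along any text; and because $h$'s stable correct hypothesis on a target text is automatically consistent, $\tilde h$ coincides with $h$ on that tail, inheriting $\Ex$ and a fortiori $\Bc$. For $\delta\in\{\WMon,\CautTar\}$, I would use a fallback-to-content rule: $\tilde h(\sigma)=h(\sigma)$ when $h$ is consistent at $\sigma$, else the canonical $C$-index for the finite set $\content(\sigma)$. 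Target-cautiousness is then immediate from $\content(\sigma)\subseteq L$; for $\WMon$, a short four-case split on whether $h$ is consistent at the earlier and the later sequence handles everything, with the a priori dangerous case ``$h$ consistent at $\sigma$, inconsistent at $\tau$, $\content(\tau)\subseteq C_{h(\sigma)}$'' being excluded by $h$'s own $\WMon$ combined with consistency at $\sigma$.

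The main obstacle will be $\delta\in\{\Conv,\SemConv\}$: neither construction above works there, because both introduce spurious mind changes of $\tilde h$ -- triggered either by $\content$ growing or by $h$ flipping between consistent and inconsistent states -- that are not witnessed by any inconsistency of $\tilde h$'s own previous hypothesis. The plan here is a lazy update: set $\tilde h(\sigma)=\tilde h(\sigma^-)$ as long as that hypothesis remains consistent with $\content(\sigma)$, and only when it does not, jump to $h(\sigma)$, padded with $\content(\sigma)$ if $h(\sigma)$ is itself inconsistent. For $\Psd$ and $\Sd$ this recursion is realized by running the rule along the canonical chain of subsets of $\content(\sigma)$, which is computable from the information available to the learner. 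Verification then reduces to checking that each mind change of $\tilde h$ is witnessed by its own previous hypothesis becoming inconsistent, which is exactly what $\Conv$ permits; $\SemConv$ follows because semantic mind changes form a subset of syntactic ones, and convergence on target texts follows because $\tilde h$ eventually latches onto $h$'s stable correct output.
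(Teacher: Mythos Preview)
Your handling of $\delta\in\{\True,\SMon,\Mon\}$ (union-padding) and of $\delta\in\{\WMon,\CautTar\}$ (fallback to $\ind(\content(\sigma))$) matches the paper exactly, and your $\WMon$ case split is correct.

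The gap is in the $\Conv/\SemConv$ case. When both $\tilde h(\sigma^-)$ and $h(\sigma)$ are inconsistent you propose to output a $C$-index for $C_{h(\sigma)}\cup\content(\sigma)$. But conservativeness of $h$ on a target $L\in\La$ only yields target-cautiousness, i.e.\ $L\not\subsetneq C_{h(\sigma)}$; it does \emph{not} force $C_{h(\sigma)}\subseteq L$. Hence the padded set can properly contain $L$, and your lazy $\tilde h$ is then stuck on a wrong hypothesis forever. Concretely, let $L=\{0,1\}$ and take any $\tau(\Cind)\Txt\G\Conv\Ex_C$-learner $h$ for $\{L\}$ such that along $T=0,1,\#,\#,\ldots$ one has $C_{h(T[0])}=\emptyset$, $C_{h(T[1])}=\{0\}$, $C_{h(T[2])}=\{2\}$, and $h(T[n])$ a fixed correct index for all $n\geq 3$; every transition is permitted by $\Conv$. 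On this text your $\tilde h$ produces the padded set $\{0,1,2\}$ at step $2$ and, since $L\subseteq\{0,1,2\}$, never changes again. So $\tilde h$ does not ``latch onto $h$'s stable correct output'' as you claim.

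The paper avoids this with a different construction: it first replaces $\sigma$ by its repetition- and pause-free subsequence $\tilde\sigma$ and then applies the \emph{fallback-to-content} rule $\ind(\content(\sigma))$ (not the padded union) when $h(\tilde\sigma)$ is inconsistent. Passing to $\tilde\sigma$ guarantees that every mind change of the new learner is accompanied by a genuinely new datum, which is what makes $\Conv$ go through; using $\content(\sigma)\subseteq L$ as the fallback is what rules out overgeneralization. Your lazy-update idea can be repaired by switching the fallback from the padded union to $\ind(\content(\sigma))$, but you would still owe a careful argument that running the recursion along the canonical chain of subsets really produces a conservative $\Sd$-learner: for $x<\max(D)$ the canonical chain to $D\cup\{x\}$ is not an extension of the canonical chain to $D$, so it is not automatic that $D\cup\{x\}\subseteq C_{\tilde h(D)}$ implies $\tilde h(D\cup\{x\})=\tilde h(D)$.
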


Unrestricted $\Bc$-learners can be made consistent by simply patching in the
missing elements into the hypothesis. As one can check for consistency, one can
decide whether changing the hypothesis is necessary or not. It is immediate to
see that this strategy works out, as the padding needs only to be done while the
learner did not converge yet and as the learner needs not to serve any
additional requirements. Note that this also preserves $\Ex$-convergence.
Interestingly, the same idea also works out for certain restricted learners. In
particular, strongly monotone and monotone learners can be made consistent this
way as well. Especially here, Theorem~\ref{thm:ind-fam} comes in handy as we do
not need to fix the hypothesis space containing the padded hypotheses
beforehand. We provide the result.

\begin{lemma}
	For \(\beta \in \{\G, \Psd, \Sd\}\), \(\delta \in \{\T,\Mon,\SMon\}\)
	and $\delta' \in \{ \Ex, \Bc \}$, we have
	\[
        {[\tau(\Cons)\Txt\beta\delta{\delta'}]}\IndF =
		{[\totalCp\Txt\beta\delta{\delta'}]}\IndF.
    \]
\end{lemma}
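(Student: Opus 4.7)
The inclusion $\subseteq$ is immediate. For the converse, my plan is to route everything through Theorem~\ref{thm:ind-fam}. Given $\La \in [\totalCp\Txt\beta\delta\delta']\IndF$, I would first apply the theorem to replace the original learner and hypothesis space by a single $\tau(\Cind)\Txt\beta\delta\delta'_C$-learner $h$ for $\La$. The payoff is that the predicate ``$x \in C_{h(\sigma)}$'' becomes uniformly decidable, so consistency with $\content(\sigma)$ can be tested in a finite procedure; the cost is only a second invocation of Theorem~\ref{thm:ind-fam} at the very end (with $\alpha = \Cons$, which is pseudo-semantic) to translate the final $C$-index learner back into a total $\tau(\Cons)$-learner with respect to an indexed hypothesis space.

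With $h$ in hand, I would define $\tilde{h}$ to leave $h(\sigma)$ untouched whenever $\content(\sigma) \subseteq C_{h(\sigma)}$, and otherwise to replace $h(\sigma)$ by a $C$-index of $C_{h(\sigma)} \cup \content(\sigma)$, obtained uniformly via the S-m-n theorem from $h(\sigma)$ and the finite set $\content(\sigma)$. Since this construction only uses $\content(\sigma)$ (for $\Sd$), $\content(\sigma)$ together with $|\sigma|$ (for $\Psd$), or the full $\sigma$ (for $\G$), it respects the interaction operator $\beta$. By construction $\tilde{h}$ is $\tau(\Cind\Cons)$. For convergence I would argue that once $h$ has reached a correct hypothesis---syntactically in the $\Ex_C$ case, semantically in the $\Bc_C$ case---the hypothesis is automatically consistent with every subsequent $\content(T[i])$, so $\tilde{h}$ agrees with $h$ from that point on and inherits the convergence behaviour verbatim.

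The remaining, and main, obstacle is to show that the patching preserves $\SMon$ and $\Mon$. Writing $B_k = C_{h(T[k])}$ and $A_k = C_{\tilde{h}(T[k])}$, I would split, for each $i < j$, into the four cases depending on whether $B_i$ and $B_j$ are consistent, using $B_i \subseteq B_j$ from the hypothesis on $h$ in the $\SMon$ case, the chain $\content(T[i]) \subseteq \content(T[j])$, and---when $B_j$ is consistent while $B_i$ is not---the consequence $\content(T[i]) \subseteq \content(T[j]) \subseteq B_j$. The $\Mon$ argument then uses additionally that every patched-in element already lies in $\content(T)$, so $\content(T) \cap A_k$ differs from $\content(T) \cap B_k$ at most by a subset of $\content(T[k])$. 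I expect the mixed case where exactly one of $B_i, B_j$ is inconsistent to be the delicate one, since that is the only situation in which padded text on one side must be compared against an unpadded hypothesis on the other.
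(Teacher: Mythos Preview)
Your proposal is correct and follows essentially the same route as the paper: reduce via Theorem~\ref{thm:ind-fam} to $C$-indices, patch in $\content(\sigma)$ when $h(\sigma)$ is inconsistent via the S-m-n theorem, and then verify convergence and (strong) monotonicity. One simplification the paper exploits and you overlook: since $\content(\sigma) \subseteq C_{h(\sigma)}$ already holds in the consistent case, the identity $C_{\tilde h(\sigma)} = C_{h(\sigma)} \cup \content(\sigma)$ is valid \emph{uniformly}, so your four-case split collapses to the single line $A_i = B_i \cup \content(T[i]) \subseteq B_j \cup \content(T[j]) = A_j$ (and its $\Mon$ analogue after intersecting with $\content(T)$), and the ``delicate'' mixed case never actually arises.
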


\begin{proof}
	The inclusion \({[\tau(\Cons)\Txt\beta\delta{\delta'}\IndR]}\IndF \subseteq
	{[\totalCp\Txt\beta\delta{\delta'}\IndR]}\IndF\)
	is immediate. For the other,
	we use a construction which patches in the seen data while maintaining the
    given learning restriction, as seen in \cite{KSS17} for learning of
    arbitrary classes. By \cref{Thm:ind-fam}, it suffices to show
	\[
        {[\tau(\Cind)\Txt\beta\delta\delta_C']}\subseteq
		{[\tau(\Cind\Cons)\Txt\beta\delta\delta_C']}.
    \] Let
	\(h\) be a learner and let $\La\subseteq
		\tau(\Cind)\Txt\beta\delta\delta'_C(h)$. Using some auxiliary functions, we define
    a $\tau(\Cind\Cons)\Txt\beta\delta{\delta'_C}$-learner $h'$. For ease of
    notation, we use $h$ and $h'$ as their
	starred learners. Due to the S-m-n Theorem, there exists \(s \in \totalCp\) such
	that, for all \(x \in \N\) and all finite sequences \(\sigma\),
	\begin{align*}
		\varphi_{s(\sigma)}(x) = \begin{cases}
			1, & \falls x \in \content(\sigma) \vee \varphi_{h(\sigma)}(x) = 1; \\
			0, & \otw.
		\end{cases}
	\end{align*}
	We now define the learner \(h'\) such that for any finite sequence $\sigma$
	\begin{align*}
		h'(\sigma) = \begin{cases}
			h(\sigma), & \falls \content(\sigma)\subseteq C_{h(\sigma)}; \\
			s(\sigma), & \sonst.
		\end{cases}
	\end{align*}
	Note that \(\varphi_{s(\sigma)}\) and \(h'\) are total because \(h\) is a
	\(\tau(\Cind)\)-learner. Intuitively, \(h'\) has the same hypothesis as
	\(h\), if this hypothesis is consistent. Otherwise, it patches the input set
	into the hypothesis of \(h\). Thus, by construction, \(h'\) only outputs consistent
	\(C\)-indices, i.e., it is a
	\(\tau(\Cind\Cons)\)-learner.
	In particular, note that for any sequence $\sigma$ we have that
	\begin{align}
		C_{h'(\sigma)} = C_{h(\sigma)} \cup \content(\sigma). \label{Eq:ConsConstruction}
	\end{align}
	It remains to be shown that $h'$ $\delta'$-learns every language in $\La$ and
	that it obeys the restriction $\delta$ while doing so. We first show
	$\delta'$-convergence. Let $L \in \La$ and $T \in \Txt(L)$. As $h$ learns $L$,
	there exists $n_0\in\N$ such that, for all $n \geq n_0$, we have $C_{h(T[n])} = L$
	and, in the case of $\delta' = \Ex$, also $h(T[n]) = h(T[n_0])$. For $n \geq n_0$, as
	$h(T[n])$ is consistent, $h'(T[n])$ will output $h(T[n])$, proving that $h'$
	$\delta'$-learns $L$ from text $T$.

	Lastly, we prove that \(h'\) learns \(\La\) without violating the restriction
	\(\delta\). For
	\(\delta = \T\) this follows immediately. We consider the remaining
	restrictions separately. Let $L \in \La$ and let $T \in \Txt(L)$.
	\begin{enumerate}
        \itemin{1. Case:} \(\delta = \SMon\). Let $n, m \in \N$ such that $n
            \leq m$. Since $h$ is $\SMon$, we have that 
            \[
                C_{h(T[n])} \subseteq C_{h(T[m])}.  
            \] Now, by Equation~\eqref{Eq:ConsConstruction}, we
            get that $h'$ is $\SMon$ as
		    \begin{align*}
                C_{h'(T[n])} = C_{h(T[n])} \cup \content(T[n]) \subseteq
                C_{h(T[m])} \cup \content(T[m]) = C_{h'(T[m])}.
		      \end{align*}
        \itemin{2. Case:} \(\delta = \Mon\). Let $n, m \in \N$ such that $n \leq
            m$. Since $h$ is $\Mon$, we have that
		    \[
			    C_{h(T[n])} \cap \content(T) \subseteq C_{h(T[m])} \cap \content(T).
		    \]
		    Now, by Equation~\eqref{Eq:ConsConstruction}, we get that $h'$ is $\Mon$ as
		    \begin{align*}
			      C_{h'(T[n])} & \cap \content(T) = \left( C_{h(T[n])} \cup
                  \content(T[n]) \right) \cap \content(T) \\
			                   & \subseteq \left( C_{h(T[m])} \cup
                  \content(T[m]) \right) \cap \content(T) =
                  C_{h'(T[m])} \cap \content(T).
		    \end{align*}
	\end{enumerate}
	Thus, the proof is concluded.
\end{proof}

The former strategy does not work for target-cautious learners. For example, the
reason is that by simply adding missing elements, one can suddenly
overgeneralize the target language. ``Resetting'' the conjecture to solely the
information given when determining non-consistency preserves
target-cautiousness, as we show in the next result. Interestingly, this strategy
of ``resetting'' also works for weakly monotone learners as they, when
inconsistent, may propose new suggestions. We provide the next result.

\begin{lemma}
	For \(\beta \in \{\G, \Psd, \Sd\}\), \(\delta \in
	\{\WMon,{\CautTar}\}\) and $\delta' \in \{ \Ex, \Bc \}$, it holds
	\[{[\tau(\Cons)\Txt\beta\delta{\delta'}\IndR]}_\Ind =
		{[\totalCp\Txt\beta\delta{\delta'}\IndR]}_\Ind.\]
\end{lemma}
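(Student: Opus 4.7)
The plan is to handle the nontrivial inclusion ${[\totalCp\Txt\beta\delta\delta']}_\Ind \subseteq {[\tau(\Cons)\Txt\beta\delta\delta']}_\Ind$ in the spirit of the previous lemma. Applying \cref{Thm:ind-fam} reduces the task to showing ${[\tau(\Cind)\Txt\beta\delta\delta'_C]} \subseteq {[\tau(\Cind\Cons)\Txt\beta\delta\delta'_C]}$, so I start from a $\tau(\Cind)$-learner $h$ which $\delta\delta'_C$-learns some $\La$ (working throughout with starred forms) and build a consistent variant $h'$. By S-m-n I fix $s \in \totalCp$ with $\varphi_{s(\sigma)}(x) = 1$ if $x \in \content(\sigma)$ and $0$ otherwise, so that $C_{s(\sigma)} = \content(\sigma)$ is a characteristic index of the finite set of data seen in $\sigma$.

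Next I define the "reset" variant
\[
    h'(\sigma) = \begin{cases} h(\sigma), & \cIf \content(\sigma) \subseteq C_{h(\sigma)}; \\ s(\sigma), & \otw. \end{cases}
\]
Totality and $\tau(\Cind\Cons)$ are immediate by construction. For $\delta'$-convergence on $L \in \La$ along a text $T \in \Txt(L)$, once $h$ reaches a $C$-index of $L$ (pointwise from some $n_0$ on for $\Bc$, and constantly for $\Ex$), that index is consistent with every $T[n]$, so from $n_0$ on $h'$ agrees with $h$ and inherits its convergence behaviour.

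Target-cautiousness is the easier of the two $\delta$ cases: on the $h(\sigma)$-branch it is inherited from $h$, while on the $s(\sigma)$-branch $C_{h'(\sigma)} = \content(\sigma) \subseteq \content(T)$, hence cannot properly contain $\content(T)$. The main obstacle, and the reason the reset strategy is used, is $\WMon$: given $i < j$ with $\content(T[j]) \subseteq C_{h'(T[i])}$, I need $C_{h'(T[i])} \subseteq C_{h'(T[j])}$. The key observation is that if $h'(T[i]) = h(T[i])$, weak monotonicity of $h$ gives $C_{h(T[i])} \subseteq C_{h(T[j])}$, which forces $\content(T[j]) \subseteq C_{h(T[j])}$ and hence $h'(T[j]) = h(T[j])$ too, transferring the inclusion; if instead $h'(T[i]) = s(T[i])$, then $\content(T[j]) \subseteq \content(T[i])$ collapses to $\content(T[i]) = \content(T[j])$, and either possible value of $h'(T[j])$ yields a $C$-set containing $\content(T[i]) = C_{h'(T[i])}$. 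This case analysis closes the lemma.
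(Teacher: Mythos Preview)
Your proof is correct and follows essentially the same approach as the paper: the same reduction via \cref{Thm:ind-fam}, the same ``reset'' construction (the paper writes $\ind(\content(\sigma))$ where you write $s(\sigma)$), and the same case analyses for convergence, $\CautTar$, and $\WMon$. The only cosmetic difference is that in the inconsistent $\WMon$ sub-case the paper argues directly via consistency of $h'$ that $C_{h'(T[i])} = \content(T[i]) \subseteq \content(T[j]) \subseteq C_{h'(T[j])}$, whereas you first observe $\content(T[i]) = \content(T[j])$ and then split on the two possible values of $h'(T[j])$; both are fine.
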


\begin{proof}
	The inclusion \({[\tau(\Cons)\Txt\beta\delta{\delta'}\IndR]}_\Ind \subseteq
    {[\totalCp\Txt\beta\delta{\delta'}\IndR]}_\Ind\) is immediate. For the
    other, it suffices to show \({[\tau(\Cind)\Txt\beta\delta{\delta_C'}]}
    \subseteq
	{[\tau(\Cind\Cons)\Txt\beta\delta{\delta_C'}]}\), due to
	\cref{Thm:ind-fam}. We use a
	similar construction as used for the case of $W$-indices, see \cite{KSS17}.
	The idea is to output solely the content of the given data if the original
	learner is not consistent. Let \(h\) be a learner and let
	$\La\subseteq
		\tau(\Cind)\Txt\beta\delta\delta'_C(h)$. We define a learner $h'$ which
	$\tau(\Cind\Cons)\Txt\beta\delta\delta'_C$-learns $\La$. For ease of notation, we use
	$h$ and $h'$ as their starred learners. We now define the
	learner \(h'\), such that for any finite sequence
	$\sigma\in\Seq$
	\begin{align*}
		h'(\sigma) = \begin{cases}
			h(\sigma),              & \falls \content(\sigma)\subseteq C_{h(\sigma)}; \\
			\ind(\content(\sigma)), & \otw.
		\end{cases}
	\end{align*}
	Note that \(h'\) is total and computable because \(h\) only outputs
	\(C\)-indices. By construction, \(h'\) only outputs consistent \(C\)-indices,
	i.e., it is a \(\tau(\Cind\Cons)\)-learner.

	It remains to be shown that $h'$ $\delta'$-learns every language in $\La$
	while obeying the restriction $\delta$. We first show $\delta'$-convergence.
	Let $L \in \La$ and $T \in \Txt(L)$. As $h$ learns $L$, there exists $n_0$
	such that, for all $n \geq n_0$, we have $C_{h(T[n])} = L$ and, in the case of
	$\delta' = \Ex$, also $h(T[n]) = h(T[n_0])$. For $n \geq n_0$, as $h(T[n])$ is consistent,
	$h'(T[n])$ will output $h(T[n])$, proving that $h'$ $\delta'$-learns $L$ on
	text $T$.

	Lastly, we prove that \(h'\) satisfies the restriction \(\delta\). We consider
	the restrictions separately. Let $L \in \La$ and let $T \in \Txt(L)$.
	\begin{enumerate}
		\itemin{1. Case:} \(\delta = \WMon\). Let $n, m \in \N$ such that $n
			      \leq m$ and $\content(T[m]) \subseteq C_{h'(T[n])}$. We show that
		      $C_{h'(T[n])} \subseteq C_{h'(T[m])}$. If $h(T[n])$ is not consistent,
		      that is, $\content(T[n]) \not\subseteq C_{h(T[n])}$, then $C_{h'(T[n])}
			      = \content(T[n])$. Thus, we have that
		      \[ C_{h'(T[n])} = \content(T[n]) \subseteq \content(T[m]) \subseteq
			      C_{h'(T[m])}. \]

		      Otherwise, $h(T[n])$ is consistent and, thus, $C_{h(T[n])} = C_{h'(T[n])}$.
		      Since, by assumption, $\content(T[m]) \subseteq C_{h'(T[n])}$ and since $h$
		      is weakly monotone, we have that $C_{h(T[n])} \subseteq C_{h(T[m])}$ and
		      also that $C_{h(T[m])}$ is consistent. Thus, in this case we get
		      \[C_{h'(T[n])} = C_{h(T[n])} \subseteq C_{h(T[m])} = C_{h'(T[m])}.\]
		\itemin{2. Case:} \(\delta = {\CautTar}\). Let $n \in \N$, then
		      $h'(T[n])$ outputs either $h(T[n])$, in which case the hypothesis is
		      target-cautious due by assumption, or it outputs $\ind(\content(T[n]))$.
		      As \(\content(T[n])\subseteq \content(T)\), this hypothesis is also
		      target-cautious.\qedhere
	\end{enumerate}
\end{proof}

Although (semantically) conservative learners may also change their mind upon
inconsistency, the same strategy does not work. The problem is that one may make
them consistent too early and, thus, prevent later mind changes from happening.
An interesting strategy solves the problem. One mimics the (possibly)
inconsistent learner on information without repetition. Learning is preserved
this way, as when inferring infinite target languages there will always be new
information to correct an incorrect conjecture. On the other hand, finite target
languages serve no problem either as, given all information without repetition,
either the learner was correct anyway or making it consistent is a correct
guess.

\begin{lemma}
	For \(\beta \in \{\G, \Psd, \Sd\}\), \(\delta \in \{\SemConv, \Conv\}\)
	and $\delta' \in \{ \Ex, \Bc \}$, we have
	\[{[\tau(\Cons)\Txt\beta\delta{\delta'}\IndR]}_\Ind =
		{[\totalCp\Txt\beta\delta{\delta'}\IndR]}_\Ind.\]
\end{lemma}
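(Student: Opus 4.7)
The plan is to invoke \Cref{Thm:ind-fam} to reduce the claim to
\[
    [\tau(\Cind)\Txt\beta\delta\delta'_C] \subseteq [\tau(\Cind\Cons)\Txt\beta\delta\delta'_C].
\]
Given a $\tau(\Cind)\Txt\beta\delta\delta'_C$-learner $h$, in its starred form, I would define the candidate learner
\[
    h'(\sigma) = \begin{cases} h(\widetilde{\sigma}), & \falls \content(\sigma) \subseteq C_{h(\widetilde{\sigma})}; \\ \ind(\content(\sigma)), & \otw, \end{cases}
\]
where $\widetilde{\sigma}$ denotes the subsequence of $\sigma$ obtained by keeping only the first occurrence of each element and dropping all pauses. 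By construction, $h'$ is total, outputs only $C$-indices, and is consistent.

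The main obstacle is establishing convergence for \emph{finite} target languages. For infinite $L \in \La$ and $T \in \Txt(L)$, the sequence $\widetilde{T[n]}$ grows to an infinite text $T^*$ for $L$; since $h$ learns $L$ on $T^*$, the hypothesis $h(\widetilde{T[n]})$ is eventually (semantically) correct and, being consistent with $\content(T[n]) \subseteq L$, is adopted by $h'$. For finite $L$, however, $\widetilde{T[n]}$ stabilizes at some finite sequence $T^*$ enumerating $L$ without repetition, and $h(T^*)$ need not a priori be correct. The key observation is that $T^*\concat\#\concat\#\concat\ldots$ is a text for $L$ on which $h$ converges: if $h(T^*)$ is consistent with $L$, then $\delta \in \{\Conv, \SemConv\}$ applied to successive $\#$-extensions forces $h(T^* \concat \#^m)$ to remain (semantically) equal to $h(T^*)$ for every $m$, so convergence along that text yields $C_{h(T^*)} = L$; otherwise $h'(T[n]) = \ind(L)$ is correct by construction.

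For the $\delta$-conservativeness of $h'$, consider an extension $\sigma \mapsto \sigma\concat x$. When $x = \#$ or $x \in \content(\sigma)$, both $\widetilde{\sigma}$ and $\content(\sigma)$ are unchanged, so $h'(\sigma\concat x) = h'(\sigma)$ trivially. Otherwise, if the conservativeness premise $\content(\sigma\concat x) \subseteq C_{h'(\sigma)}$ holds, then the fallback case is impossible (as $C_{\ind(\content(\sigma))} = \content(\sigma)$ misses the new $x$), so $h'(\sigma) = h(\widetilde{\sigma})$ with $x \in C_{h(\widetilde{\sigma})}$; applying $\delta$ to $h$ at the extension $\widetilde{\sigma} \mapsto \widetilde{\sigma}\concat x$ gives $h(\widetilde{\sigma}\concat x) = h(\widetilde{\sigma})$ (for $\Conv$) or $C_{h(\widetilde{\sigma}\concat x)} = C_{h(\widetilde{\sigma})}$ (for $\SemConv$), and in either case this hypothesis remains consistent with $\content(\sigma\concat x)$, giving $h'(\sigma\concat x) = h(\widetilde{\sigma}\concat x)$ (semantically) equal to $h'(\sigma)$. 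For $\beta \in \{\Psd, \Sd\}$, since both $\content(\widetilde{\sigma})$ and $|\widetilde{\sigma}| = |\content(\sigma)|$ depend only on $\content(\sigma)$, the same starred construction defines a legitimate $\beta$-learner and the preceding arguments carry over unchanged.
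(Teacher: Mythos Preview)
Your proposal is correct and follows essentially the same approach as the paper: the same reduction via \Cref{Thm:ind-fam}, the same learner $h'$ (mimic $h$ on the repetition-free subsequence, fall back to $\ind(\content(\sigma))$ when inconsistent), and the same case split finite/infinite for convergence. The only noteworthy difference is in the finite-$L$ case: the paper invokes the fact that (semantically) conservative learners are target-cautious to conclude $C_{h(T^*)} = L$ from consistency, whereas you give a self-contained argument by running $h$ on the text $T^*\concat\#^\infty$ and using $\delta$ step-by-step to force (semantic) stability at $h(T^*)$; both are valid and equally short. Your verification of $\delta$ via single-step extensions is also fine, since both $\Conv$ and $\SemConv$ are defined as one-step conditions; the paper's $n < m$ formulation is just the iterated version.
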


\begin{proof}
	The inclusion \({[\tau(\Cons)\Txt\beta\delta{\delta'}\IndR]}_\Ind \subseteq
	{[\totalCp\Txt\beta\delta{\delta'}\IndR]}_\Ind\) is immediate. For the other, we use a
	similar construction as when learning $W$-indices as presented in
	\cite{KSS17}. By \cref{Thm:ind-fam}, it suffices to show
	\({[\tau(\Cind)\Txt\beta\delta{\delta'}_C]} \subseteq
	{[\tau(\Cind\Cons)\Txt\beta\delta{\delta'}_C]}\). Let \(h\) be a learner and
	let $\La\subseteq
		\tau(\Cind)\Txt\beta\delta\delta'_C(h)$. We define a learner $h'$ which
	$\tau(\Cind\Cons)\Txt\beta\delta\delta'_C$-learns $\La$. For ease of notation, we use
	$h$ and $h'$ as their starred learners. Given a sequence $\sigma$, we write
	$\tilde{\sigma}$ for the sequence without repetitions or pause symbols.
	Analogously, $\Psd$-learners given $(\content(\sigma),|\sigma|)$ consider
	$(\content(\tilde{\sigma}),|\tilde{\sigma}|)$ instead. Notably, $\Sd$-learners
	receive the same information as $\content(\sigma) = \content(\tilde{\sigma})$.
	Now, we define $h'$ such that, for all finite sequences $\sigma\in\Seq$,
	\begin{align*}
		h'(\sigma) =
		\begin{cases}
            h(\tilde{\sigma}),              & \falls
                \content(\tilde{\sigma})\subseteq C_{h(\tilde{\sigma})}; \\
			\ind(\content(\tilde{\sigma})), & \sonst.
		\end{cases}
	\end{align*}
	Note that, by construction, \(h'\) is a \(\tau(\Cind\Cons)\)-learner.
	The intuition for the learner $h'$ is then to mimic $h$ on information
	without repetition. This is important to ensure (semantic) conservativeness.
	Given $\sigma$, it either outputs the same hypothesis as
	$h(\tilde{\sigma})$, if this is
	consistent, or it outputs solely a \(C\)-index for the content of the input.

    Next, we show that $h'$ $\delta'$-learns $\La$. Let $L \in \La$ and $T \in
    \Txt(L)$. We distinguish whether $L$ is finite or not.
	\begin{enumerate}
		\itemin{1. Case:} $L$ is finite. Then, there exists a minimal $n_0\in\N$ such
		      that $\content(T[n_0]) = L$. Then, by definition, for all $n \geq n_0$,
		      we have that $h'(T[n_0]) = h'(T[n])$ as no new element will be
		      witnessed. Now, if $h(T[n_0])$ is consistent, then, because \(h\) is
		      (semantically) conservative and thus target-cautious,  we have
		      $C_{h(T[n_0])} = L$. Otherwise, $h'$
		      outputs $\ind(\content(T[n_0]))$. In both cases, $h'(T[n_0])$ is a
		      correct hypothesis.
		\itemin{2. Case:} $L$ is infinite. Note that the transition to text $T$ from
		      the corresponding text $T' \in \Txt(L)$ which does not contain any
		      duplicates or pause-symbols can be done using an unbounded,
		      non-decreasing function $r\colon \N \to \N$, that is, $T = T' \circ r$.
		      As $\delta'$ is delayable, it suffices to show the convergence on text
		      $T'$. As $h$ also converges on $T'$, there exists some $n_0$ such that,
		      for all $n \geq n_0$, we have $C_{h(T'[n])} = L$ and, if $\delta' =
			      \Ex$, also $h(T'[n_0]) = h(T'[n])$. In particular, for all $n \geq
			      n_0$, $h(T'[n])$ is consistent and, thus, $h'(T'[n]) =
			      h(\widetilde{T'[n]}) = h(T'[n])$. Thus, $h'$ $\delta'$-learns $L$ from
		      text $T'$ since $h$ does and, as $\delta'$ is delayable, $h'$ also
		      learns $L$ from text~$T$.
	\end{enumerate}
    It remains to be shown that $h'$ obeys $\delta$. The basic idea is that $h'$
    may only make a mind change if it sees a new element which is not consistent
    with the current hypothesis. Formally, let $L \in \La$ and $T \in \Txt(L)$.
    Furthermore, let $n,m \in \N$, with $n < m$, be such that $\content(T[m])
    \subseteq C_{h'(T[n])}$. We distinguish between the two cases for $\delta$.
	\begin{enumerate}
		\itemin{1. Case:} $\delta = \SemConv$. We show that $C_{h'(T[n])} =
			      C_{h'(T[m])}$. In the case of $\content(T[n]) = \content(T[m])$, this
		      follows by definition. Otherwise, there exists an element in
		      $\content(T[m])$ which is not in $\content(T[n])$. Thus, in order for
		      $C_{h'(T[n])}$ to enumerate $\content(T[m])$, that is, $\content(T[m])
			      \subseteq C_{h'(T[n])}$, it must hold that $\content(T[m]) \subseteq
			      C_{h(\widetilde{T[n]})}$. Then, since $h$ is semantically conservative,
		      we have $C_{h(\widetilde{T[n]})} = C_{h(\widetilde{T[m]})}$. In
		      particular, $h(\widetilde{T[m]})$ is consistent, meaning that
		      $C_{h'(T[m])} = C_{h(\widetilde{T[m]})}$. Altogether, we get
		      \[
                  C_{h'(T[n])} = C_{h(\widetilde{T[n]})} =
                  C_{h(\widetilde{T[m]})} = C_{h'(T[m])}.
		      \]
		\itemin{2. Case:} $\delta = \Conv$. This case follows an analogous
		      proof-idea, the main difference being that semantic equalities need to
		      be replaced with syntactic ones. We show that $h'(T[n]) = h'(T[m])$. In
		      the case of $\content(T[n]) = \content(T[m])$, this follows by
		      definition. Otherwise, there exists an element in $\content(T[m])$ which
		      is not in $\content(T[n])$. Thus, in order for $C_{h'(T[n])}$ to
		      enumerate $\content(T[m])$, that is, $\content(T[m]) \subseteq
			      C_{h'(T[n])}$, it must hold that $\content(T[m]) \subseteq
			      C_{h(\widetilde{T[n]})}$. Then, since $h$ is (syntactically)
		      conservative, we have $h(\widetilde{T[n]}) = h(\widetilde{T[m]})$. In
		      particular, $h(\widetilde{T[m]})$ is consistent, meaning that $h'(T[m])
			      = h(\widetilde{T[m]})$. Altogether, we get
		      \[h'(T[n]) = h(\widetilde{T[n]}) = h(\widetilde{T[m]}) = h'(T[m]), \]
		      which concludes the proof.\qedhere
	\end{enumerate}
\end{proof}

This concludes the proof of Theorem~\ref{th:taucons}. We note that none of these
strategies work for, say, non-U-shaped learners as one may, by patching in
missing elements into the hypotheses or ``resetting'' them, accidentally produce
a hypothesis for the target language and later forget it again. Thus, it remains
an open question whether learning under such restrictions even allows for
consistent learning. We pose the following question.

\begin{problem}
Does $\delta \in \{ \NU, \SNU, \Dec, \SDec\}$ allow for consistent learning,
that is, for $\beta \in \{ \G, \Psd, \Sd\}$ and $\delta' \in \{ \Ex, \Bc\}$,
does it hold that
\begin{align*}
	{[\tau(\Cons)\Txt\beta\delta{\delta'}\IndR]}_\Ind =
	{[\totalCp\Txt\beta\delta{\delta'}\IndR]}_\Ind?
\end{align*}
\end{problem}

\section{Delayable Map for Learning Indexed Families}\label{Sec:DelMap}

In this section we compare the power of (possibly partial) learners following
various delayable learning restrictions to each other. First, we gather known
results from literature. It is a well-known fact that learners need time in
order to obtain full learning power, that is, set-driven learners lack learning
power. The following theorem holds.

\begin{theorem}[\cite{LZ96}] \label{thm:g-sd-ex}
	We have that
	\(
	{[\Txt\G\Ex\IndR]}_\Ind \setminus {[\Txt\Sd\Ex\IndR]}_\Ind\neq\emptyset.
	\)
\end{theorem}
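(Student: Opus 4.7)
The plan is to exhibit an indexed family $\mathcal{L}$ that is $\Gold$-$\Ex$-learnable but is defeated by every partial computable set-driven learner. The driving observation is that the hypothesis of a set-driven learner on a finite text $\sigma$ depends only on $\content(\sigma)$, while a $\Gold$-learner additionally has access to the order of presentation. Thus, any family containing two languages that share a common finite content prefix but are otherwise distinguishable only via the order in which certain markers appear is a candidate for the separation.

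Concretely, I would construct $\mathcal{L}$ by diagonalization against an enumeration $(h_e)_{e \in \N}$ of all (candidate) partial computable set-driven learners. For each index $e$ I design a gadget consisting of two languages $L_e^0, L_e^1 \in \La$ that agree on a finite content set $S_e$ (with the $S_e$ chosen pairwise disjoint across $e$, e.g., encoded on disjoint columns of $\N$), but have different continuations. If $h_e$ on content $S_e$ converges to a single hypothesis, then that hypothesis cannot simultaneously describe both $L_e^0$ and $L_e^1$, so at least one language in the gadget serves as a counterexample to $h_e$. If $h_e$ never converges on some prefix of the canonical text of one of the constructed languages, then $h_e$ already fails the $\Ex$-convergence requirement. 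Using the Operator Recursion Theorem, the indices of all $L_e^i$ can be fixed in advance of the diagonalization, which is exactly what is needed to guarantee that membership in $\mathcal{L}$ is uniformly decidable, i.e., that $\mathcal{L}$ is an indexed family.

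For the $\Gold$-$\Ex$-learner, I would invoke Theorem~\ref{thm:ind-fam} and build a learner that outputs $C$-indices: given any text, the learner reads off the column $e$ determining the current gadget (the column is uniquely determined by $\content(T)$) and then inspects the \emph{order} in which the distinguishing markers of $L_e^0$ versus $L_e^1$ appear in $T$ to choose the correct side of the gadget, outputting the corresponding characteristic-function index.

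The main obstacle is carrying out the diagonalization while preserving uniform decidability of $\mathcal{L}$. In particular, one has to ensure that the construction is effective even in cases where $h_e$ is partial: the gadget for $h_e$ must be definable \emph{unconditionally} on the convergence behavior of $h_e$ (since $\mathcal{L}$ must be fixed in advance), while its diagonalizing role needs to kick in only if $h_e$ actually behaves as a total set-driven learner. The ORT-based simultaneous definition of all $L_e^i$, together with the disjoint-column encoding that keeps gadgets independent, is the standard way to resolve this tension and yields the desired separation.
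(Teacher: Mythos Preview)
The paper does not prove this theorem; it is quoted from \cite{LZ96} without argument, so there is no in-paper proof to compare against. That said, your proposal contains a genuine conceptual error that would make the sketched argument fail.

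Your plan hinges on the claim that a $\G$-learner can exploit the \emph{order} in which elements appear, and that one can build gadgets $L_e^0,L_e^1$ ``distinguishable only via the order in which certain markers appear.'' But for any language $L$, \emph{every} permutation of its elements (with arbitrary repetitions and pauses) is an admissible text, and the learner must succeed on all of them; the order is chosen adversarially and thus carries no information about which language is being presented. The real advantage of a $\G$-learner over an $\Sd$-learner is access to \emph{time}: from $T[n]$ one knows $n$, so one can wait and perform $n$ steps of auxiliary computation, whereas the $\Sd$-learner seeing the same content $D$ must output the same hypothesis no matter how long it has waited. Because of this, your diagonalisation step does not force a failure either: if $L_e^0$ and $L_e^1$ merely share a content prefix $S_e$ but differ as sets, then on any text for $L_e^1$ the $\Sd$-learner will eventually see an element outside $S_e$ and is free to change its mind; nothing you wrote prevents that recovery. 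Likewise, your proposed $\G$-learner (``inspect the order of the markers'') cannot work, since the text could present the markers in any order.

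The classical separating construction (as in \cite{LZ96}) instead ties the target language to a quantity a $\G$-learner can approximate using time but an $\Sd$-learner cannot: e.g., for each $e$ one places in $\La$ a finite language on column $e$ whose extent depends on the running time of $\varphi_e(e)$, so a $\G$-learner can simulate $\varphi_e(e)$ for $n$ steps at stage $n$ and commit accordingly, while any $\Sd$-learner that learned the whole family would decide the halting problem from content alone. Your ORT/column scaffolding is fine as bookkeeping, but the mechanism must exploit time, not order.
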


Furthermore, in the literature monotonic learners have been investigated
thoroughly. Interestingly, a chain of inclusions is obtained. The following
theorem holds.

\begin{theorem}[\cite{LZ96}]
	We have that
	\begin{align*}
		{[\Txt\Sd\SMon\Ex\IndR]}_\Ind & = {[\Txt\G\SMon\Ex\IndR]}_\Ind \subsetneq
		{[\Txt\Sd\Mon\Ex\IndR]}_\Ind \subsetneq                                   \\ 
                                      &\subsetneq
		{[\Txt\G\Mon\Ex\IndR]}_\Ind \subsetneq {[\Txt\Sd\WMon\Ex\IndR]}_\Ind =    \\
		                              & = {[\Txt\G\WMon\Ex\IndR]}_\Ind.
	\end{align*}
\end{theorem}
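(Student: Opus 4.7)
The plan is to decompose the cited chain into five independent claims: two equalities at the endpoints, namely ${[\Txt\Sd\SMon\Ex\IndR]}_\Ind = {[\Txt\G\SMon\Ex\IndR]}_\Ind$ and ${[\Txt\Sd\WMon\Ex\IndR]}_\Ind = {[\Txt\G\WMon\Ex\IndR]}_\Ind$, plus three strict inclusions linking $\SMon$-$\G$ to $\Mon$-$\Sd$ to $\Mon$-$\G$ to $\WMon$-$\Sd$. The $\Sd \subseteq \G$ inclusions are immediate from $\Sd \preceq \G$, and the inclusions between restrictions come from $\SMon \Rightarrow \Mon \Rightarrow \WMon$. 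Thus the real work consists of two simulation arguments (for the equalities) and three diagonalizations (for the strictness).

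First I would handle the equalities. Using \cref{thm:ind-fam} I may work with $C$-indices and build a hypothesis space on the fly. For $\SMon$: given a $\G$-learner $h$ that $\SMon$-learns $\La$, I would assume (without loss of generality) that $h$ is strongly locking on every $L \in \La$, and then define an $\Sd$-learner $h'$ so that $h'(D)$ outputs a $C$-index enumerating the union of $C_{h(\sigma)}$ over those finite sequences $\sigma$ with $\content(\sigma) \subseteq D$ that look like locking sequences in a bounded search. Strong monotonicity of $h$ together with the locking property forces this union to stabilize at $L$ in the limit and to be monotone in $D$. The argument for $\WMon$ is analogous, using that weak monotonicity only constrains growth while consistent, so any apparent violation on inconsistent inputs is free.

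For the three strict inclusions I would exhibit explicit indexable witness families. For $\SMon\text{-}\G \subsetneq \Mon\text{-}\Sd$, one takes a family whose correct hypotheses on certain targets must drop some elements that never belong to the target (prohibited by $\SMon$ but allowed by $\Mon$, since only non-target elements are dropped). For $\Mon\text{-}\Sd \subsetneq \Mon\text{-}\G$, one uses a recursion-theoretic diagonalization against every candidate $\Sd$-learner simultaneously, building an indexable family where the order of presentation must be tracked to stay monotone, and a matching $\G$-learner that exploits the ordering. For $\Mon\text{-}\G \subsetneq \WMon\text{-}\Sd$, one designs a family forcing every $\Mon$-learner to eventually include some element that is missing from the actual target, so the learner must later drop it; a $\WMon$-learner sidesteps this because the offending hypothesis becomes inconsistent (thus free to shrink), while any $\Mon$-learner is trapped.

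The main obstacle will be the middle separation $\Mon\text{-}\Sd \subsetneq \Mon\text{-}\G$: one must diagonalize against every $\Sd$-learner while keeping the witness family uniformly decidable and simultaneously supplying a $\G$-learner that remains $\Mon$ on it. This is typically delicate and requires a careful construction via Kleene's Recursion Theorem (or the $\ORT$) to coordinate the index of the diagonalizing language with the behavior of the candidate learner on its canonical content. Once the three separations and the two simulation arguments are in place, chaining them with the trivial inclusions yields the stated sequence of equalities and strict inclusions.
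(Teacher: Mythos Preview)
The paper does not prove this theorem at all: it is stated with the citation \cite{LZ96} and no proof is given, as the result is quoted from the literature. There is therefore no ``paper's own proof'' to compare your proposal against; your outline is an attempt to supply a proof where the paper deliberately defers to Lange and Zeugmann. If your goal is only to match the paper, you should simply cite \cite{LZ96} as the paper does.
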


We remark that weak monotonicity as well as conservativeness is no restriction
to set-driven learners \cite{LZ96}. We expand this result by showing that
set-driven learners may be assumed to be even witness-based. This way, we also
capture the remaining restrictions, such as (target-) cautiousness and (strong)
decisiveness. To obtain the desired result, we first show that target-cautious
and witness-based Gold-style learners acquire the same learning power. The idea
is that, as target-cautious learners never overgeneralize the target language,
there always remain elements as witnesses to justify a mind change if the
current hypothesis is wrong. We obtain the following result.

\begin{theorem}
	We have that ${[\Txt\G\Wb\Ex\IndR]}_\Ind = {[\Txt\G\CautTar\Ex\IndR]}_\Ind$.
\end{theorem}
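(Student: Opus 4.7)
For the easier inclusion $[\Txt\G\Wb\Ex]_\Ind \subseteq [\Txt\G\CautTar\Ex]_\Ind$, I would argue directly from the definitions: any $\Wb\Ex$-learner is $\CautTar$ because, if some output at time $i$ satisfied $H_{p(i)} \supsetneq L$, then $\Ex$-convergence yields a later time $n_0$ with $H_{p(n_0)} = L \neq H_{p(i)}$, so $p(i) \neq p(n_0)$, and the $\Wb$ condition at $j = n_0$ against $i$ would demand an element of $\content(T[n_0]) \cap L \setminus H_{p(i)}$, which is empty since $\content(T[n_0]) \subseteq L \subsetneq H_{p(i)}$.

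For the converse, given a $\Txt\G\CautTar\Ex$-learner $h$ for $\La$, the plan is to build a $\Wb$-learner $h'$ that mimics $h$ but changes its mind only when forced by inconsistency. First I would combine \cref{thm:g-wlog-total} (using that $\CautTar$ is delayable), \cref{th:taucons}, and \cref{Thm:ind-fam} to assume $h$ is a total $\tau(\Cind\Cons\CautTar)\Txt\G\Ex_C$-learner, so in particular $h$ outputs $C$-indices consistently. I then set
\begin{align*}
    h'(\varepsilon) = h(\varepsilon), \qquad h'(\sigma x) = \begin{cases} h'(\sigma), & \falls \content(\sigma x) \subseteq C_{h'(\sigma)}; \\ h(\sigma x), & \otw. \end{cases}
\end{align*}
By induction $h'$ is total computable and outputs $C$-indices, since the inclusion test is decidable on $C$-indices.

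The main task is verifying $\Wb$. Let $t_0 = 0 < t_1 < t_2 < \ldots$ be the times at which $h'$ adopts a new hypothesis along $T \in \Txt(L)$, and set $e_a = h'(T[t_a])$. At each transition $t_a$ with $a \geq 1$, the construction supplies a witness $y_a \in \content(T[t_a]) \setminus C_{e_{a-1}}$, and consistency of $h$ places $y_a \in C_{h(T[t_a])} = C_{e_a}$. The crux, which I expect to be the main obstacle, is \emph{witness persistence}: for any $b \geq a$, since $e_b = h(T[t_b])$ and $t_b \geq t_a$, consistency of $h$ gives $y_a \in \content(T[t_a]) \subseteq \content(T[t_b]) \subseteq C_{e_b}$, so $y_a$ witnesses $e_b$ against $e_{a-1}$ at every time $j \geq t_b$. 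This simultaneously shows $C_{e_b} \neq C_{e_{a-1}}$, ruling out the semantic repetition among $h'$'s outputs that would otherwise make $\Wb$ impossible. For $\Ex$-convergence, once $h$ has stabilized at $e^*$ with $C_{e^*} = L$, any further mind change by $h'$ must be to $e^*$ and is permanent since subsequent data lies in $L = C_{e^*}$; and if $h'$ makes only finitely many changes, stopping at some hypothesis $e$, then $L \subseteq C_e$ and target-cautiousness of $h$ forces $C_e = L$. Translating back through \cref{Thm:ind-fam} then yields $\La \in [\Txt\G\Wb\Ex]_\Ind$.
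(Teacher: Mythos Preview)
Your proposal is correct and follows essentially the same approach as the paper: the same reductions (\cref{thm:g-wlog-total}, \cref{Thm:ind-fam}, \cref{th:taucons}) to a total consistent $\tau(\Cind)\CautTar$-learner, the same conservative-update construction of $h'$, and the same two-case convergence analysis via target-cautiousness. Your treatment of $\Wb$ via explicit ``witness persistence'' through consistency is in fact more carefully spelled out than the paper's, which dismisses that step as immediate.
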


\begin{proof}
	The inclusion \( {[\Txt\G\Wb\Ex\IndR]}_\Ind \subseteq
	{[\Txt\G\CautTar\Ex\IndR]}_\Ind \) is straightforward. For the other, by
	assuming that $\G$-learner are total, see \cref{thm:g-wlog-total}, and by
	\cref{Thm:ind-fam}, it suffices to show
	\[
		{[\tau(\Cind)\Txt\CautTar\G\Ex_C]} \subseteq
		{[\tau(\Cind)\Txt\G\Wb\Ex_C]}.
	\]
	Let \(h\) be a learner with $\La\subseteq \tau(\Cind)\Txt\G\CautTar\Ex_C(h)$.
	Using \cref{th:taucons}, we can assume that \(h\) is consistent, i.e.,
	$\La\subseteq\tau(\Cind\Cons)\Txt\CautTar\G\Ex_C(h)$. We now prove that the following
	learner \(h'\) is a \(\tau(\Cind)\Txt\G\Wb\Ex_C\)-learner for \(\La\). Let
	\(h'(\varepsilon)=h(\varepsilon)\) and, for all finite
	\(\sigma\in\Seq\) and \(x\in\N\), let
	\begin{align*}
		h'(\sigma\concat x) \coloneqq \begin{cases} 
            h'(\sigma),         & \cIf x \in C_{h'(\sigma)}; \\
			h(\sigma\concat x), & \otw.
		\end{cases}
	\end{align*}
    Intuitively, \(h'\) only updates its hypothesis if the latest datum may be
    used as a witness for a mind change. As $h$ is also consistent, we
    immediately have that $h'$ is witness-based.
	Furthermore, note that \(h'\) outputs a \(C\)-index on every input and thus is a
	\(\tau(\Cind)\)-learner.

	It remains to be shown that \(h'\) learns \(\La\).
	To that end, let $L \in \mathcal{L}$ and $T\in\Txt(L)$. Since $h$ correctly
    learns $L$ there exists $n_0 \in \N$ such that, for all \(n\geq n_0\), we
    have $h(T[n]) = h(T[n_0])$ and
	$\varphi_{h(T[n])}=\chi_{L}$. We distinguish the following cases.
	\begin{enumerate}
        \itemin{1. Case:} $h'(T[n_0])$ is a $C$-index for $L$. In this case, $L
            \setminus C_{h'(T[n_0])} = \emptyset$, and thus $h'$ cannot change
            its mind again. Thus, it converges correctly.
        \itemin{2. Case:} $h'(T[n_0])$ is no $C$-index for $L$. As $h$ is
            target-cautious and $h'$ mimics $h$, it cannot hold $L \subsetneq
            C_{h'(T[n_0])}$. Thus, there exists $x \in L$ with \(x\notin
            C_{h'(T[n_0])}\). By definition of $h'$ and by consistency of $h$,
            we have $x \notin \content(T[n_0])$. Let $n_1$ be such that $x \in
            \content(T[n_1])$. Then, by construction, for all \(n \geq n_1\), we
            have that \(h'(T[n])=h(T[n_1])\), which is a \(C\)-index for \(L\).
            \qedhere
	\end{enumerate}
\end{proof}

This equality also includes weakly monotone learners. Thus, we already have that
these are as powerful as set-driven learners. However, we go one step further
and show that these learners may even be assumed total. We make use of
Theorems~\ref{thm:ind-fam} and~\ref{thm:g-wlog-total}. The idea is to mimic the
Gold-style learner on the minimal, consistent hypothesis. This way,
target-cautiousness is preserved as well as learning power. The latter works out
as no guess overgeneralizes the target language and, thus, checking for
consistency is a valid strategy. The following theorem holds.

\begin{theorem}
	We have that 
    ${[\totalCp\Txt\Sd\CautTar\Ex\IndR]}_\Ind = {[\Txt\G\CautTar\Ex\IndR]}_\Ind$.
\end{theorem}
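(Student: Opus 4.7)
The inclusion $[\totalCp\Txt\Sd\CautTar\Ex]_\Ind \subseteq [\Txt\G\CautTar\Ex]_\Ind$ is immediate since $\Sd \preceq \G$. For the converse, the plan is to chain the three normal-form tools already established: \cref{thm:g-wlog-total} lets us assume the given Gold-style learner is total, \cref{Thm:ind-fam} lets us transition to the $\tau(\Cind)$-framework (so the learner outputs a $C$-index on every input and correctness is evaluated against $C$-indices), and \cref{th:taucons} (in which $\CautTar$ is among the admitted $\delta$) additionally buys consistency. Hence it suffices to build, from a total consistent target-cautious $\tau(\Cind)\Txt\G\Ex_C$-learner $h$ of an indexed family $\La$, a total target-cautious $\tau(\Cind)\Txt\Sd\Ex_C$-learner $h'$ for $\La$.

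The construction is to mimic $h$ on the earliest sequence compatible with the current input set. For a finite $D$, define
\[
    h'(D) \coloneqq h(\sigma^\ast(D)),
\]
where $\sigma^\ast(D)$ is the $\leq$-minimal finite sequence $\sigma$ satisfying $\content(\sigma) \subseteq D$ and $D \subseteq C_{h(\sigma)}$; if the dovetailed search fails to find such a $\sigma$ within some effort bound, we fall back on, say, $\ind(D)$, which does not affect correctness on $\La$. Both conditions in the search are decidable (since $h$ is $\tau(\Cind)$), so $h'$ is well defined and, with the fallback, total.

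To verify target-cautiousness on $L \in \La$, note that whenever $D \subseteq L$ we have $\content(\sigma^\ast(D)) \subseteq L$, so $\sigma^\ast(D)$ extends to a text for $L$, and $\CautTar$ of $h$ yields $\neg(L \subsetneq C_{h(\sigma^\ast(D))})$. For $\Ex$-convergence along a text $T$ of $L$, let $\sigma_\infty$ be the $\leq$-minimal $\sigma$ with $\content(\sigma) \subseteq L$ and $C_{h(\sigma)} = L$ (existence follows by taking any text for $L$ on which $h$ eventually stabilises at a $C$-index for $L$). For large enough $n$, the set $D_n = \content(T[n])$ contains $\content(\sigma_\infty)$, so $\sigma_\infty$ becomes a candidate; and each of the finitely many $\sigma < \sigma_\infty$ is eventually disqualified, either because $\content(\sigma) \not\subseteq L$ (so ultimately $\content(\sigma) \not\subseteq D_n$) or because $\CautTar$ of $h$ combined with $C_{h(\sigma)} \neq L$ forces an element of $L \setminus C_{h(\sigma)}$ which eventually appears in $D_n$.

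The main obstacle is this stabilization argument: we need to exclude every $\sigma < \sigma_\infty$ by a single point in time, which relies on the finiteness of that initial segment of the enumeration and on using $\CautTar$ of $h$ precisely to rule out the delicate case $L \subsetneq C_{h(\sigma)}$. A secondary technical point, handled by the bounded-search fallback, is ensuring $h'$ is total on inputs $D$ that do not occur in any text for a language in $\La$.
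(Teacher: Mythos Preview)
Your approach is essentially the paper's: the same three reductions (\cref{thm:g-wlog-total}, \cref{Thm:ind-fam}, \cref{th:taucons}), followed by a set-driven learner that mimics $h$ on a $\leq$-minimal sequence whose content lies in $D$ and whose $h$-hypothesis already covers $D$. The convergence argument via target-cautiousness is also the same idea. The paper's only real difference is that it restricts the search to the \emph{canonical prefixes} $\sigma_D[0],\ldots,\sigma_D[|D|]$ of $D$, which is a finite search space and hence trivially terminating.

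There is one genuine wrinkle in your write-up: the ``effort bound'' fallback. As stated, it is both unnecessary and potentially fatal. It is unnecessary because $h$ is $\tau(\Cons)$, i.e.\ consistent on \emph{every} input, so for every finite $D$ the canonical sequence $\sigma_D$ of $D$ satisfies $\content(\sigma_D)=D\subseteq D$ and $D\subseteq C_{h(\sigma_D)}$; hence a candidate always exists and the $\leq$-minimal search halts unconditionally. It is potentially fatal because, taken literally, a bounded search followed by the fallback $\ind(D)$ gives no guarantee that $\sigma_\infty$ is ever reached, so your $\Ex$-convergence argument would collapse (you would have to argue that the bound eventually exceeds the code of $\sigma_\infty$, which you do not). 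Simply drop the fallback and note that consistency guarantees termination; then the rest of your argument goes through verbatim.
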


\begin{proof}
	The inclusion \({[\totalCp\Txt\Sd\CautTar\Ex\IndR]}_\Ind
	\subseteq {[\Txt\G\CautTar\Ex\IndR]}_\Ind \) is immediate. For the other one,
    as $\G$-learner may be assumed total, see \cref{thm:g-wlog-total}, and by
    Theorem~\ref{Thm:ind-fam}, it suffices to show
	\[
		{[\tau(\Cind)\Txt\G\CautTar\Ex_C]} \subseteq
		{[\tau(\Cind)\Txt\Sd\CautTar\Ex_C]}.
	\]
	Let $h$ be a learner and let $\La\subseteq
		\tau(\Cind)\Txt\G\CautTar\Ex_C(h)$. By Theorem~\ref{th:taucons}, we may assume
	$h$ to be consistent. For a finite set $D$, for $k \leq \abs{D}$, let $\sigma_D[k]$
	be the canonical sequence of $D$ of length $k$, that is, the sequence of the
	first $k$ elements in $D$ in strictly ascending order, and define the
	$\Sd$-learner $h'$ as
	\[
		h'(D) = h(\sigma_D[\min\{k' \leq |D| \mid D \subseteq C_{h(\sigma_D[k'])} \}]).
	\]
	Mimicking learner $h$, the newly defined learner $h'$ is target-cautious
	whenever $h$ is and it always outputs $C$-indices. It remains to be shown that
	$h'$ $\Sd\Ex_C$-learns $\La$. Let, to that end, $L \in \La$. We distinguish the
	following cases.
	\begin{enumerate}
        \itemin{1. Case:} $L$ is finite. Let $k_0\in\N$ be the minimal $k' \leq
            |L|$ such that $L \subseteq C_{h(\sigma_L[k'])}$. By consistency of
            $h$, such $k'$ exists. Then, we have by $h$ being target-cautious
            that $\neg (L \subsetneq C_{h(\sigma_L[k_0])})$. Altogether, we have
		      \[
			      C_{h'(L)} = C_{h(\sigma_L[k_0])} = L.
		    \]
		\itemin{2. Case:} $L$ is infinite. Then, consider the canonical text $T$ of
            $L$. As $h$ learns $L$, there exists a minimal $n_0\in\N$ such that
            $C_{h(T[n_0])} = L$.  By target-cautiousness of $h$ and minimal
            choice of $n_0$, there exists $n_1 \geq n_0$ such that for all $n <
            n_0$ we have
		      \[
			      \content(T[n_1]) \setminus C_{h(T[n])}.
		      \]
              Then, for all $D$ with $\content(T[n_1]) \subseteq D \subseteq L$,
              we have $h'(D) = h(T[n_0])$ as desired. \qedhere
	\end{enumerate}
\end{proof}

Again with Theorem~\ref{thm:ind-fam}, we obtain that set-driven learners may be
assumed total and witness-based. The idea resembles the approach for partially
set-driven learners of arbitrary classes of languages \cite{KS16}. To obtain
this, we assume the information coming in a certain order and then mimic the
learner on the least input where no mind change is witnessed. Then, while
enumerating, we check whether any later datum causes a mind change. If so, we
stop the enumeration. Especially here, Theorem~\ref{thm:ind-fam} comes in handy
as we do not need to fix the hypothesis space beforehand, but rather build it up
on the fly. The following result holds.

\begin{theorem}%
	\label{Thm:SdWb}
	We have that $[\totalCp\Txt\Sd\Wb\Ex]_\Ind = [\totalCp\Txt\Sd\CautTar\Ex]_\Ind$.
\end{theorem}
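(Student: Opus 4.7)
The forward inclusion $[\totalCp\Txt\Sd\Wb\Ex]_\Ind \subseteq [\totalCp\Txt\Sd\CautTar\Ex]_\Ind$ follows because set-driven witness-based $\Ex$-learners are automatically target-cautious. If $h$ $\Sd\Wb\Ex$-learns some $L$ with respect to a hypothesis space $\Ha$ and some content $\content(T[i])$ on a text $T \in \Txt(L)$ yielded $L \subsetneq H_{h(\content(T[i]))}$, then at the later stage $j$ where $h$ locks onto a syntactically correct index for $L$ we would have $h(\content(T[i])) \neq h(\content(T[j]))$, and the required witness in $\content(T[j]) \cap H_{h(\content(T[j]))} \setminus H_{h(\content(T[i]))} \subseteq L \setminus H_{h(\content(T[i]))} = \emptyset$ does not exist.

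For the reverse inclusion, I apply \cref{Thm:ind-fam} to reduce the claim to showing $[\tau(\Cind)\Txt\Sd\CautTar\Ex_C] \subseteq [\tau(\Cind)\Txt\Sd\Wb\Ex_C]$. Let $h$ be a $\tau(\Cind)\Txt\Sd\CautTar\Ex_C$-learner for $\La$; by \cref{th:taucons} I may additionally assume $h$ is consistent, so $D \subseteq C_{h(D)}$ for every finite $D$. The goal is then to build a set-driven witness-based $\tau(\Cind)$-learner $h'$ that $\Ex_C$-learns $\La$.

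The construction follows the hint given immediately before the theorem: impose the canonical order $x_1 < \cdots < x_n$ on $D$, form the canonical chain $D_k \coloneqq \{x_1, \dots, x_k\}$, and use the S-m-n theorem to define a $C$-index $h'(D)$. Its characteristic function, on input $y$, searches for the least stable index $k$ along the chain (the smallest $k$ such that $h(D_k) = h(D_j)$ for all $k \leq j \leq n$) and then returns $\varphi_{h(D_k)}(y)$, except that the procedure simultaneously scans the chain and refuses to commit $y$ whenever a subsequent chain step would produce a hypothesis change not witnessed by a new element entering the updated hypothesis. Since $h$ is $\tau(\Cind)$, all these checks are decidable, so $h'$ remains a total $\tau(\Cind)$-learner; consistency gives $D \subseteq C_{h'(D)}$, and target-cautiousness is inherited from $h$ because $C_{h'(D)} \subseteq C_{h(D_k)}$.

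The main obstacle is verifying that $h'$ satisfies $\Wb$. Set-driven learners see only sets, yet $\Wb$ is a property of hypothesis sequences, so I must show that for any $D \subsetneq D'$ arising along a text for some $L \in \La$, a syntactic mind change $h'(D) \neq h'(D')$ is always accompanied by an element of $D' \cap C_{h'(D')} \setminus C_{h'(D)}$. The canonical chain of $D'$ extends that of $D$ by interleaving the elements of $D' \setminus D$ at their canonical positions; consistency of $h$ combined with the stop-on-unjustified-change step in the construction then ensures that any hypothesis change recorded during the scan is attributable to a genuinely new element of $D' \setminus D$, which serves as the required witness. Convergence to a correct $C$-index on texts for $L \in \La$ follows because $h$ eventually locks onto a correct hypothesis along the canonical chain, trivially passing the stability check and causing $h'$ to output the same correct index.
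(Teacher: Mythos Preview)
Your proposal follows the same high-level plan as the paper (reduce via \cref{Thm:ind-fam}, use the canonical ordering of $D$, locate a ``stable'' prefix), but as written it is not a proof: the construction is underspecified and the two main verifications do not go through.

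\textbf{The construction is not well-defined.} You say the characteristic function of $h'(D)$ ``returns $\varphi_{h(D_k)}(y)$, except that the procedure \ldots\ refuses to commit $y$ whenever a subsequent chain step would produce a hypothesis change not witnessed by a new element entering the updated hypothesis.'' It is unclear what ``refuses to commit'' means for a total $\{0,1\}$-valued function, which ``subsequent chain step'' is meant once $k$ is already the stable index, and how this clause is compatible with your claim that $D\subseteq C_{h'(D)}$. The paper's construction is concrete here: it outputs either a canonical index for the finite set $D[k_D]$ (when an ``out of order'' element is detected) or an explicit forward-search index $s(D[k_D])$; both cases are needed later.

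\textbf{The $\Wb$ argument misses the main difficulty.} For $D\subsetneq D'$ the canonical chain of $D'$ is \emph{not} an extension of the canonical chain of $D$: an element of $D'\setminus D$ smaller than $\max(D)$ is interleaved and all later sets in the chain change. Hence your ``least stable index'' for $D$ and for $D'$ are computed along different chains and can be unrelated, so the sentence ``any hypothesis change recorded during the scan is attributable to a genuinely new element of $D'\setminus D$'' does not establish the required witness in $D'\cap C_{h'(D')}\setminus C_{h'(D)}$. The paper deals with exactly this by (i) defining $k_D$ as stability over \emph{all} $D'$ with $D[k]\subseteq D'\subseteq D$, not merely along the chain, and (ii) deliberately outputting $\ind(D[k_D])$ when an element below $\max(D[k_D])$ lies in $C_{h(D[k_D])}\setminus D[k_D]$, so that this element survives as a witness for a later mind change. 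Neither ingredient is present in your sketch.

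\textbf{Syntactic convergence is not argued.} Even granting the construction, you need $h'(D)$ to be \emph{syntactically} constant on all sufficiently large finite $D\subseteq L$. Since the stable prefix $D_k$ depends on which elements of $L$ happen to lie in $D$, different large $D$ can yield different $D_k$ and hence different codes; ``$h$ eventually locks onto a correct hypothesis along the canonical chain'' does not settle this. In the paper this is handled by choosing $D_1$ large enough that all failures of earlier prefixes to be locking are already witnessed inside $D_1$, which pins down $D[k_D]$ uniformly for all $D\supseteq D_1$.
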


\begin{proof}
    The direction $[\totalCp\Txt\Sd\Wb\Ex]_\Ind \subseteq
    [\totalCp\Txt\Sd\CautTar\Ex]_\Ind$ follows immediately. For the other, by
    Theorem~\ref{Thm:ind-fam}, it suffices to show that
	\[
		[\tau(\Cind)\Txt\Sd\CautTar\Ex_C] \subseteq [\tau(\Cind)\Txt\Sd\Wb\Ex_C].
	\]
    Let $h$ $\tau(\Cind)\Txt\Sd\CautTar\Ex_C$-learn $\La$. We define the desired
    witness-based learner $h'$. Given a finite set $D$ and $k \leq |D|$, let
    $D[k]$ be the set of the first $k$ elements (in ascending order) in $D$, and
    define
	\[
        k_D = \min\{ k \leq |D| \mid \forall D', D[k] \subseteq D' \subseteq D
        \colon h(D') = h(D) \}.
	\]
    That is, $D[k_D]$ contains the minimal amount of elements of $D$ in
    ascending order where no mind change is witnessed. Furthermore, for any
    finite set $D$, define
	\[
		\varphi_{s(D)}(x) = \begin{cases}
			1, & \falls x \in D;\\
            0, & \sonstfalls \varphi_{h(D)}(x) = 0;\\
            1, & \sonstfalls \forall D', D \subseteq D' \subseteq D \cup
                C_{h(D)}^x \colon h(D) = h'(D); \\
			0, & \sonst.
		\end{cases}
	\]
    Note that for any locking set $D$ of some language $L$, we have $C_{s(D)} =
    L$. Then, for any finite set $D$, we define the learner
	\[
		h'(D) = \begin{cases}
            \ind(D[k_D]), & \falls \exists x < \max(D[k_D]), x \notin D[k_D]
                \colon \varphi_{h(D[k_D])}(x) = 1; \\
			s(D[k_D]),    & \sonst.
		\end{cases}
	\]
    Intuitively, the learner first searches the minimal amount of elements where
    no mind change is witnessed. Then, given $D[k_D]$, if the learner on input
    $D[k_D]$ witnesses an element to be (possibly) out of order, it outputs
    $\ind(D[k_D])$. This way, we keep this element as a witness for a possible,
    later mind change. Otherwise, the learner outputs $s(D[k_D])$ which conducts
    a forward search and enumerates all elements where no mind change is
    witnessed.

    Formally, we first show that $h'$ learns $\La$ correctly. Let therefore $L
    \in \La$. We distinguish the following cases.
	\begin{enumerate}
		\itemin{1. Case:} $L$ is finite. Here, $h(L)$ is a correct conjecture and,
		      thus, $h(L[k_L])$ as well. Note that, in particular, $L[k_L]$ is a
		      locking set for $L$. As there exists no $x < \max(L[k_L])$ with
              \( \varphi_{h(L[L_k])}(x)=1\), we have that $h'(L[k_L]) =
              s(L[k_L])$. Since $L[k_L]$ is a
		      locking set, $s(L[k_L])$ is a correct conjecture.
		\itemin{2. Case:} $L$ is infinite. Let $T_c$ be the canonical text for $L$ and let
		      $n_0\in\N$ be minimal such that $D_0 \coloneqq \content(T_c[n_0])$ is a
		      locking set for $h$ on $L$. As, for all $n < n_0$, $\content(T_c[n])$ is
		      no locking set, there exists some $x_n \in L$ where this is witnessed. Let
              $x_{\max} = \max\{ x_n \mid n < n_0 \}$ and let $n_1 \geq n_0$
              such that, for $D_1 \coloneqq \content(T_c[n_1])$, we have
              $x_{\max} \in D_1$. Then, for any $D$ with $D_1 \subseteq D
              \subseteq L$, we have that $h'(D) = h'(D_1)$. As $D_1$ is a
              locking set, we have that $h'(D_1) = s(D_1)$ is a correct
              hypothesis.
	\end{enumerate}
    Lastly, we show that $h'$ is witness-based. Let, to that end, $D_1 \subseteq
    D_2 \subseteq D_3 \subseteq L$ such that $h'(D_1) \neq h'(D_2)$. We show
    that
	\(
	(C_{h'(D_3)} \cap D_3) \setminus C_{h'(D_1)} \neq \emptyset.
	\)
    For $i \in \{ 1,2,3\}$, let $k_i \coloneqq k_{D_i}$ and let $D'_i \coloneqq
    D[k_i]$. Then, as $h'(D_i) = h'(D'_i)$, it suffices to show
	\[
		(C_{h'(D'_3)} \cap D_3) \setminus C_{h'(D'_1)} \neq \emptyset.
	\]
	We distinguish the following cases.
	\begin{enumerate}
        \itemin{1. Case:} $D_1' = D_3'$. In particular, $D_1' = D_2' = D_3'$.
            Then, $h'(D_1) = h'(D_2)$, a contradiction to the initial
            assumption.
		\itemin{2. Case:} $D_3' \setminus D_1' \neq \emptyset$. Let $x$ be a maximal such
            element. Either, $x \notin C_{h(D_1')}$ and, thus by
            Condition~\eqref{Bed:Ccons}, it will not be considered when
            enumerating $C_{h'(D_1')}$. Otherwise, $x \notin C_{h'(D_1')}$ as
            it either is smaller than
		    $\max(D_1')$ or it will not be enumerated by $s(D_1')$ as it witnesses a
		    mind change.
        \itemin{3. Case:} $D_1' \setminus D_3' \neq \emptyset$. If $D_3'
            \subseteq D_1'$, then, as $D_1 \subseteq D_3$, the minimality of
            $k_1$ is violated. Thus, it also holds that $D_3' \setminus D_1'
            \neq \emptyset$, and we proceed just as in the previous case.
            \qedhere
	\end{enumerate}
\end{proof}

This closes the study of set-driven learners following delayable learning
restrictions. It remains to be shown that Gold-style learners may be assumed
strongly decisive. We do so in two steps. First, we show that unrestricted
learners may be assumed strongly non-U-shaped in general. The idea is to search
for locking sequences. If we witness that the current sequence is not locking,
we \emph{poison} the produced hypothesis \cite{CK16}. We can do so, as indexed
families provide a decision procedure to check whether $x \in L_i$ or not. When
poisoning, we simply output a hypothesis contradicting all of the given
languages. Note that, by Theorem~\ref{thm:ind-fam}, we may construct poisoned
hypotheses on the fly.The following theorem holds.

\begin{theorem}
	We have that ${[\Txt\G\SNU\Ex\IndR]}_\Ind = {[\Txt\G\Ex\IndR]}_\Ind$.
\end{theorem}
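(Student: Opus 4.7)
The easy inclusion $[\Txt\G\SNU\Ex\IndR]_\Ind \subseteq [\Txt\G\Ex\IndR]_\Ind$ holds by definition of $\SNU$. For the converse, I plan to apply \cref{thm:g-wlog-total,Thm:ind-fam} so that it suffices to prove $[\tau(\Cind)\Txt\G\Ex_C]\subseteq[\tau(\Cind)\Txt\G\SNU\Ex_C]$. Fixing a total $\tau(\Cind)\Txt\G\Ex_C$-learner $h$ for $\La$, the strategy---following the poisoning paradigm of~\cite{CK16}---is to construct $h'$ by running, at each step, a bounded locking-sequence search and emitting either $h(\sigma)$ (if the search finds no counterexample) or an on-the-fly poisoned $C$-index (if it does).

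Concretely, at input $\sigma$ the learner $h'$ enumerates, within $|\sigma|$ computation steps, all $\tau$ with $|\tau|\leq|\sigma|$ and $\content(\tau)\subseteq\content(\sigma)$, testing whether $h(\sigma\concat\tau)\neq h(\sigma)$. If no such witness is uncovered, set $h'(\sigma)\coloneqq h(\sigma)$; otherwise, via the S-m-n theorem, let $h'(\sigma)$ be a $C$-index of a language that omits some element of $\content(\sigma)$ (for instance $\emptyset$, or $\content(\sigma)\setminus\{\min \content(\sigma)\}$ when $\content(\sigma)\neq\emptyset$). Because any target $L\in\La$ compatible with $\sigma$ satisfies $\content(\sigma)\subseteq L$, the poisoned language is guaranteed to be semantically distinct from~$L$, and effectivity is immediate since we work with decidable characteristic functions. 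Totality and the $\Cind$ property then follow by construction. For correctness on a text $T$ for $L$, note that $h$ has a locking prefix $T[n_0]$, so for $n\geq n_0$ no counterexample exists, the search fails, and $h'(T[n])=h(T[n])=c$ for the converged $C$-index $c$ of $h$, giving $\Ex_C$-convergence.

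The principal hurdle is securing $\SNU$. A pre-convergence $T[n]$ (with $n<n_0$) may pass the bounded test merely because every witnessing $\tau$ needs data from $L$ not yet present in $\content(T[n])$, while $h(T[n])$ happens to represent $L$ semantically with a $C$-index possibly distinct from~$c$. The resulting correct--poisoned--correct pattern on~$T$ violates $\SNU$. The plan to eliminate this is to harden the emission rule, so that $h'(\sigma)$ copies $h(\sigma)$ only once $h$ has additionally been syntactically stable on all sufficiently long prefixes of~$\sigma$---for instance, requiring $h(\sigma')=h(\sigma)$ for every $\sigma'\subseteq\sigma$ with $|\sigma|/2\leq|\sigma'|$. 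Any accidental pre-convergence success is then falsified as soon as additional data arrives, so the refined test stabilises from $T[n_0]$ onward at the fixed value~$c$; every earlier emission is either poisoned (hence semantically unequal to~$L$) or already syntactically identical to~$c$. This yields a unique semantically correct emission on~$T$, which suffices for~$\SNU$.
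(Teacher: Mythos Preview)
Your reduction via \cref{thm:g-wlog-total,Thm:ind-fam} and the general poisoning idea are fine and match the paper's setup. The gap lies in the hardening step, which does not deliver the conclusion you claim. Consider $\La=\{\N\}$ and a total $\tau(\Cind)$-learner $h$ with $h(\sigma)=a$ if $100\notin\content(\sigma)$ and $h(\sigma)=c$ otherwise, where $a\neq c$ are both $C$-indices for $\N$. On the canonical text $T=0,1,2,\dots$ your bounded extension test at $T[50]$ only probes $\tau$ with $\content(\tau)\subseteq\{0,\dots,49\}$, all of which still yield $a$; and the prefix-stability test $h(T[n'])=h(T[50])$ for $25\le n'\le 50$ also passes. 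Hence $h'(T[50])=a$, a semantically correct guess. Later, around $T[150]$, the prefix test fails and you poison; from $T[202]$ on, both tests pass again and $h'$ outputs $c$. You thus obtain $h'(T[50])=a$ and $h'(T[300])=c$, both semantically equal to $L$, with $a\neq c$: this already violates $\SNU$ (take $i=50$, $j=k=300$). The assertion that ``every earlier emission is either poisoned or already syntactically identical to~$c$'' is therefore false; the prefix-stability window cannot see future mind changes triggered by data not yet present.

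The paper avoids this by never emitting $h(\sigma)$ directly. Instead it wraps every output in a single function $s$ and emits $s(\min(M(\sigma)))$, where $M(\sigma)$ collects prefixes that still look like (semantic) locking sequences and $s(\sigma')$ is a $C$-index that agrees with $h(\sigma')$ until a non-locking witness is found and thereafter diagonalises against every $L_i\in\La$. Assuming $h$ strongly locking, the minimal genuine locking prefix $T[n_0]$ eventually becomes $\min(M(T[n]))$, so $h'$ converges syntactically to the single value $s(T[n_0])$; and for every $m<n_0$ the index $s(T[m])$ is provably not a $C$-index for $L$ (either $h(T[m])$ is already wrong, or the self-poisoning kicks in). Thus every output of $h'$ is either semantically wrong or syntactically equal to $s(T[n_0])$, which is exactly what $\SNU$ needs. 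The crucial ingredients you are missing are (i) a canonical correct output (here $s$ applied to the \emph{minimal} locking prefix), and (ii) poisoning that is embedded in the hypothesis itself and eventually contradicts \emph{all} languages of $\La$, not merely the currently visible data.
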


\begin{proof}
	The inclusion ${[\Txt\G\SNU\Ex\IndR]}_\Ind \subseteq {[\Txt\G\Ex\IndR]}_\Ind$
	is immediate. For the other direction, note that $\G$-learners may be assumed
    total by \cref{thm:g-wlog-total}. Thus, by Theorem~\ref{Thm:ind-fam}, it
    suffices to show~that
	\[
		{[\tau(\Cind)\Txt\G\Ex_C]} \subseteq {[\tau(\Cind)\Txt\G\SNU\Ex_C]}.
	\]
	Let $h$ be a $\tau(\Cind)\Txt\G\Ex_C$-learner and let
	$\La\subseteq
		\tau(\Cind)\Txt\G\Ex_C(h)$. We provide a $\tau(\Cind)\Txt\G\SNU\Ex_C$-learner
	for $\La = {(L_i)}_{i \in \N}$. The idea is the following. Since $\La$ is
	indexed, there exists a procedure to decide whether $x \in L_i$ or not. Given
    any input, we check whether it serves as a locking sequence. Note that
    $\Txt\G\Ex$-learners may be assumed strongly locking \cite{Fulk90}. While it
    does so, we mimic the learner on this input. Once we figure it not being a
    locking sequence, we start \emph{poisoning} this guess by contradicting it
    to each possible language $L_i$. Thus, the resulting learner will output the
    correct language once it finds a locking sequence thereof.

	Formally, we first define the auxiliary predicate which, given a sequence
    $\sigma$ and an element $x\in\N$, tells us whether $\sigma$ is a candidate
    for a locking sequence up until the element $x$, that is,
	\[
		Q(\sigma, x) = \begin{cases}
			1, & \falls \exists \sigma' \in {\left( C_{h(\sigma)}^x \right)}^{\leq x}_\#,
			\sigma \subseteq \sigma' \exists y \leq x \colon \varphi_{h(\sigma)}(y) \neq
			\varphi_{h(\sigma')}(y);                                                      \\
			0, & \sonst.
		\end{cases}
	\]
    We use $C_{h(\sigma)}^x$ to denote all elements in $C_{h(\sigma)}$ up until
    $x$, that is, $C_{h(\sigma)}^x = \{ x' \leq x \mid \varphi_{h(\sigma)}(x') =
    1 \}$. Next, the S-m-n Theorem provides us with an auxiliary function which
    poisons conjectures on non-locking sequences. There exists $s \in \totalCp$
    such that for all $x\in\N$ and \(\sigma\in\Seq\)
	\[
		\varphi_{s(\sigma)}(x) = \begin{cases}
			\varphi_{h(\sigma)}(x), & \falls Q(\sigma, x) = 0;\\
            0,                      & \sonstfalls x \in L_{x - \min\{ y \in \N
                \mid Q(\sigma, y) = 1 \}}; \\
			1,                      & \sonst.
		\end{cases}
	\]
	Note that in the second case $\{ y \in \N \mid Q(\sigma, y) = 1 \}$ is non-empty
	(as the first case does not hold) and that its elements are bound by $x$.
	Lastly, we need the following auxiliary function which finds the minimal
	sequence on which $h$ agrees with the current hypothesis up until some point.
	For any sequence $\sigma$, define
	\[
        M(\sigma)=\{\sigma' \subseteq \sigma \mid \forall \sigma'' \in
            {\content(\sigma)}^{\leq |\sigma|}_\# \ \forall x \leq
            |\sigma|\colon\varphi_{h(\sigma)}(x) = \varphi_{h({\sigma'}\concat\sigma'')}
            (x)\}.
	\]
	Finally, we define the learner $h'$ as, for any sequence $\sigma$,
	\[
		h'(\sigma) = s(\min(M(\sigma))).
	\]
	Now, let $L \in \La$ and let $T \in \Txt(L)$. We show that $h'$
	converges to a correct hypothesis and, afterwards, show this learning to be
	strongly non-U-shaped. As $h$ is strongly locking, there exists a minimal
    $n_0\in\N$ such that $T[n_0]$ is a locking sequence for $h$ on $L$. In
    particular, there exists $n_1 \geq n_0$ such that, for all $n < n_0$, $T[n]
    \notin M(T[n_1])$, that is, we witness all sequences prior to $T[n_0]$ not
    to be locking. Then, for all $n \geq n_1$, we have that $\min(M(T[n])) =
    T[n_0]$ and, thus, $h'(T[n]) = s(T[n_0])$. Furthermore, for any
	$x \in\N$, we have that $Q(T[n_0], x) = 0$ as all the sequences output the same
	hypothesis. Thus, $\varphi_{s(T[n_0])} = \varphi_{h(T[n_0])}$, meaning that
	$s(T[n_0])$ is a $C$-index for $L$.

    We now show that this learning is strongly non-U-shaped. First, we show, for
    all \(n<n_0\), that \(s(T[n])\) is no \(C\)-index for \(L\). By
	minimality of \(n_0\), \(T[n]\) is no locking sequence for \(h\) on \(L\).
	Now, if $h(T[n])$ is no
	$C$-index of $L$, neither will $s(T[n])$ be, as it either outputs the same
	as $h(T[n])$ or eventually contradicts all languages in $\La$. If,
	otherwise, $h(T[n])$ is a $C$-index of $L$, there exists some point $x\in\N$
	witnessing $T[n]$ not to be a locking sequence. Then, $s(T[n])$ starts
	contradicting all languages in $\La$. Thus, \(s(T[n])\) and also  \(h'(T[n])\)
	is no \(C\)-index for \(L\).
\end{proof}

Building on this result, we go one step further and show the learners to be even
strongly decisive. The strategy the newly found learner employs is to wait with
changing its hypothesis until it witnesses a mind change. And, when doing so, it
first checks whether this mind change produces a new hypothesis which is
different from all previous ones. The following theorem holds.

\begin{theorem}
	We have that ${[\Txt\G\SDec\Ex\IndR]}_\Ind = {[\Txt\G\SNU\Ex\IndR]}_\Ind$.
\end{theorem}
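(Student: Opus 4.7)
The easy inclusion ${[\Txt\G\SDec\Ex]}_\Ind \subseteq {[\Txt\G\SNU\Ex]}_\Ind$ is immediate, since every $\SDec$ learner is $\SNU$. For the converse, the plan is to first invoke \cref{thm:g-wlog-total} to assume the $\G$-learner is total and then apply \cref{Thm:ind-fam} to reduce the task to showing
\[
    {[\tau(\Cind)\Txt\G\SNU\Ex_C]} \subseteq {[\tau(\Cind)\Txt\G\SDec\Ex_C]}.
\]
This reduction is the key simplification: once hypotheses are $C$-indices, pointwise comparison of characteristic functions is effective and semantic inequality between two indices is semi-decidable.

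Given a $\tau(\Cind)\Txt\G\SNU\Ex_C$-learner $h$ for $\La$, I would construct $h'$ by filtering $h$'s outputs down to positions of \emph{semantically novel} hypotheses. Formally, on input $\sigma$ with $n = |\sigma|$, compute $h(\sigma[0]), \ldots, h(\sigma[n])$ and call position $i \leq n$ \emph{committed at time $n$} if, for every $j < i$, some $x \leq n$ witnesses $\varphi_{h(\sigma[i])}(x) \neq \varphi_{h(\sigma[j])}(x)$. Let $\mathrm{top}(\sigma)$ denote the largest committed position, which is well-defined since position $0$ is vacuously committed, and define $h'(\sigma) = h(\sigma[\mathrm{top}(\sigma)])$. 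Checking witnesses against \emph{all} earlier $h$-hypotheses on prefixes of $\sigma$—rather than only against previously committed ones—is crucial: it makes the committed set monotone in $n$, and therefore $\mathrm{top}$ non-decreasing along any text.

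Verification would proceed in three steps. First, $h'$ is a $\tau(\Cind)$-learner since it only ever outputs $C$-indices produced by $h$. Second, $h'$ is $\SDec$: each strict increase of $\mathrm{top}$ moves to a hypothesis semantically distinct from every previously committed one (by the limit form of the commitment condition), and since $\mathrm{top}$ never decreases, $h'$ never revisits a semantically abandoned language. Third, $h'$ converges on each text $T$ for $L \in \La$. Let $t^*$ be minimal with $h(T[t^*]) = e^*$, where $e^*$ is $h$'s final hypothesis. The essential use of $\SNU$ is the observation that $\varphi_{h(T[j])} \neq \chi_L$ for every $j < t^*$: otherwise, applying $\SNU$ with indices $j$ and any $k \geq t^*$ would force $h(T[j]) = e^*$, contradicting the minimality of $t^*$. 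Hence each $h(T[j])$ with $j < t^*$ genuinely differs from $e^*$, their difference witnesses are eventually found, and $t^*$ becomes committed. No position $i > t^*$ can ever be committed, since $h(T[i]) = e^*$ cannot differ from itself at position $t^*$; thus $\mathrm{top}$ stabilizes exactly at $t^*$ and $h'$ converges to $e^*$.

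I expect the main obstacle to be precisely the monotonicity argument. A tempting but flawed variant of the construction would require a candidate position to differ only from previously committed positions; then, as further search time admits a smaller position $j$ into the committed set, a previously committed $i$ with $C_{h(\sigma[j])} = C_{h(\sigma[i])}$ would be retroactively invalidated. This would cause $\mathrm{top}$ to oscillate and could violate $\SDec$. Comparing each candidate against \emph{every} prior $h$-hypothesis, regardless of its own commitment status, is the workaround that keeps the construction monotone and thus truly strongly decisive.
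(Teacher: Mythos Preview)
Your proposal is correct and follows essentially the same approach as the paper: reduce via \cref{thm:g-wlog-total} and \cref{Thm:ind-fam} to $C$-indices, then delay each mind change until the new hypothesis is witnessed semantically distinct from all earlier ones, using $\SNU$ to guarantee that every pre-convergence hypothesis is wrong and hence eventually distinguished from the final one. The only minor technical difference is that the paper compares the candidate hypothesis against the previous \emph{$h'$}-outputs (the recursion already guarantees monotonicity), whereas you compare against \emph{all} earlier $h$-outputs to enforce monotonicity directly; both variants work and the underlying idea is the same.
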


\begin{proof}
	The inclusion ${[\Txt\G\SDec\Ex\IndR]}_\Ind \subseteq
		{[\Txt\G\SNU\Ex\IndR]}_\Ind$ follows immediately. For the other, it suffices,
	by the observation that $\G$-learners may be assumed total
	(\cref{thm:g-wlog-total}) and by Theorem~\ref{Thm:ind-fam}, to show that
	\[
		{[\tau(\Cind)\Txt\G\SNU\Ex_C]} \subseteq
		{[\tau(\Cind)\Txt\G\SDec\Ex_C]}.
	\]
	To that end, let $h$ be a learner and let $\La\subseteq
		\tau(\Cind)\Txt\G\SNU\Ex_C(h)$. We define an equivalently powerful
	$\tau(\Cind)\Txt\G\SDec\Ex_C$-learner $h'$ as follows. Let $h'(\varepsilon) =
		h(\varepsilon)$ and, for any finite sequence $\sigma \neq \varepsilon$, let
    $\sigma' \subsetneq \sigma$ be the minimal sequence on which $h'(\sigma') =
    h'(\sigma^-)$, that is, the sequence on which $h'$ based its previous
    output. Then, define
	\[
		h'(\sigma) = \begin{cases}
            h(\sigma'), & \falls \forall \sigma'', \sigma' \subseteq \sigma''
                \subseteq \sigma \colon h(\sigma') = h(\sigma'');\\
            h(\sigma),  & \sonstfalls \forall \sigma'' \subseteq \sigma' \ 
                \exists x \leq |\sigma| \colon \varphi_{h'(\sigma'')}(x) \neq
                \varphi_{h(\sigma)}(x); \\
			h(\sigma'), & \sonst.
		\end{cases}
	\]
    As $h'$ mimics $h$, $h'$ always outputs $C$-indices and, hence, is
    $\tau(\Cind)$. The intuition is to only update the hypothesis if the current
    hypothesis cannot be based on a locking sequence and if all previous ones
    are witnessed to be semantically different. As $h$ is $\SNU$, $h'$ may never
    abandon a correct guess and all hypotheses before that are incorrect. Thus,
    $h'$ preserves the learning power.

    Formally, we first show that $h'$ is, indeed, $\SDec$. We do so by showing
    that whenever $h'$ makes a mind change, this new hypothesis is certainly
    semantically different from all previous ones and, thus, also syntactically
    different. Let $L \in \La$ and let $\sigma \in L_\#^*$ such that
    $h'(\sigma^-)
		\neq h'(\sigma)$, that is, $h$ made a mind change. Note that $h'(\sigma) =
		h(\sigma)$. Furthermore, let $h'$ base its prior hypothesis on 
    $\sigma' \subseteq \sigma^-$, that is, $\sigma' \subseteq \sigma^-$ is the
    minimal sequence on which, for all $\sigma''$ with $\sigma' \subseteq
    \sigma'' \subseteq \sigma^-$, we have $h'(\sigma'') = h'(\sigma^-)$. The
    only case where $h'$ makes a mind change is, if for all $\sigma'' \subseteq
    \sigma'$ there exists $x \leq |\sigma|$ such that
	\[
		\varphi_{h'(\sigma'')}(x) \neq \varphi_{h(\sigma)}(x).
	\]
    As $h'(\sigma) = h(\sigma)$ and, therefore, $\varphi_{h(\sigma)} =
    \varphi_{h'(\sigma)}$, we have, for all $\tilde{\sigma} \subseteq \sigma'$,
	\[
		C_{h'(\tilde{\sigma})} \neq C_{h'(\sigma)}.
	\]
    As there are no further mind changes until $\sigma^-$, this holds for all
    $\tilde{\sigma} \subsetneq \sigma$. Thus, $h'$ is $\SDec$.

	To show that $h$ converges correctly, let $L \in \La$ and let $T \in \Txt(L)$.
	Then there exists a (minimal) $n_0$ such that, for all $n \geq n_0$, $h(T[n_0])
		= h(T[n])$ and $h(T[n])$ is a $C$-index for $L$.
	We distinguish the following cases.
	\begin{enumerate}
        \itemin{1. Case:} $h'(T[n_0]) = h(T[n_0])$. In this case, $h'(T[n_0])$
            is a correct $C$-index and, as $h$ never changes its mind again,
            neither does $h'$.
        \itemin{2. Case:} $h'(T[n_0]) \neq h(T[n_0])$. Let $n_1 < n_0$ be such
            that $h'(T[n_0]) = h(T[n_1])$. In particular, $h(T[n_1]) \neq
            h(T[n_0])$. Thus, the first case of the definition of $h'$ cannot
            hold. By $h$ being $\SNU$ and by the minimal choice of $n_0$, there
            exists some minimal $n_2 \geq n_0$ such that $h'$ witnesses all
            hypotheses prior to (and including) $h(T[n_1])$ to differ from
            $h(T[n_2])$. Then, by the second case of the definition, it will
            output $h(T[n_2])$ never to change its mind again. \qedhere
	\end{enumerate}
\end{proof}

Altogether, we obtain the full map as depicted in
Figure~\ref{fig:tauCindMapGSd}. It remains open to include partially set-driven
learners into this picture. Due to known results from literature and the results
we obtained, in particular, it remains to be shown whether Gold-style learners
may be assumed strongly decisive and partially set-driven at the same time. We
pose the following open question.

\begin{problem}
May Gold-style strongly decisive learners be assumed partially set-driven so?
\end{problem}

\section{Comparing Convergence Criteria when Learning Indexable
    Classes}\label{Sec:tCindConvergence}
\label{sec:convergence_tau_cind}

In this section we compare total learners under various memory constraints which
converge syntactically to such that converge semantically. The gathered results
we depict in Figure~\ref{fig:tauCindMemory}. Usually, semantically converging
learners are more powerful than their syntactic counterpart, for example, when
learning arbitrary classes of languages \cite{Fulk90}. However, when learning
indexed families class-comprisingly different results are obtained. It is known
that explanatory $\G$-learner and behaviorally correct ones are equally powerful
\cite{LZZ08}. As $\G$-learner may be assumed total, see
Theorem~\ref{thm:g-wlog-total}, we obtain the following result.

\begin{theorem}[\cite{LZZ08}]
	We have that ${[\totalCp\Txt\G\Ex\IndR]}_\Ind = {[\totalCp\Txt\G\Bc\IndR]}_\Ind$.
\end{theorem}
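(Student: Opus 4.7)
The forward direction ${[\totalCp\Txt\G\Ex]}_\Ind \subseteq {[\totalCp\Txt\G\Bc]}_\Ind$ is immediate since every $\Ex$-converging hypothesis sequence also $\Bc$-converges. For the reverse, the plan is to use \cref{Thm:ind-fam} to reduce to showing
\[
[\tau(\Cind)\Txt\G\Bc_C] \subseteq [\tau(\Cind)\Txt\G\Ex_C].
\]
The advantage of passing to $\tau(\Cind)$ is crucial: since every hypothesis output by a $\tau(\Cind)$-learner is an index of a total characteristic function, the equality $\varphi_{h(\sigma)}(x)=\varphi_{h(\tau)}(x)$ can be decided in finite time for any $\sigma,\tau,x$. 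This effectively turns semantic comparison on bounded portions of $\N$ into a computable test, which is precisely what fails in the general setting where $\Bc \neq \Ex$.

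Let $h$ be a $\tau(\Cind)\Txt\G\Bc_C$-learner for an indexed family $\La$. I would define the $\Ex$-learner $h'$ as follows: on input $T[n]$, compute
\[
m_n = \min\set{m \leq n}{\forall k,m \leq k \leq n,\ \forall x \leq n\colon \varphi_{h(T[m])}(x) = \varphi_{h(T[k])}(x)}
\]
(which is well-defined since $m=n$ trivially satisfies the condition) and set $h'(T[n]) = h(T[m_n])$. Since $h$ is $\tau(\Cind)$, the min-search is computable and $h'$ is total, and clearly $h'$ is itself $\tau(\Cind)$.

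The key technical step is to show that the sequence $(m_n)_n$ is non-decreasing and eventually constant. Monotonicity follows because any $m < m_n$ witnessing the condition at stage $n+1$ would also witness it at stage $n$, contradicting minimality of $m_n$. For stabilisation, let $n_0$ be the $\Bc$-convergence point of $h$ on $T$, so that $C_{h(T[k])} = L$ for all $k \geq n_0$. Then the value $m = n_0$ satisfies the defining condition at every stage $n \geq n_0$, since all $\varphi_{h(T[k])}$ for $k \geq n_0$ compute $\chi_L$ and thus agree pointwise. Hence $m_n \leq n_0$ for all $n \geq n_0$, and combined with monotonicity, $(m_n)$ stabilises at some $m^* \leq n_0$.

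It remains to verify that $h(T[m^*])$ is a $C$-index for $L$. For each $n \geq n_0$, the defining condition with $m = m^*$ and $k = n_0$ yields $\varphi_{h(T[m^*])}(x) = \varphi_{h(T[n_0])}(x) = \chi_L(x)$ for all $x \leq n$; letting $n \to \infty$ gives $\varphi_{h(T[m^*])} = \chi_L$. The main obstacle that would arise if we tried to mimic this for $W$-indices is precisely the undecidability of checking equality of enumerable sets on bounded domains; by routing through \cref{Thm:ind-fam} and working with $C$-indices, this obstacle dissolves and the construction goes through without further complications.
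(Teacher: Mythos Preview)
The paper does not actually give its own proof of this theorem: it is stated as a known result from \cite{LZZ08}, with the remark preceding it that the partial-learner equality from \cite{LZZ08} combined with \cref{thm:g-wlog-total} (totality for delayable restrictions, applied to both $\Ex$ and $\Bc$) yields the total-learner version.

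Your proposal takes a genuinely different route: instead of citing the literature result and transferring via totality, you give a direct, self-contained argument inside the paper's own $C$-index framework. Your reduction via \cref{Thm:ind-fam} is exactly the kind of application that theorem was designed for, and the construction of $h'$ via the stabilising minimum $m_n$ is correct: monotonicity of $(m_n)_n$ follows as you argue, the bound $m_n \leq n_0$ past the $\Bc$-convergence point is clear, and the verification that $\varphi_{h(T[m^*])}=\chi_L$ by letting $n\to\infty$ is sound because $n_0\in[m^*,n]$ is always a valid choice of $k$. The decidability of the comparisons is precisely what $\tau(\Cind)$ buys, as you note. One cosmetic point: you phrase the definition of $h'$ in terms of $T[n]$, but of course it should be read as a definition on arbitrary finite sequences $\sigma$ (with $n=|\sigma|$); this is harmless.

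What your approach buys over the paper's is a proof that stays entirely within the paper's machinery and does not require the reader to consult \cite{LZZ08}. What the paper's approach buys is brevity, since the result is already in the literature.
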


Furthermore, it is known that Gold-style learners do not rely on the order of
the presented elements but rather on the time given. The latter result we
already discussed in Theorem~\ref{thm:g-sd-ex} and note that it also holds true
for total learners, the former is known to hold true for (possibly) partial
learners. In order to obtain this result, one searches for the minimal candidate
for a locking sequence and mimics the learner on it. As $\G$-learner may be
assumed total, see Theorem~\ref{thm:g-wlog-total}, one obtains a total
$\Psd$-learner this way. Thus, the following theorem holds.

\begin{theorem}%
	\label{thm:g-psd}
	We have that \({[\totalCp\Txt\Psd\Ex]}\IndF={[\Txt\G\Ex]}\IndF\).
\end{theorem}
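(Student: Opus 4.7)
The inclusion $[\totalCp\Txt\Psd\Ex]\IndF \subseteq [\Txt\G\Ex]\IndF$ is immediate since $\Psd \trans \G$ and dropping totality only enlarges the class of admissible learners. For the other direction, I would start with a $\Txt\G\Ex$-learner $h$ for $\La \in [\Txt\G\Ex]\IndF$ with respect to some indexed hypothesis space $\Ha$; by \cref{thm:g-wlog-total}, I may assume $h$ is total. The plan is to mimic $h$ on the least candidate locking sequence visible within the budget provided by the $\Psd$-input, exploiting totality of $h$ to keep the new $\Psd$-learner total.

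Concretely, given input $(D, n)$ with finite $D \subseteq \N$ and $n \in \N$, I define $h'(D, n) \coloneqq h(\sigma^*)$, where $\sigma^*$ is the $\leq$-minimum $\sigma \in \Seq$ satisfying $\content(\sigma) \subseteq D$, $|\sigma| \leq n$, and the ``stability within the budget'' condition that $h(\sigma) = h(\tau)$ for every $\tau \in \Seq$ with $\sigma \subSeq \tau$, $\content(\tau) \subseteq D$, and $|\tau| \leq n$. Such a $\sigma^*$ exists because any length-$n$ sequence with content in $D$ is trivially stable in the budget. Since $h$ is total and only finitely many sequences are inspected, $h'$ is total and computable, and by construction a $\Psd$-learner with respect to $\Ha$.

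For correctness on $L \in \La$ with text $T \in \Txt(L)$, let $\sigma_0$ be the $\leq$-minimum locking sequence of $h$ on $L$ (which exists since $\leq$ well-orders $\Seq$ and locking sequences exist). For every $\sigma < \sigma_0$ with $\content(\sigma) \subseteq L$, the sequence $\sigma$ is not a locking sequence of $h$ on $L$; since $h$ $\Ex$-learns $L$, if $h$ were stable past $\sigma$ on all $L_\#^*$-extensions then $H_{h(\sigma)} = L$ would follow, contradicting non-locking. Hence there is some $\tau_\sigma \in L_\#^*$ with $\sigma \subSeq \tau_\sigma$ and $h(\sigma) \neq h(\tau_\sigma)$. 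Since $\{ \sigma \in \Seq : \sigma < \sigma_0 \}$ is finite, I can pick $n_1$ large enough that $\content(\sigma_0) \cup \bigcup_\sigma \content(\tau_\sigma) \subseteq \content(T[n_1])$ and $n_1 \geq \max_\sigma |\tau_\sigma|$. For $n \geq n_1$, each $\sigma < \sigma_0$ in the budget is destabilized by $\tau_\sigma$, while $\sigma_0$ itself remains stable (any in-budget extension has content $\subseteq \content(T[n]) \subseteq L$, hence lies in $L_\#^*$). Therefore $\sigma^* = \sigma_0$ and $h'(\content(T[n]), n) = h(\sigma_0)$ is a correct $\Ha$-index for $L$.

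The main obstacle is the \emph{uniform} destabilization of all $\sigma < \sigma_0$: one must fit the witnesses $\tau_\sigma$ for all finitely many false candidates simultaneously into both the content $D$ and the length bound $n$, and verify that $\sigma_0$ itself never accidentally gets undercut by a still-smaller ``false'' stable candidate. This is handled by the finiteness of the $\leq$-initial segment below $\sigma_0$; everything else is routine bookkeeping.
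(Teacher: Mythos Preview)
Your proof is correct and follows essentially the same approach as the paper: both reproduce Fulk's classical construction of searching for the $\leq$-minimal candidate locking sequence within the content-and-length budget given by the $\Psd$-input, and mimicking the total $\G$-learner there. The only trivial bookkeeping omission is that your threshold $n_1$ should also satisfy $n_1 \geq |\sigma_0|$, so that $\sigma_0$ itself is in the budget; with that added, your argument matches the paper's.
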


\begin{proof}
	The inclusion \({[\totalCp\Txt\Psd\Ex]}\IndF \subseteq {[\Txt\G\Ex]}\IndF\)
    is straightforward. For the other, we follow the proof in~\cite{Fulk90}. Let
    \(h\) \(\Txt\G\Ex\)-learn $\La$ with respect to a hypothesis space $\Ha$.
    Without losing generality, see \cref{thm:g-wlog-total}, let \(h\) be total.
    We define a \(\Psd\)-learner \(h'\) using an auxiliary function
    \(M\in\totalCp\) as, for all finite sets \(D\subseteq\N\) and \(t\in\N\),
	\begin{align*}
		M(D,t)  & = \set{\sigma\in D^{\leq t}_\#}{\forall\tau\in D^{\leq
		    t}_\#\colon h(\sigma)=h(\sigma\tau) };\\
		h'(D,t) & = \begin{cases}
			h(\min(M(D,t))), & \cIf M(D,t)\neq\emptyset; \\
		    h(\varepsilon),  & \otw.
		\end{cases}
	\end{align*}
	Intuitively, \(h'\) mimics \(h\) on minimal potential locking sequences.
	Note that \(h'\) is total as \(h\) is so. To show that \(h\) learns \(\La\),
	let \(L\in\La\) and \(T\in\Txt(L)\).
	Let \(\sigma_0\) be the minimal locking sequence of \(h\) on \(L\). We show
	that  \(h'\) eventually converges to \(h(\sigma_0)\). To that end, let
	\(n_0\in\N\) be large enough such that, with \(D_0=\content(T[n_0])\), we have
	\begin{itemize}
		\item \(\content(\sigma_0)\subseteq D_0\),
		\item \(\sigma_0\leq n_0\) and
		\item for all \(\sigma<\sigma_0\) there exists \(\sigma'\in
		      {(D_0)}_\#^{\leq n_0}\) such that \(h(\sigma)\neq h(\sigma\sigma')\),
		      i.e., \(\sigma'\) witnesses \(\sigma\notin M(D_0,n_0)\).
	\end{itemize}
	Then, for all \(n\geq n_0\), we have \(\min(M(\content(T[n]),n))=\sigma_0\) and thus
	\(h'\) converges to \(h(\sigma_0)\). As this is a correct hypothesis for
	\(L\), \(h'\) learns \(\La\).
\end{proof}

By patching in the information given \cite{KSS17}, even iterative $\Bc$-learners
are as powerful as Gold-style $\Bc$-learners. This also holds true for total
such learners. We provide the next theorem.

\begin{theorem}
	We have that ${[\totalCp\Txt\It\Bc\IndR]}_\Ind = {[\totalCp\Txt\G\Bc\IndR]}_\Ind$.
\end{theorem}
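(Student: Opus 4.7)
The inclusion $[\totalCp\Txt\It\Bc]_\Ind \subseteq [\totalCp\Txt\G\Bc]_\Ind$ is immediate from $\It \preceq \G$. For the reverse direction, I would first apply Theorem~\ref{Thm:ind-fam} (with $\alpha = \True$ and $\delta = \Bc$) to both sides, reducing the claim to
\[
	[\tau(\Cind)\Txt\G\Bc_C] \subseteq [\tau(\Cind)\Txt\It\Bc_C].
\]
This lets us work with $C$-indices throughout and construct the hypothesis space on the fly, rather than having to fix one in advance.

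The plan is to adapt the ``patching'' strategy of~\cite{KSS17} by smuggling the full observed history through the iterative learner's hypotheses using padding. Given a $\tau(\Cind)\Txt\G\Bc_C$-learner $h$ for $\La$, I would use a standard injective computable padding function $\pad \in \totalCp$ satisfying $\varphi_{\pad(e,k)} = \varphi_e$ with a computable inverse on its image, together with a computable bijection $\Seq \leftrightarrow \N$. Then define $h'(\varepsilon) = \pad(h(\varepsilon), \varepsilon)$, and, on a previous hypothesis $p$ and new datum $x$: if $p$ decodes as $\pad(e, \sigma)$, output $\pad(h(\sigma \concat x), \sigma \concat x)$; otherwise, output $p$ (this latter branch is only needed for totality and never triggers during an actual iteration). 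A straightforward induction yields $\It(h', T)(n) = \pad(h(T[n]), T[n])$, and hence $C_{\It(h', T)(n)} = C_{h(T[n])}$. Since $h$ is a $\G\Bc_C$-learner for $\La$, for any $L \in \La$ and $T \in \Txt(L)$ we eventually have $C_{h(T[n])} = L$, giving $\Bc_C$-convergence of $h'$. Totality and the $\tau(\Cind)$-property follow from the corresponding properties of $h$ and $\pad$.

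The main obstacle I anticipate is conceptual rather than technical: the iterative learner's bounded view, namely the previous hypothesis together with the current datum, must accommodate the unbounded memory of the Gold-style learner $h$. The key realization is that because padding is invisible to the $C$-function, the hypothesis itself can serve as an extra memory tape encoding the full history at no semantic cost. Once this observation is in hand, the remaining verification is routine. One could alternatively stay closer to the literal ``patching'' idea by additionally setting $C_{h'(T[n])} = C_{h(T[n])} \cup \content(T[n])$, but this extra step is not needed for mere $\Bc_C$-convergence.
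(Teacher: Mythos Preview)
Your proposal is correct and follows essentially the same route as the paper: reduce via Theorem~\ref{Thm:ind-fam} to the $C$-index setting and then build the iterative learner by encoding the full observed sequence into each hypothesis via an injective padding function, so that $\It(h',T)(n)$ is semantically equal to $h(T[n])$. Your treatment is in fact slightly cleaner, since you explicitly handle the totality branch for inputs not in the range of $\pad$ and state the correct target inclusion $[\tau(\Cind)\Txt\G\Bc_C] \subseteq [\tau(\Cind)\Txt\It\Bc_C]$ (the paper's displayed inclusion contains an obvious typo).
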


\begin{proof}
    Immediately, we have ${[\totalCp\Txt\It\Bc\IndR]}_\Ind \subseteq
    {[\totalCp\Txt\G\Bc\IndR]}_\Ind$. We apply a padding argument \cite{KSS17}
    for the other direction. By Theorem~\ref{Thm:ind-fam}, it suffices to show that
	\[
		{[\tau(\Cind)\Txt\G\Bc_C]} \subseteq {[\tau(\Cind)\Txt\G\Ex_C]}.
	\]
    Let $h$ be a $\tau(\Cind)\Txt\G\Bc_C$-learner and let $\La\subseteq
    \tau(\Cind)\Txt\G\Bc_C(h)$.
	Recall that $\pad$ is an injective padding function such that for all $e\in\N$ and
	all finite sequences $\sigma$ we have $\varphi_{\pad(e,\sigma)} = \varphi_e$.
	We define the iterative learner $h'$ for all previous hypotheses $p$, all
	finite sequences $\sigma$ and all $x \in \N$,
	\begin{align*}
		h'(\emptyset)          & = \pad(h(\varepsilon) ,\varepsilon);              \\
		h'(\pad(p, \sigma), x) & = h'(\pad(h(\sigma\concat x), \sigma \concat x)).
	\end{align*}
	It is immediate to see that, for all sequences $\sigma$, we have
    $\varphi_{{(h')}^*(\sigma)} = \varphi_{h(\sigma)}$. Thus, $h'$
    $\tau(\Cind)\Txt\It\Bc$-learns~$\La$.
\end{proof}

As patching changes the hypothesis with every new datum, this approach does not
work for explanatory iterative learners. It is known that this problem cannot be
solved as there exists a well-known class separating set-driven explanatory
learners from iterative ones. This result transfers to total learners as well as
the next theorem shows.

\begin{theorem}
	We have that ${[\totalCp\Txt\Sd\Ex\IndR]}_\Ind \setminus
		{[\totalCp\Txt\It\Ex\IndR]}_\Ind\neq\emptyset$.
\end{theorem}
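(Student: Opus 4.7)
The plan is to exhibit an indexable class $\La$ with $\La \in [\totalCp\Txt\Sd\Ex]\IndF \setminus [\totalCp\Txt\It\Ex]\IndF$, building on the classical separating class for partial learners from the literature on indexable families (e.g.\ \cite{LZZ08} and references therein). I would fix such a $\La$ together with its (possibly partial) $\Sd\Ex$-learner $h$; the underlying diagonalization exploits the memory weakness of iterative learning, namely that an iterative learner's next conjecture depends only on its previous conjecture and the latest datum: for each iterative candidate one uses an \ORT-style construction to produce two languages in $\La$ whose canonical texts share a long initial segment (forcing the candidate into one conjecture) while their contents eventually diverge enough that the resulting conjecture must be wrong on at least one of them. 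A set-driven learner, seeing the full set content, can distinguish the two.

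The negative side of the theorem transfers to total learners for free: since $[\totalCp\Txt\It\Ex]\IndF \subseteq [\Txt\It\Ex]\IndF$, we immediately get $\La \notin [\totalCp\Txt\It\Ex]\IndF$. For the positive side I would invoke \cref{Thm:ind-fam} with $\alpha = \True$, $\beta = \Sd$ and $\delta = \Ex$, which yields the equality $[\tau(\Cind)\Txt\Sd\Ex_C] = [\totalCp\Txt\Sd\Ex]\IndF$. It therefore suffices to exhibit a total $\tau(\Cind)\Txt\Sd\Ex_C$-learner $h'$ for $\La$. Starting from $h$, fix an index $e$ with $h = \varphi_e$ and a Blum complexity measure $\Phi$, then apply S-m-n to $h$'s output indices to obtain characteristic indices, and totalise by the trick of \cref{thm:g-wlog-total} adapted from $\G$ to $\Sd$: on input $D$, simulate $h$ for $|D|$ steps on $D$ (and, if needed, on its subsets in a canonical order), select an input on which $h$ halts in time, and return the resulting $C$-index; fall back to a default $C$-index otherwise.

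The main obstacle is verifying that this totalisation preserves $\Ex$-convergence with only set-driven access, since unlike the $\G$-case of \cref{thm:g-wlog-total} there is no explicit step counter. The key observation is that for any $L \in \La$ the partial learner $h$ has a locking set $D_L \subseteq L$, and along every text for $L$ the content $\content(T[n])$ eventually contains $D_L$ and is large enough that $|\content(T[n])|$ exceeds $\Phi_e$ on a suitable sandwich $D'$ with $D_L \subseteq D' \subseteq L$; from that point on $h'$ mimics $h$ on such a $D'$, which is itself a locking set and hence yields a correct $C$-index of $L$. Because $\Ex$ is pseudo-semantic and delayable, the transient early behavior of $h'$ is harmless, completing the positive side and thus the separation.
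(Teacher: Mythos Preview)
Your proposal has a genuine gap in the positive direction. The totalisation trick you propose---budgeting $h$'s computation by $|D|$ and falling back on a default when no subset halts in time---breaks down for \emph{finite} languages $L \in \La$. On any text for a finite $L$, the content $\content(T[n])$ eventually stabilises at $L$ itself, so $|\content(T[n])| \leq |L|$ forever. If $\Phi_e(D') > |L|$ for every $D' \subseteq L$ (and nothing prevents this), your learner $h'$ never sees $h$ halt within budget and returns the default $C$-index indefinitely, which is wrong. Your ``key observation'' explicitly claims that $|\content(T[n])|$ will eventually exceed $\Phi_e$ on some sandwich $D_L \subseteq D' \subseteq L$, but this is simply false for finite $L$. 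The delay trick of \cref{thm:g-wlog-total} works for $\G$ precisely because the sequence length $|\sigma|$ grows unboundedly regardless of the language; in the $\Sd$ setting there is no such unbounded counter, which is exactly why set-driven learning is weaker. Even for infinite $L$, you would additionally need to argue that the \emph{same} subset is eventually selected so that syntactic $\Ex$-convergence is achieved, and invoking ``$\Ex$ is delayable'' does not buy you this without an explicit non-decreasing unbounded $r$.

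The paper sidesteps all of this by being concrete: it fixes $\La = \{\N \setminus \{0\}\} \cup \{D \cup \{0\} \mid D \subseteq_\Fin \N\}$ and writes down a manifestly total $\tau(\Cind)\Txt\Sd\Ex_C$-learner (output a $C$-index for $\N \setminus \{0\}$ while $0$ is absent, else output $\ind(\content(\sigma))$), then defeats any iterative learner directly by exploiting that after convergence on a text for $\N \setminus \{0\}$ two distinct finite extensions become indistinguishable. No ORT and no totalisation of a partial learner are needed. The easiest repair of your argument is to do the same: pick the standard separating class, observe that its natural $\Sd$-learner is already total, and skip the general totalisation entirely.
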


\begin{proof}
    This is a standard proof and we include it for completeness \cite{JORS99}.
    By Theorem~\ref{Thm:ind-fam}, it suffices to provide a class of languages
    $\La$ which is
    $\tau(\Cind)\Txt\Sd\Ex_C$-learnable but not $\tau(\Cind)\Txt\It\Ex_C$ so. We
    define $\La \coloneqq \{ \N \setminus \{ 0 \} \} \cup \{ D \cup \{ 0 \} \mid
    D \subseteq_\Fin \N \}$.
    Then, the following learner learns $\La$. Fix $p_0$ as a code for the
    language $\N \setminus \{ 0 \}$ and define, for any finite sequence
    $\sigma$,
	\[
        h(\sigma) = \begin{cases}
			p_0,                    & \falls 0 \notin \content(\sigma); \\
			\ind(\content(\sigma)), & \sonst.
		\end{cases}
    \]
	It is immediate that $h$ learns $\La$. Assume there exists a learner $h'$
	which $\tau(\Cind)\Txt\It\Ex$-learns $\mathcal{L}$. Let \(L=\N \setminus \set{0} \),
	let $T$ be a text of $L$ and let $n_0\in\N$ such that for all $n \geq n_0$ we
	have $h'(T[n_0]) = h'(T[n])$. Let $x=\max(\content(T[n_0]))$, then
	on the following two texts of distinct languages in $\La$
	\begin{align*}
		T_1=\sigma\concat(x'+1)\concat 0^{\infty}; \\
		T_2=\sigma\concat(x'+2)\concat 0^{\infty},
	\end{align*}
    the learner $h'$ generates the same hypotheses. Thus, it is unable to
    distinguish between these two. Therefore, $\mathcal{L}$ cannot be learned by
    $h'$.
\end{proof}

On the other hand, each iterative learner can be made set-driven by simply,
given all data, mimicking the iterative learner on input with pause-symbols
between each two elements. This well-known approach also works for our setting
as well. The following theorem holds.

\begin{theorem}
    We have that ${[\totalCp\Txt\It\Ex\IndR]}_\Ind \subseteq
    {[\totalCp\Txt\Sd\Ex\IndR]}_\Ind$.
\end{theorem}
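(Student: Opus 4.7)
The plan is to invoke Theorem~\ref{Thm:ind-fam} to reduce the statement to showing ${[\tau(\Cind)\Txt\It\Ex_C]} \subseteq {[\tau(\Cind)\Txt\Sd\Ex_C]}$. Starting from a total iterative learner $h$ that $\tau(\Cind)\Txt\It\Ex_C$-learns $\La$, I would define a set-driven simulator $h'$ by feeding $h$ the canonical ascending enumeration of its input set with a pause symbol between each pair of consecutive elements. Concretely, for $D = \{d_1 < d_2 < \cdots < d_n\}$, let $\tau_D \coloneqq d_1 \# d_2 \# \cdots d_n \#$, and set $h'(D) \coloneqq h^*(\tau_D)$, where $h^*$ is the starred form of $h$. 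Both totality and the $\tau(\Cind)$-property pass immediately from $h$ to $h'$.

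For the convergence argument, fix $L \in \La$ and $T \in \Txt(L)$. By \cite{BlumBlum75}, $h$ admits a locking sequence on $L$, giving rise to a stable hypothesis $e_L$ with $C_{e_L} = L$ and $h(e_L, x) = e_L$ for all $x \in L \cup \{\#\}$. In particular, $h$ converges on the canonical text $T_c \coloneqq l_0 \# l_1 \# l_2 \# \cdots$ of $L$, so there is $N_0$ with $h^*(T_c[N]) = e_L$ for all $N \geq N_0$. For $L$ infinite and $n$ large enough that $\content(T[n])$ contains the first $\lceil N_0/2 \rceil$ elements of $L$ in ascending order, the sequence $\tau_{\content(T[n])}$ begins with $T_c[N_0]$ (hence reaches $e_L$) and continues exclusively with symbols from $L_\#$; stability of $e_L$ on $L_\#$ then forces $h'(\content(T[n])) = e_L$, giving the required syntactic convergence. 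For $L$ finite, eventually $\content(T[n]) = L$, so $h'(\content(T[n])) = h^*(\tau_L)$ becomes constant.

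The main obstacle is to argue that this fixed hypothesis $h^*(\tau_L)$ is a correct $C$-index in the finite case, since $h$ might require more pause symbols than $\tau_L$ supplies before stabilizing on $T_c$. I would overcome this either by padding $\tau_D$ with an auxiliary tail of pauses whose length grows with $|D|$ (so that eventually enough pauses are seen for $h$ to converge), or, exploiting Theorem~\ref{Thm:ind-fam} together with the S-m-n theorem more aggressively, by outputting a $C$-index whose characteristic function is computed by simulating $h$ on $\tau_D \#^m$ for increasing $m$ and reading off the (necessarily existing) stabilized value. Since $\tau_L \#^\omega$ is a text for $L \in \La$, $h$ must syntactically converge on it, so the set-driven simulator recovers $e_L$ on input $L$; and the infinite-case argument still goes through, as stability of $e_L$ on $L_\#$ ensures that the extra pauses do not disturb the already-reached hypothesis.
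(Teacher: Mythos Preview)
Your overall plan---feed the iterative learner the sorted enumeration of $D$ with pause symbols interleaved---is exactly the paper's, and the infinite case goes through essentially as you describe (with one small slip: the hypothesis $h$ converges to on $T_c$ need not be the particular $e_L$ coming from an arbitrary locking sequence; but the argument only needs that this limit is fixed under $h(\cdot,\#)$ and under each $l_k$ with $k$ large enough, which does follow from convergence on $T_c$).

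The genuine gap is in the finite case, and neither of your two proposed fixes closes it. Fix~1 fails because a set-driven learner sees only $D$; once $\content(T[n]) = L$ the quantity $|D|$ is frozen at $|L|$, so your pause-tail has some fixed length $f(|L|)$, which may well be too short for $h$ to have stabilised. Fix~2 fails to preserve $\tau(\Cind)$: the text $\tau_D\#^\omega$ is a text for the finite set $D$, and if $D \notin \La$ there is no guarantee that $h$ syntactically converges on it, so your ``necessarily existing stabilized value'' need not exist and the resulting S-m-n index $s(D)$ is in general not a $C$-index. (There is also a secondary problem: if $h'(D)=s(D)$ is literally an S-m-n index depending on $D$, then in the infinite case $h'$ no longer converges syntactically, since $s(D)$ changes whenever $D$ grows.)

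The paper's remedy is a one-step lookahead with a safe fallback: output $h^*(\sort_\#(D))$ if $h^*(\sort_\#(D)) = h^*(\sort_\#(D)\concat\#)$, and otherwise output $\ind(D)$. The key observation is that for an \emph{iterative} learner a single extra $\#$ already detects a fixed point of $h(\cdot,\#)$: if $h^*(\tau_L) = h^*(\tau_L\concat\#)$ then $h^*(\tau_L\concat\#^m) = h^*(\tau_L)$ for all $m$, so this is the limit on $\tau_L\#^\infty \in \Txt(L)$ and hence correct; if not, $\ind(L)$ is correct anyway when $L$ is finite. This keeps $h'$ total and, in the $C$-index formulation, $\tau(\Cind)$ on \emph{all} finite $D$. (The paper in fact carries this out directly at the level of hypothesis spaces---adjoining all finite sets to $\Ha$---rather than via Theorem~\ref{Thm:ind-fam}, but that difference is cosmetic.)
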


\begin{proof}
    This is a standard proof and we include it for completeness \cite{KS95}. Let
    $h$ $\Txt\It\Ex$-learn the indexed family $\La$ with respect to $\Ha$. We
    provide a $\Sd$-learner learning $\La$. To that end, we expand the
    hypothesis space $\Ha$ by adding all finite sets. This new hypothesis space
    we denote by $\Ha'$. For ease of notation, we refer to these new indices as,
    for all $D$, $\ind(D)$. Now, for any set \(D\), let \(\sort_\#(D)\) be the
    sequence of the elements in \(D\) sorted in ascending order, with a \(\#\)
    between each two elements. Furthermore, let \(h^*\) be the starred form of
    \(h\). Now, we define \(h'\) as, for all finite sets \(D\),
	\begin{align*}
		h'(D)=\begin{cases}
			h^*(\sort_\#(D)), & \cIf h^*(\sort_\#(D))=h^*(\sort_\#(D)\concat\#); \\
			\ind(D),          & \otw.
        \end{cases}
	\end{align*}
	To show that \(h'\) learns \(\La\) with respect to $\Ha'$, let \(L\in \La\). If \(L\) is
	finite, then either \(h^*(\sort_\#(L))=h^*(\sort_\#(L)\concat\#)\), in which
	case \(h\) converges to \(h'(L)=h^*(\sort_\#(L))\) on text
	\(\sort_\#(L)\concat\#^\infty\). Otherwise, we have \(h'(L)=\ind(L)\). In both
	cases, \(h'\) learns \(L\) as \(h'(L)\) is a correct hypothesis for~\(L\).

	On the other hand, if \(L\) is infinite, then \(h\) converges to a
	correct hypothesis for \(L\) on the text \(\sort_\#(L)\). Let \(\sigma_0\)
	be the initial sequence of \(\sort_\#(L)\) on which \(h\) has
	converged and let \(D_0=\content(\sigma_0)\). Then, for all \(x\in
	\N\setminus D_0\), we have
	\(h^*({\sigma_0}\concat x)=h^*(\sigma_0)=h^*({\sigma_0}\concat\#)\)  as \(h\) is
	iterative. Therefore, for all \(D'\) with
	\(D_0 \subseteq D' \subseteq L \), we have
	\(h^*(\sort_\#(D'))=h^*(\sort_\#(D')\concat\#)\) and
	\(h^*(\sort_\#(D'))=h^*(\sort_\#(D_0))\), which is a correct hypothesis for
	\(L\). As \(h'(D')=h(\sort_\#(D'))\), we have the convergence of \(h'\) to a
	correct hypothesis for \(L\) and, thus, \(h'\) learns \(L\).
\end{proof}

Interestingly, only iterative learners benefit from loosening the convergence
criterion.  We have already investigated the situation for Gold-style and
partially set-driven learners. Now, we conclude this section by showing that,
first, total set-driven learners and then also transductive ones do not benefit
from this relaxation.

Considering set-driven learners, we first show that behaviorally correct such
learners may be assumed target-cautious in general. We do so by conducting a
forward search, checking the learners output on each possible future hypothesis.
Should we detect inconsistencies, we know that the current information is not
locking and, thus, we can stop the enumeration. This way, no overgeneralization
will happen as, otherwise, locking sets must be included in the search. We
obtain the following result.

\begin{lemma}\label{lem:sdbc-sdcauttarbc}
    We have that ${[\tau(\Cons)\Txt\Sd\CautTar\Bc\IndR]}_{\Ind} =
    {[\totalCp\Txt\Sd\Bc\IndR]}_{\Ind}$.
\end{lemma}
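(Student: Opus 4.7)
The inclusion $[\tau(\Cons)\Txt\Sd\CautTar\Bc]_\Ind \subseteq [\totalCp\Txt\Sd\Bc]_\Ind$ is immediate. For the reverse direction, the plan is to invoke \cref{Thm:ind-fam} to reduce to learners outputting characteristic indices: it will suffice to prove
\(
    [\tau(\Cind)\Txt\Sd\Bc_C] \subseteq [\tau(\Cind\Cons)\Txt\Sd\CautTar\Bc_C].
\)
Using the $\delta = \True$ case of \cref{th:taucons}, I may furthermore start with a consistent $\tau(\Cind\Cons)\Txt\Sd\Bc_C$-learner $h$ of $\La$, so that $D \subseteq C_{h(D)}$ for every finite $D$.

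The construction is a forward-search variant of the one used in \cref{Thm:SdWb}. Define the set-driven learner $h'$ by
\[
    \varphi_{h'(D)}(x) = 1 \;\Leftrightarrow\; x \in D \;\vee\; \Bigl(\varphi_{h(D)}(x) = 1 \;\wedge\; \forall D', D \subseteq D' \subseteq D \cup C^x_{h(D)}, \forall y \leq x\colon \varphi_{h(D')}(y) = \varphi_{h(D)}(y)\Bigr),
\]
where $C^x_{h(D)} = \{y \leq x : \varphi_{h(D)}(y) = 1\}$. All quantifiers range over finite, decidable ranges, so $h'$ is a consistent, set-driven $\tau(\Cind)$-learner by construction. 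Intuitively, $h'$ adopts the value $\varphi_{h(D)}(x)$ only once it has verified that no extension of $D$ drawn from what $h(D)$ currently enumerates up to $x$ triggers a semantic mind change on the initial segment $[0,x]$.

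For $\Bc$-convergence, if $D$ is a locking set for $h$ on $L$, then $C_{h(D)} = L$, so every $D'$ in the search lies between a locking set and $L$; hence $\varphi_{h(D')} = \chi_L$, the forward check succeeds everywhere, and $C_{h'(D)} = L$. The main obstacle will be target-cautiousness. The plan is to assume $L \subsetneq C_{h'(D)}$ for some $D \subseteq L$ and derive a contradiction. From $C_{h'(D)} \subseteq D \cup C_{h(D)}$ one immediately obtains the crucial inclusion $L \setminus D \subseteq C_{h(D)}$, which ensures that the ``truthful'' extensions $D^*_y \coloneqq D \cup (L \cap [0,y])$ always lie inside the search range $D \cup C^y_{h(D)}$. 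Since $L \subseteq C_{h'(D)}$, the forward check passes at every $y \in L \setminus D$, forcing $\varphi_{h(D^*_y)} \equiv \varphi_{h(D)}$ on $[0,y]$. Choosing $y$ large enough that $D^*_y$ extends a fixed locking set for $h$ on $L$ (arbitrarily large $y \in L$ in the infinite case; $y = \max L$ in the finite case, augmented by a direct check at $z > \max L$ using $D' = L$) yields $\varphi_{h(D^*_y)} = \chi_L$, which propagates to $\varphi_{h(D)} = \chi_L$ on an initial segment wide enough to exclude any $z \in C_{h'(D)} \setminus L$, contradicting $L \subsetneq C_{h'(D)}$.
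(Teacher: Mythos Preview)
Your proposal is correct and follows essentially the same forward-search strategy as the paper: reduce via \cref{Thm:ind-fam} to $C$-indices, assume consistency via \cref{th:taucons}, and have $h'(D)$ enumerate $x$ only after checking that no intermediate extension of $D$ inside the current hypothesis triggers a mind change, then derive target-cautiousness by trapping a locking set inside the search range. The only cosmetic slip is in the finite case: the forward check at $y$ applies only for $y\in L\setminus D$, so take $y=\max(L\setminus D)$ rather than $y=\max L$ (the direct check with $D'=L$ then handles every $z>\max(L\setminus D)$, not just $z>\max L$), and the degenerate case $D=L$ is already covered by your $\Bc$-convergence argument.
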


\begin{proof}
	The inclusion
    ${[\tau(\Cons)\Txt\Sd\CautTar\Bc\IndR]}_{\Ind} \subseteq
    {[\totalCp\Txt\Sd\Bc\IndR]}_{\Ind}$ is straightforward. By
    Theorem~\ref{Thm:ind-fam}, it suffices to show
	${[\tau(\Cind)\Txt\Sd\Bc_C]}\subseteq {[\tau(\Cind\Cons)\Txt\Sd\CautTar\Bc_C]}$
    for the other. We apply a similar construction of forwards searches as when
    learning arbitrary classes of languages \cite{DoskocK20}.
	Let $h$ be a total learner with $\La=\tau(\Cind)\Txt\Sd\Bc_C(h)$. According
    to \cref{th:taucons}, we may assume $h$ to be consistent on any input. Now,
    define a $\tau(\Cind\Cons)\Txt\Sd\CautTar\Bc_C$-learner $h'$ as follows.
    Let, for
	all \(x\in\N\) and finite sets \(D\subseteq \N\),
	\begin{align*}
        E(x,D)             & := D \cup \settwo{x} \cup \settwo{ x' \leq x \mid
            \varphi_{h'(D)}(x') = 1}; \\
		\varphi_{h'(D)}(x) & = \begin{cases}
			1, & \falls x \in D;\\
			0, & \sonstfalls \varphi_{h(D)}(x) = 0;\\
            1, & \sonstfalls \forall D'', D \subseteq D'' \subseteq E(x,D)\colon
                E(x,D) \subseteq C_{h(D'')}; \\
			0, & \sonst.
		\end{cases}
	\end{align*}
    Intuitively, the conjecture of \(h'(D)\) contains \(D\) itself and certain
    additional elements of the
    hypothesis of \(h\) on \(D\). For these additional elements, $h'$ checks
    whether all possible future hypotheses of $h$ contain these elements as
    well. If so, $h'$ adds them in its hypothesis, otherwise it does not. This
    way, we prevent overgeneralizing target languages.
    Note that by construction $h'$ is $\tau(\Cind\Cons)$. Furthermore, note
    that, for any finite set $D$, we have
	\begin{align}
		C_{h'(D)} \subseteq C_{h(D)}. \label{Bed:Ccons}
	\end{align}
    We first show that $h'$ \(\Bc\)-learns any language \(L\in\La\). We
    distinguish the following cases.
	\begin{enumerate}
		\itemin{Case 1:} $L$ is finite. Since $h$ learns $L$, we have $\varphi_{h(L)} =
			      \chi_L$. Consider $h'(L)$. Now, for any element $x \in L$, we have
		      $\varphi_{h'(L)}(x)=1$, by definition.
		      For any element $x \notin L$, we have $\varphi_{h(L)}(x) = 0$ and, therefore,
		      $\varphi_{h'(L)}(x) = 0$ as well. Thus, $\varphi_{h'(L)} =\chi_{L}$ and
		      $h'$ learns $L$.
        \itemin{Case 2:} \(L\) is infinite. Let $D_0$ be a $\Bc_C$-locking set
            for $h$ on $L$. We show that, for any $D$ with $D_0 \subseteq D
            \subseteq L$, $h'(D)$ is a correct hypothesis for $L$. We need to
            show that $L = C_{h'(D)}$. Note that, by
            Condition~\eqref{Bed:Ccons}, $C_{h'(D)} \subseteq C_{h(D)} = L$.
            Thus, it remains to be shown that $L \subseteq C_{h'(D)}$. To that
            end, let \(x\in L\). If $x \in D$, then $x \in C_{h'(D)}$ by
            consistency. Otherwise, we have $\varphi_{h(D)}(x) = 1$ and, thus,
            are in the third case of the definition of $h'$. We show that $x$
            gets enumerated this way. By Condition~\eqref{Bed:Ccons}, we get
		    \begin{align*}
			    E(x,D) & = D\cup\set{x}\cup \{ x' \leq x \mid \varphi_{h'(D)}(x')=1 \}\\
			           & \subseteq D\cup\set{x}\cup 
                            \{ x' \leq x \mid \varphi_{h(D)}(x')=1 \} \subseteq L.
		    \end{align*}
              As $D_0$, and therefore also $D$, is a $\Bc_C$-locking set, we
              have for all $D''$ with $D \subseteq D'' \subseteq L$ that
		      \[
			      E(x,D) \subseteq L = C_{h(D'')}.
		      \]
		      So the third condition is met and, therefore, $\varphi_{h'(D)}(x)=1$.
		      Hence, $h'$ \(\Bc_C\)-learns $L$.
	\end{enumerate}
	Finally, it remains to be shown that $h'$ is target-cautious.
    To that end, assume that $h'$ is not target-cautious. Thus, there exists a
    language $L \in \La$ and a set $D \subseteq L$ such that
	\(
	L \subsetneq C_{h'(D)}.
	\)
    Let $\tilde{x} \in C_{h'(D)} \setminus L$ and let $D_0 \supseteq D$ be a
    $\Bc_C$-locking set for $L$ on $h$. Let $x' \coloneqq \max((D_0 \cup \{
    \tilde{x} \}) \setminus D$.
    As $x' \in C_{h'(D)}$ but not in $D$, it must be enumerated by the third
    condition of the definition of $h'$. Note that $D_0 \subseteq E(x',D)$ (as
    $D_0$ must be enumerated until $x'$). Now, for all $D''$ with $D \subseteq
    D'' \subseteq E(x',D)$, it must hold that
	\[
		x \in E(x', D_0) \subseteq C_{h(D'')}.
	\]
    However, this is a contradiction for $D'' = D_0$ as $C_{h(D_0)} = L$ but $x
    \notin L$. This concludes the proof.
\end{proof}

In a second step, we construct an explanatory learner from the target-cautious
behaviorally correct learner. The idea is to always mimic the $\Bc$-learner on
the $\leq$-minimal set on which it is consistent. This way, we obtain syntactic
convergence. On the other hand, the final hypothesis cannot be incorrect as,
eventually, the learner has enough information to figure out incorrect guesses
and, as it is target-cautious, these consistent conjectures are no
overgeneralizations. The following result holds.

\begin{theorem}\label{Thm:B}
	We have that ${[\totalCp\Txt\Sd\Ex\IndR]}_\Ind = {[\totalCp\Txt\Sd\Bc\IndR]}_\Ind$.
\end{theorem}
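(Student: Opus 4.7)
The easy direction $[\totalCp\Txt\Sd\Ex]_\Ind \subseteq [\totalCp\Txt\Sd\Bc]_\Ind$ is trivial since $\Ex$-convergence implies $\Bc$-convergence. For the other direction, my plan is to combine three tools: \cref{lem:sdbc-sdcauttarbc} (to strengthen the given $\Sd\Bc$-learner to be consistent and target-cautious), \cref{Thm:ind-fam} (to work with $C$-indices rather than a fixed hypothesis space), and a minimal-subset search strategy to force syntactic stability. Concretely, it suffices to show
\[
    [\tau(\Cind\Cons)\Txt\Sd\CautTar\Bc_C] \subseteq [\tau(\Cind)\Txt\Sd\Ex_C].
\]

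Given a $\tau(\Cind\Cons)\Txt\Sd\CautTar\Bc_C$-learner $h$ for $\La$, I define
\[
    h'(D) = h(D^*), \quad \text{where} \quad D^* = \min\nolimits_\leq\set{D' \subseteq D}{D \subseteq C_{h(D')}}.
\]
The min-search always terminates since $D$ itself is a candidate by consistency of $h$, and each condition $D \subseteq C_{h(D')}$ is decidable since $h$ is $\tau(\Cind)$. Thus $h'$ is a total $\Sd$-learner outputting $C$-indices. The next step is to verify $\Ex_C$-convergence on any $L \in \La$ with text $T \in \Txt(L)$: let $D^*_L$ be the $\leq$-minimal finite $D' \subseteq L$ with $C_{h(D')} = L$, which exists because a $\Bc_C$-locking set for $h$ on $L$ provides one witness. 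I claim $h'$ stabilizes to $h(D^*_L)$.

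The key convergence argument, where target-cautiousness is crucial, is this: for every $D' \subseteq L$ with $D' <_\leq D^*_L$, minimality gives $C_{h(D')} \neq L$, and target-cautiousness forbids $C_{h(D')} \supsetneq L$; hence $L \setminus C_{h(D')} \neq \emptyset$ and we may pick a witness $x_{D'} \in L \setminus C_{h(D')}$. There are only finitely many such $D'$, so for $n$ large enough $\content(T[n])$ contains $D^*_L$ together with all these witnesses $x_{D'}$. For such $n$, every $D' <_\leq D^*_L$ with $D' \subseteq \content(T[n])$ satisfies $D' \subseteq L$ and admits the witness $x_{D'} \in \content(T[n]) \setminus C_{h(D')}$, eliminating it as a candidate; meanwhile $D^*_L \subseteq \content(T[n]) \subseteq L = C_{h(D^*_L)}$ keeps $D^*_L$ a candidate. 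Hence the minimum stabilizes to $D^*_L$ and $h'(\content(T[n])) = h(D^*_L)$ syntactically, which is a correct $C$-index.

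The main obstacle, and the reason the construction works at all, is handling the interplay between incorrect-but-consistent hypotheses of small $D'$ and the growing input $D$. Without target-cautiousness, a small $D'$ could yield $C_{h(D')} \supsetneq L$ which stays consistent forever, blocking stabilization; target-cautiousness rules this out and ensures every wrong candidate is eventually exposed by a missing element of $L$, leaving only $D^*_L$ as the stable minimum. The rest of the verification (totality of $h'$, that it is $\tau(\Cind)$, and that it is set-driven) is immediate from the construction.
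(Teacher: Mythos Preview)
Your proof is correct and follows essentially the same approach as the paper: reduce via \cref{Thm:ind-fam} to the $C$-index setting, invoke \cref{lem:sdbc-sdcauttarbc} to assume the $\Sd\Bc$-learner is globally consistent and target-cautious, and then define $h'(D)=h(\min_\leq\{D'\subseteq D\mid D\subseteq C_{h(D')}\})$, which is precisely the paper's $h'(D)=h(\min_\leq M(D))$. Your convergence argument is in fact slightly more explicit than the paper's (you spell out why only finitely many witnesses are needed and why each $D'<_\leq D^*_L$ is eventually eliminated), but the underlying idea is identical.
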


\begin{proof}
    The inclusion ${[\totalCp\Txt\Sd\Ex\IndR]}_\Ind \subseteq
    {[\totalCp\Txt\Sd\Bc\IndR]}_\Ind$ is immediate. For the other, by
    Theorem~\ref{Thm:ind-fam}, it suffices to show that
	\[
		{[\tau(\Cind)\Txt\Sd\Bc_C]} \subseteq {[\tau(\Cind)\Txt\Sd\Ex_C]}.
	\]
	Let $h$ be a learner and let $\La\subseteq
        \tau(\Cind)\Txt\Sd\Bc_C(h)$. By Theorem~\ref{lem:sdbc-sdcauttarbc}, we
        may assume $h$ to be target-cautious. We provide a learner $h'$ which
        $\Ex_C$-learns $\La$. The main
    idea is to search for the first set on which the learner $h$ is consistent.
    By its target-cautiousness, this way we will, eventually, conjecture the
    right language.  For the formal details, fix a total order $\leq$ on finite
    sets. For any finite set $D$, we define the following auxiliary functions as
	\[
		M(D) =  \{D' \subseteq D \mid \forall x\in D \colon \varphi_{h(D')}(x) = 1 \}.
	\]
    Finally, for any finite set $D$, define $h'(D) = h(\min_\leq (M(D)))$. To
    show correctness, let $L \in \La$. We distinguish the following cases.
	\begin{enumerate}
        \itemin{1. Case:} $L$ is finite. Let $D' \subseteq L$ such that $h'(L) =
            h(D')$. Since $D' \in M(L)$, we have $L \subseteq C_{h(D')}$. Since
            $h$ is target-cautious, the equality holds, that is, $L =
            C_{h(D')}$. Thus, $h'(L) = h(D')$ is a correct hypothesis.
        \itemin{2. Case:} $L$ is infinite. Let $D_0$ be the $\leq$-minimal set
            such that $C_{h(D_0)} = L$. Let $D_1 \supseteq D_0$ such that
            $\min_\leq(M(D_1)) = D_0$. Such a set exists as $h$, due to the
            minimal choice of $D_0$, conjectures incorrect guesses on $D'
            \subseteq L$ with $D' < D$ which do not overgeneralize the target
            language. Then, for any $D$, with $D_1 \subseteq D \subseteq L$, we
            have $h'(D) = h(D_0)$, a correct conjecture. \qedhere
	\end{enumerate}
\end{proof}

Finally, behaviorally correct transductive learners, being unable to save any
information about previous guesses, can be made explanatory immediately. One
simply awaits a non-? guess and then checks for the first element in this guess
which also produces a non-?. We provide the theorem.

\begin{theorem}
	We have that ${[\totalCp\Txt\Td\Ex\IndR]}_\Ind = {[\totalCp\Txt\Td\Bc\IndR]}_\Ind$.
\end{theorem}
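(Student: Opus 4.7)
The plan is to reduce, via Theorem~\ref{Thm:ind-fam}, to the statement
\[
{[\tau(\Cind)\Txt\Td\Bc_C]} \subseteq {[\tau(\Cind)\Txt\Td\Ex_C]},
\]
and then, given a total $\tau(\Cind)\Txt\Td\Bc_C$-learner $h$ for an indexable class $\La$, define a total transductive $\Ex_C$-learner $h'$ that, on input $x$ with $h(x) \neq \mbox{?}$ and $e := h(x)$, outputs $h(y^\ast)$ where $y^\ast = \min\set{y \leq x}{\varphi_e(y) = 1 \wedge h(y) \neq \mbox{?}}$, falling back to $\mbox{?}$ whenever either $h(x) = \mbox{?}$ or no such $y^\ast$ exists. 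Totality and computability of $h'$ are immediate: $h$ is total, $\varphi_e$ is a characteristic function for every $C$-index $e$, and the search range is finite.

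First I would establish the preparatory observation that for every $L \in \La$ and every $x \in L$ with $h(x) \neq \mbox{?}$, $h(x)$ is already a correct $C$-index for $L$. This follows by considering a text $T \in \Txt(L)$ in which $x$ occurs infinitely often at arbitrarily late positions: since $\Td(h,T)(i+1) = h(x)$ whenever $T(i) = x$, $\Bc_C$-convergence forces $C_{h(x)} = L$. The same argument shows that some $y \in L$ must have $h(y) \neq \mbox{?}$ (otherwise the learning sequence $\Td(h,T)$ is constantly $\mbox{?}$, which is no correct hypothesis).

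Next I would set $y_L = \min\set{y \in L}{h(y) \neq \mbox{?}}$ and verify the central invariant: for every $x \in L$, $h'(x) \in \set{\mbox{?}, h(y_L)}$. If $x < y_L$, then no $y \leq x$ lies in $L$ with non-$\mbox{?}$ image under $h$, so the bounded search fails and $h'(x) = \mbox{?}$. If $x \geq y_L$ and $h(x) \neq \mbox{?}$, the preparatory observation gives $C_{h(x)} = L$, so the search evaluates to exactly $y_L$, and $h'(x) = h(y_L)$. On any $T \in \Txt(L)$, $y_L$ is eventually presented; at that step $\Td(h',T)$ takes the value $h(y_L)$, and from then on the sequence can only stay at $h(y_L)$ or be forced by a $\mbox{?}$-output of $h'$ to repeat its previous value, which is again $h(y_L)$. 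Thus $\Td(h',T)$ converges syntactically to the correct $C$-index $h(y_L)$.

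The main subtlety (rather than a serious obstacle) is calibrating the search bound so that $h'$ is total while still large enough to recover $y_L$ for every sufficiently large $x \in L$; bounding the search by the input $x$ accomplishes both at once and makes the case split $x < y_L$ versus $x \geq y_L$ fall out cleanly. The remainder of the argument is a direct unfolding of the definitions of $\Td$ and $\Ex_C$.
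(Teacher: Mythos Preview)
Your proposal is correct and follows essentially the same construction as the paper: both reduce via Theorem~\ref{Thm:ind-fam} to $C$-indices, observe that any non-$?$ output $h(x)$ on $x\in L$ must already be a $C$-index for $L$, and then let $h'$ output $h$ applied to the least $y\le x$ with $\varphi_{h(x)}(y)=1$ and $h(y)\neq\mbox{?}$. The only omission is that you do not say what $h'$ does on input $\#$ (the paper sets $h'(\#)=h(\#)$); this is easily patched and does not affect the argument.
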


\begin{proof}
	The inclusion ${[\totalCp\Txt\Td\Ex\IndR]}_\Ind \subseteq {[\totalCp\Txt\Td\Bc\IndR]}_\Ind$
	follows immediately. For the other, it suffices, by Theorem~\ref{Thm:ind-fam},
	to show that
	\[
		{[\tau(\Cind)\Txt\Td\Bc_C]}\IndF \subseteq {[\tau(\Cind)\Txt\Td\Ex_C]}\IndF.
	\]
	Let $h$ be a $\tau(\Cind)\Txt\Td\Bc_C$-learner and let $\La\subseteq
		\tau(\Cind)\Txt\Td\Bc_C(h)$. We define \(M\in\totalCp\) and \(h'\in\totalCp\)
	such that for all \(x\in\N, y\in\N_\#\),
	\begin{align*}
		M(x)  & = \set{ x' \leq x \mid \varphi_{h(x)}(x') = 1 \wedge h(x')
		    \neq \mbox{?}};\\
		h'(y) & = \begin{cases}
			h(\#),         & \falls y = \#;\\
			\mbox{?} ,     & \sonstfalls h(y) = \mbox{?};\\
			h(\min(M(y))), & \sonst.
		\end{cases}
	\end{align*}
	By construction, \(h'\) is only outputs \(C\)-indices (or \(\mbox{?}\)).
	Intuitively, \(h'\) outputs the hypothesis of \(h\) on the smallest element
	in the hypothesis \(h\) on the current datum (if it is not ``?'').
	We claim that $h'$ learns $\La$. Let $L \in \La$. First, note that for any
	$x\in L$, we have that either $h(x) = \mbox{?}$ or $h(x)$ is a $C$-index of
	$L$. If that were not the case, $h$ would not identify $L$ on any text which
	has infinitely many occurrences of $x$. Furthermore, for at least one $x \in
		L$, $h(x)$ must not be ``?''. Thus, there exists a minimal $x' \in L$, such
	that $h(x')$ is a characteristic index of $L$. The idea of this construction
	is to search for such minimal $x'$. Note that, if \(h(x)\neq\mbox{?}\),
	\(M(x)\neq\emptyset\) as \(x\in M(x)\).

	Let $T \in \Txt(L)$ and let $n_0\in\N$ be minimal such that $\content(T[n_0]) \neq
		\emptyset$ and such that \(h(T(n_0-1))\neq\mbox{?}\). Then, $h'(T(n_0-1)) =
		h(\min(M(T(n_0-1))))$, a correct guess. Furthermore,
	for $n > n_0$ either $h(T(n)) = \mbox{?}$ and with it $h'(T(n)) = \mbox{?}$ or,
	otherwise, $h'(T(n)) = h(\min(M(T(n)))) = h(\min(M(T(n_0-1))))$. Thus, we have
	$\Ex_C$-convergence.
\end{proof}

We remark that all these results, except for Lemma~\ref{lem:sdbc-sdcauttarbc}
and Theorem~\ref{Thm:B}, also hold for (possibly) partial learners. These are
obtained either directly, for example by applying
Theorem~\ref{thm:g-wlog-total}, or by slightly changing the provided proofs.
However, one cannot directly translate Lemma~\ref{lem:sdbc-sdcauttarbc}, and
therefore Theorem~\ref{Thm:B}, as, in the forward search, totality of the
learner is key. Otherwise, this search can be indefinite, breaking the
indexability. We conclude this work by posing the following open question.

\begin{problem}
Does ${[\Txt\Sd\Ex\IndR]}_\Ind = {[\Txt\Sd\Bc\IndR]}_\Ind$ hold?
\end{problem}

\bibliography{LTbib}

\end{document}